\documentclass[11pt]{article} 

\usepackage[utf8]{inputenc} 
\usepackage[T1]{fontenc}    
\usepackage{hyperref}       
\usepackage{url}            
\usepackage{booktabs}       
\usepackage{amsfonts}       
\usepackage{nicefrac}       
\usepackage{xcolor}         

\usepackage{wrapfig}

\usepackage{fullpage}
\usepackage{setspace}
\usepackage{cprotect}
\usepackage{amsmath,amssymb,amsthm}
\usepackage[noend]{algorithmic}
\usepackage[ruled,vlined]{algorithm2e}
\usepackage{makeidx}
\usepackage{enumerate}
\usepackage{graphicx,float,psfrag,epsfig}
\usepackage{epstopdf}
\usepackage{color}
\usepackage{enumitem}
\usepackage{caption}
\usepackage{bigints}
\usepackage{mathtools}
\usepackage[mathscr]{euscript}
\usepackage{bm}

\usepackage{float}
\usepackage{tikz}
\usetikzlibrary{shapes,arrows.meta,decorations.markings}
\usepackage[numbers,sort&compress]{natbib}
\usepackage{caption}
\captionsetup{font=small}

\usepackage{amsmath}
\usepackage{amssymb}
\usepackage{mathtools}
\usepackage{amsthm}
\usepackage{amsfonts}
\usepackage{soul}
\usepackage{csquotes}
\usepackage{amsmath}
\usepackage{amsthm}
\usepackage{amsfonts}
\usepackage{hyperref}
\usepackage{comment}
\usepackage{graphicx}
\usepackage{xcolor}
\usepackage{parskip}
\usepackage{hyperref}       
\usepackage{url}            
\usepackage{booktabs}       
\usepackage{amsfonts}       
\usepackage{nicefrac}       
\usepackage{microtype}      
\usepackage{xcolor}         
\usepackage{microtype}
\usepackage{graphicx}
\usepackage{subfigure}  
\usepackage{amsfonts}
\usepackage{amsmath}
\usepackage{amssymb}
\usepackage{bbm}
\usepackage{multirow}
\usepackage{mathtools}
\usepackage{relsize}

\newcommand{\eps}{\varepsilon}

\DeclareMathOperator{\Span}{span}
\DeclareMathOperator{\Rows}{rows}

\def\P{\mathbb{P}}

\newcommand{\opnorm}[1]{\left\lVert#1\right\rVert_{\textup{op}}}

\def\b0{{0}}

\def\>{\rangle}



\newcommand{\distas}[1]{\mathbin{\overset{#1}{\sim}}}

\newcommand{\norm}[1]{\left\|#1\right\|}
\newcommand{\subGnorm}[1]{\left\|#1\right\|_{\psi_2}}
\newcommand{\subEnorm}[1]{\left\|#1\right\|_{\psi_1}}

\newcommand{\evmin}[1]{\lambda_{\rm min}\left(#1\right)}

\def\min{\mathop{\rm min}\nolimits}
\def\max{\mathop{\rm max}\nolimits}

\def\ie{\textit{i.e. }}
\def\eg{\textit{e.g. }}


\usepackage{amsthm}
\makeatletter
\renewcommand\thm@space@setup{%
  \thm@preskip=4pt%
  \thm@postskip=4pt%
}
\makeatother

\newtheorem{claim}{Claim}[section]
\newtheorem{lemma}[claim]{Lemma}

\newtheorem{assumption}{Assumption}

\newtheorem{theorem}{Theorem}
\newtheorem{remark}{Remark}
\newtheorem*{theorem*}{Theorem}

\makeatletter
\DeclareRobustCommand\onedot{\futurelet\@let@token\@onedot}
\def\@onedot{\ifx\@let@token.\else.\null\fi\xspace}

\def\eg{\emph{e.g}\onedot} 
\def\ie{\emph{i.e}\onedot}

\def\iid{i.i.d\onedot} 

\makeatother

\usepackage{comment}


\usepackage{amsmath,amsfonts,bm}









\def\eqref#1{equation~\ref{#1}}









\def\1{\bm{1}}

\def\eps{{\epsilon}}










\DeclareMathAlphabet{\mathsfit}{\encodingdefault}{\sfdefault}{m}{sl}
\SetMathAlphabet{\mathsfit}{bold}{\encodingdefault}{\sfdefault}{bx}{n}











\newcommand{\E}{\mathbb{E}}

\newcommand{\R}{\mathbb{R}}



\DeclareMathOperator*{\argmax}{arg\,max}

\microtypesetup{nopatch=footnote}

\usepackage{hyperref}
\usepackage{url}

\title{A Law of Data Reconstruction \\ for Random Features (and Beyond)}

\author{Leonardo Iurada\thanks{Joint first authorship, equal contribution}\;\;\thanks{Politecnico di Torino, Italy. Emails: \texttt{\{leonardo.iurada, tatiana.tommasi\}@polito.it}}\;,\;\;Simone Bombari\footnotemark[1]\;\;\thanks{Institute of Science and Technology Austria (ISTA). Emails: \texttt{\{simone.bombari, marco.mondelli\}@ist.ac.at}}\;,\;\;Tatiana Tommasi\footnotemark[2]\;,\;\;Marco Mondelli\footnotemark[3]\;.}

\begin{document}

\maketitle

\begin{abstract} 
    Large-scale deep learning models are known to \emph{memorize} parts of the training set. In machine learning theory, memorization is often framed as interpolation or label fitting, and classical results show that this can be achieved when the number of parameters $p$ in the model is larger than the number of training samples $n$. In this work, we consider memorization from the perspective of \emph{data reconstruction}, demonstrating that this can be achieved when $p$ is larger than $dn$, where $d$ is the dimensionality of the data. More specifically, we show that, in the random features model, when $p \gg dn$, the subspace spanned by the training samples in feature space gives sufficient information to identify the individual samples in input space. Our analysis suggests an optimization method to reconstruct the dataset from the model parameters, and we demonstrate that this method performs well on various architectures (random features, two-layer fully-connected and deep residual networks). Our results reveal a \emph{law of data reconstruction}, according to which the entire training dataset can be recovered as $p$ exceeds the threshold $dn$.\footnote{Code available at: \url{https://github.com/iurada/data-reconstruction-law}}
\end{abstract}
\vspace{-.15em}
\section{Introduction}
\vspace{-.15em}

\begin{center}    
\emph{How many parameters does a neural network need to memorize the training data?}
\end{center}
\vspace{-.15em}

The answer to this question depends on what one means by \emph{memorization}, a term used with different purposes in the machine learning literature. Informally speaking, it captures the phenomenon of models storing the information of individual training samples, as opposed to learning the statistical patterns in the data.
Thus, on the one hand, it is used to describe the phenomenon of label fitting \citep{zhang2017understanding} or forms of leave-one-out output stability \citep{feldman2020, feldman2020b}. On the other hand, memorization is associated with \emph{reconstructing} parts of the \emph{training set} through knowledge of the model parameters \citep{carlini2023quantifying, schwarzschild2024rethinking, cooper2024files}. Notably, this reconstruction is possible in modern foundation models \citep{carlini2021, nasr2025scalable, cooper2025extracting}, with consequences in terms of privacy concerns and copyright infringement \citep{tramer2024position, cooper2024files}.

Understanding how many parameters a neural network needs to interpolate the dataset (or, equivalently, memorize the labels) is a classical problem \citep{cover1965geometrical}, with more modern literature showing that interpolation occurs as soon as the number of model parameters $p$ exceeds the number of training samples $n$ \citep{soltanolkotabi2018theoretical, montanari2022interpolation,bombari2022memorization}. An intuition for the phenomenon is that solving $n$ equations (given by the training samples) generally requires $p \ge n$ degrees of freedom (given by the model parameters). In contrast, for the problem of reconstructing the training dataset, the situation is less clear. Empirical work has observed that this task becomes easier as the model gets larger \citep{haim2022reconstructing, carlini2023quantifying}. However, theoretical work has taken different perspectives, focusing \eg on impossibility results for differentially private training \citep{balle2022reconstructing}, on reconstructing the data from the model gradients \citep{wang2023reconstructing} or on models with an infinite number of parameters \citep{loo2024understanding}. To the best of our knowledge, there is no theoretical result connecting feasibility of data reconstruction and model size.

\looseness-1To address this gap, we propose a \emph{law of data reconstruction}, giving a threshold for which reconstruction becomes possible. We consider \emph{random features} (RF) regression, where the model is $f_\text{RF}(x, \theta) = \varphi(x)^\top \theta$, see Eq.\ (\ref{eq:RFmodel}). Here, $x$ is the $d$-dimensional input, $\varphi(x)$ the corresponding $p$-dimensional feature vector, and $\theta$ the $p$-dimensional model parameter vector obtained by running gradient descent on the square loss over $n$ samples. Our paper gives theoretical and empirical evidence that: 
\vspace{-.15em}
\begin{center}
    \textit{\emph{all} the training data can be reconstructed when $p \gg dn$.}
\end{center}
An intuition for the phenomenon is that solving $d n$ equations (given by each dimension of each training sample) generally requires $p \ge d n$ degrees of freedom (given by the model parameters).
Our analysis also suggests an optimization algorithm to reconstruct the training dataset and, as a proof of concept, we discuss the results of the designed method on CIFAR-10. We train a family of RF models using the square loss on $n = 100$ training samples. The left panel of Figure \ref{fig:teaser} shows the training loss in black and the \emph{reconstruction error} in red (formally defined in Eq.\ (\ref{eq:recerror})), as functions of the number of parameters $p$.
As expected, the training error converges to 0 after the interpolation threshold $p = n$, where labels are memorized. In contrast, the reconstruction error drops only when $p$ is of order $dn$. In the right panel of Figure \ref{fig:teaser}, we display training images (odd rows) and reconstructed ones (even rows) for $p=10dn$, showing that the whole dataset is recovered successfully. Our contributions are summarized below:

\begin{figure}
    \centering
    \begin{minipage}[c]{0.38\linewidth}
        \includegraphics[width=\linewidth]{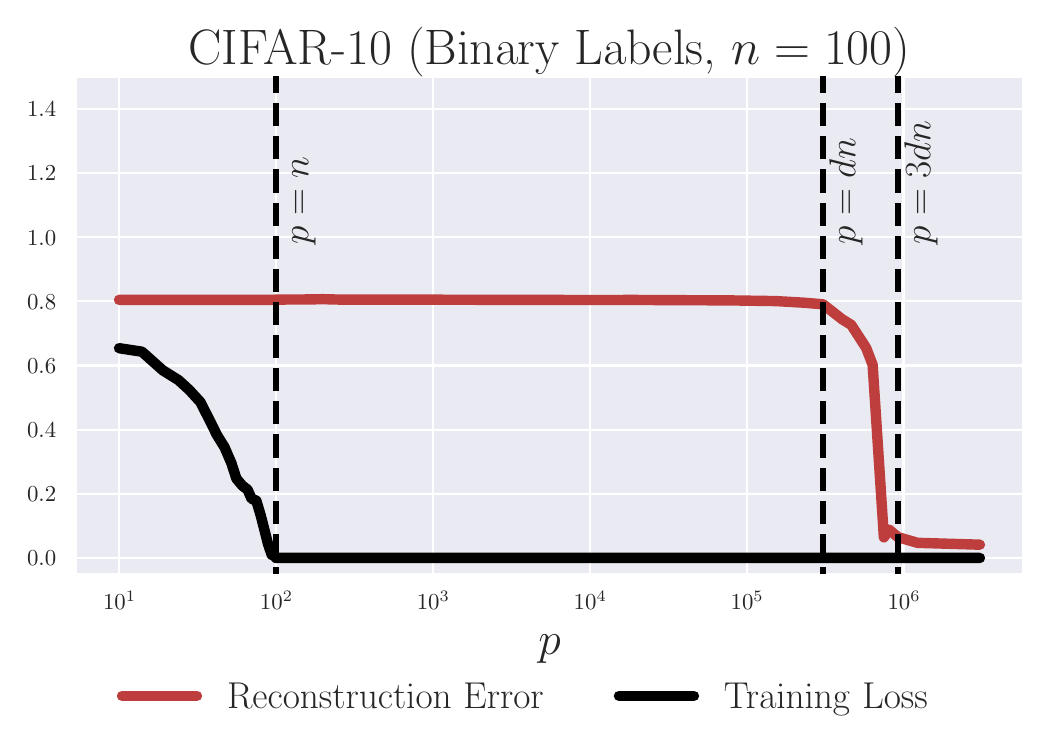}
    \end{minipage}
    \hfill
    \begin{minipage}[c]{0.60\linewidth}
        \vspace{-5mm}
        \includegraphics[width=\linewidth]{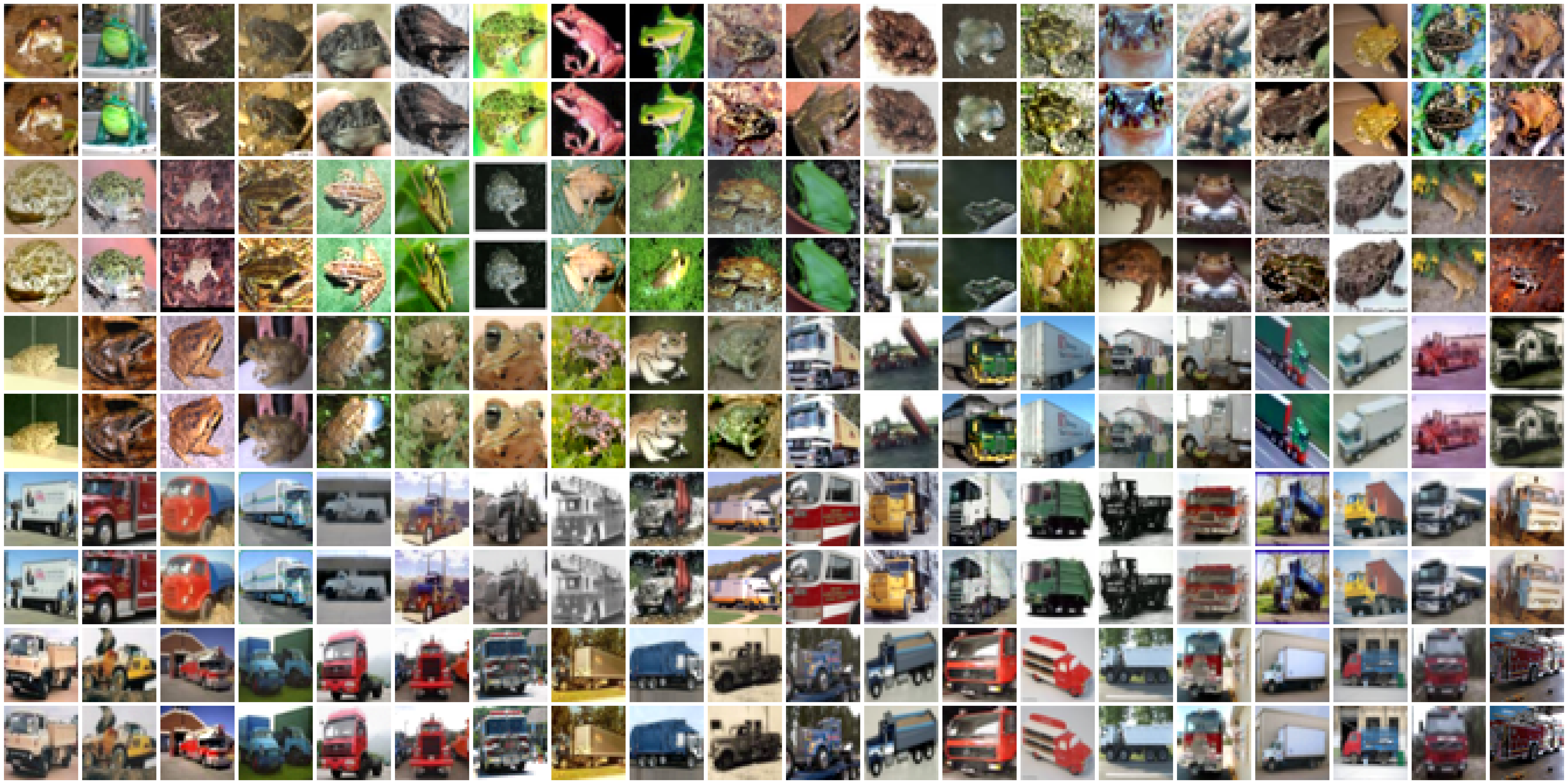}
    \end{minipage}
    
    \caption{
    \textbf{Thresholds for label fitting and data reconstruction in the random features model.} 
    (Left) We consider RF regression with ReLU activation on binary labels (\emph{frogs vs. trucks}) with $n=100$ images (50 examples per class) from CIFAR-10 ($d=3072$). We report the mean for both the reconstruction error (in red, defined in (\ref{eq:recerror})) and the training error (in black, mean squared error), as the number of parameters $p$ increases. Statistics are computed across 4 distinct random seeds, and the standard deviation across seeds is very small (the confidence interval at one standard deviation is reported in the plot as shaded area, but it is imperceptible). For $p \geq n$, training labels are memorized, while reconstruction is feasible when $p$ becomes larger than $dn$. 
    (Right) Results of the reconstruction when $p=10dn$. Odd rows report the ground truth images, while even rows the reconstructed ones which are all visually very similar.}
    \label{fig:teaser}
    \vspace{-4mm}
\end{figure}

\begin{itemize}[leftmargin=*]
    \item In Theorem \ref{thm:reconveronelargen}, we consider a set of $n$ points $\hat x_1, \ldots, \hat x_n \in \R^d$, and prove that when $p \gg dn$, if for every training sample $x_i$ it holds $\varphi(x_i) \in \Span \{ \{ \varphi(\hat x_j) \}_{j = 1}^n \}$, then all the $\hat x_j$-s \emph{must be close} to one of the original training data. 

    \item As the previous result does not exclude the possibility of duplicates within the $\hat x_j$-s, 
    in Theorem \ref{thm:reconveralln2} we prove that the $\hat x_j$-s \emph{must be distinct}, focusing on the case $n = 2$ for simplicity. Taken together, Theorem \ref{thm:reconveronelargen} and \ref{thm:reconveralln2} show that, when $p \gg dn$, the entire training set can be reconstructed given knowledge of the subspace $\Span \{ \{ \varphi(x_i) \}_{i = 1}^n \}$. In fact, having access to $\Span \{ \{ \varphi(x_i) \}_{i = 1}^n \}$, one can then look for $\hat x_1, \ldots, \hat x_n$ s.t.\ $v \in \Span \{ \{ \varphi(\hat x_j) \}_{j = 1}^n \}$ for any $v \in \Span \{ \{ \varphi(x_j) \}_{j = 1}^n \}$.
    
    \item  In practice, we have access to the vector of trained parameters $\theta^*$, which is in $\Span \{ \{ \varphi(x_i) \}_{i = 1}^n \}$, see Eq.\ (\ref{eq:thetastar}). This motivates considering the \emph{reconstruction loss} $\|P^\perp_{\hat \Phi} \theta^*\|_2^2$, where $P_{\hat \Phi}$ is the projector on $\Span \{ \{ \varphi(\hat x_j) \}_{i = 1}^n \}$: if $\|P^\perp_{\hat \Phi} \theta^*\|_2^2=0$, then $\theta^* \in \Span \{ \{ \varphi(\hat x_j) \}_{j = 1}^n \}$. We empirically show that optimizing this loss over the $\hat x_j$-s via gradient descent leads to the reconstruction of the training dataset, when $p \gg dn$. Notably, this procedure for data reconstruction is not limited to the RF model, but it performs well also for two-layer and deep residual networks, see Figures \ref{fig:transition_rf_relu_2layermulti_resnet_cifar10}-\ref{fig:2layer_relu_gd_multiclass_lastlayerntk_cifar10_width=8192x15_n=100}.
\end{itemize}

We finally remark that the scaling $p \gg dn$ was previously considered in the context of adversarial robustness: prior work proved that the condition $p \gg dn$ is both necessary \citep{bubeck2021law, bubeck2021a} and, in some settings, sufficient \citep{bombari2023universal} for \emph{smooth} label interpolation. This suggests an inherent connection between the adversarial robustness of the model and the ability to reconstruct training data from knowledge of its parameters. 

\section{Related work}

\paragraph{Over-parameterization and memorization.} 
The problem of label memorization and how it relates to the number of parameters in a neural network dates back to seminal results \citep{cover1965geometrical, baum1988capabilities}. More recently, a line of work aimed at showing that gradient descent converges to 0 training loss (i.e., the model memorizes the labels) for networks with progressively smaller over-parameterization \citep{AllenZhuEtal2018, DuEtal2018_ICLR, DuEtal2019, OymakMahdi2019, QuynhMarco2020, nguyen2021proof, tightbounds, bombari2022memorization}, with \cite{bombari2022memorization} proving that $p \gg n$ parameters are sufficient to fit any set of labels in deep neural networks.
Under a separability assumption, it has also been shown that $n$ arbitrary training points can be perfectly classified by a neural network with $p \gg \sqrt{n}$ parameters \citep{vardi2022on}. This, in the context of regression, implies that $p \gg \log(1 / \epsilon) \sqrt{n}$ is sufficient to guarantee an error of at most $\epsilon$.
In contrast, the problem of input data memorization and reconstruction has been much less explored from a theoretical standpoint. \citet{balle2022reconstructing} proved impossibility results if the model is trained with differential privacy \citep{dwork2006}, and showed that in generalized linear models it is possible to reconstruct an individual data provided all other training samples are known. \citet{loo2024understanding} considered networks with an infinite number of parameters, \citet{smorodinsky2024provable} focused on partial uni-dimensional ($d = 1$) training set reconstruction, and \citet{wang2023reconstructing} assumed knowledge of model gradients, requiring $p \gg d^2$ parameters with $n$ at most of order $d^{1/4}$.
The regime $p \gg dn$ was heuristically identified by \cite{haim2022reconstructing} as enabling successful data reconstruction, while \citep{brown2021memorization, feldman2025trade} studied its role as a requirement for learning certain tasks, rather than for data reconstruction.

\paragraph{Data reconstruction.}
Recent studies have shown that training data can be extracted from generative models by carefully prompting their generation routines \citep{carlini2023extracting, zhang2023counterfactual, nasr2025scalable, cooper2025extracting}. More generally, data reconstruction is a broader task, not limited to generative models, that aims to recover the training set directly from the learned model's parameters \citep{cooper2024files}.
A possible approach is via model inversion attacks, which optimize over the input space to raise a class score, although this method generally recovers class prototypes rather than specific training samples \citep{fredrikson2015model, mahendran2015understanding}.
Another strategy relies on the implicit bias of gradient descent for homogeneous networks \citep{lyu2020gradient, ji2020directional}: \citet{haim2022reconstructing} proposed a reconstruction objective motivated by the observation that gradient descent converges to points satisfying the KKT conditions of a max-margin problem, and empirically showed that training samples on the classification margin can be recovered. This method was then extended by \citet{buzaglo2023reconstructing, oz2024reconstructing} to the multiclass setting with general losses, and to larger scale pretrained models.
\citet{loo2024understanding} adapted the idea to neural networks trained with the square loss in the linear (or NTK) regime \citep{JacotEtc2018, lee19wide}. This leads to an optimization method for data reconstruction which has similarities with ours (compare Eq.\ (7) in \citep{loo2024understanding} with Eq.\ (\ref{eq:reconstructionloss}) in this work).

\paragraph{Random features (RF) regression.}
The RF model \citep{rahimi2007random} can be regarded as a two-layer neural network with random first layer weights. Its popularity stems from its mathematical tractability and, in contrast with linear regression where the number of parameters equals the input dimension, it captures the statistical effects of over-parameterization, since the number of parameters $p$ can scale independently from $d$ and $n$. \cite{mei2022generalization} characterized the test loss of random features, showing that it displays a double descent curve \citep{Belkin2019}. Furthermore, the RF model has been used to understand a wide family of phenomena such as feature learning \citep{ba2022highdimensional, damian2022neural, moniri2023theory}, robustness under adversarial attacks \citep{dohmatob2022non, bombari2023universal, hassani2024curse}, distribution shift \citep{tripuraneni2021overparameterization, lee2023demystifying, bombari2025spurious}, and scaling laws \citep{defilippis2024dimensionfree, paquette20244plus3}.

\section{Problem setup}\label{sec:setting}

\paragraph{Notation.} All complexity notations $\omega, \Omega, \Theta, O, o$ are understood for sufficiently large input dimension $d$ and number of parameters $p$. Given a positive number $k$, $[k]$ denotes the set of positive numbers from $1$ to $k$. Given a vector $v$, $\norm{v}_2$ denotes its Euclidean norm. Given a matrix $A \in \R^{m\times n}$, we denote by $P_A \in \R^{n \times n}$ the projector over $\Span \{ \Rows (A) \}$. We say that an event holds with overwhelming probability if it holds with probability at least $1 - e^{-\omega(\log d)}$.

Let $(X, Y)$ be a labeled training dataset, where $X=[x_1, \ldots, x_n]^\top \in \R^{n \times d}$ contains the training input data (sampled i.i.d.\ from a distribution $\mathcal P_X$) on its rows and $Y=(y_1, \ldots, y_n) \in \R^n$ contains the corresponding labels. We define the \emph{random features} (RF) model as
\begin{equation}\label{eq:RFmodel}
    f_\text{RF}(x, \theta) = \phi(Vx)^\top \theta = \varphi(x)^\top \theta,
\end{equation}
where $\varphi(x) \in \R^p$ is the feature vector associated to the input $x\in \R^d$, and $\theta \in \R^p$ are trainable parameters. The random features matrix $V \in \R^{p \times d}$ is s.t.\ $V_{i,j} \distas{}_{\rm i.i.d.}\mathcal{N}(0, 1 / d)$, and $\phi: \R \to \R$ is a non-linearity applied component-wise. We note that $f_\text{RF}(x, \theta)$ is a generalized linear model, and it can be regarded as a two-layer fully-connected neural network, where only the second layer is trained.
We consider the supervised learning setup where a quadratic training loss (without regularization) is minimized via gradient descent. When the learning rate is sufficiently small, gradient descent converges to the interpolator which is the closest in $\ell_2$ norm to the initialization (see Equation (33) in \citep{bartlett2021deep}), which we consider equal to 0 for simplicity.
Then, denoting with $\Phi := [\varphi(x_1), \ldots, \varphi(x_n)]^\top \in \R^{n \times p}$ the feature matrix, the trained parameters are
\begin{equation}\label{eq:thetastar}
    \theta^* = \Phi^+ Y,
\end{equation}
where $\Phi^+$ is the Moore-Penrose inverse of $\Phi$.

\begin{assumption}[Data distribution]\label{ass:data}
    The training samples $\{ x_1, \ldots, x_n \}$ are $n$ i.i.d.\ samples from the sub-Gaussian distribution $\mathcal P_{X}$, with $\subGnorm{x} = O(1)$, and such that $\norm{x}_2 = \sqrt d$.
\end{assumption}
This assumption requires the data to have well-behaved tails (see the definition of the sub-Gaussian norm $\subGnorm{\cdot}$ in Eq.\ (\ref{eq:orlicz}) in Appendix \ref{app:notation} for further details), and allows for badly conditioned distributions. This hypothesis is commonly used in the related literature, and it includes \eg Gaussian data, data respecting Lipschitz concentration \citep{bubeck2021a, bombari2023universal, berg2025the}, and data uniform on the sphere \citep{mei2022generalization, hu2024asymptotics}.
The normalization $\|x\|_2 = \sqrt d$ is chosen for technical convenience, and this scaling of the norm (combined with the scaling of the random features matrix $V$) guarantees that the pre-activations of the model (\ie, the entries of $Vx$) are of constant order.
\begin{assumption}[Activation function]\label{ass:activation}
    The activation function $\phi: \R \to \R$ is a non-linear, Lipschitz continuous function such that its derivative is also Lipschitz. Letting $\mu_l$ denote the $l$-th Hermite coefficient of $\phi$, we further assume that $\mu_0 = \mu_2 = 0$, $\mu_1 \neq 0$, and that there exist two non-zero Hermite coefficients of order $\geq 3$ with different parity.
\end{assumption}
This assumption is motivated by theoretical convenience, and we expect our results to hold for a wider set of activations (as in \citep{mmm2022}) after a more involved analysis. Except for the  last condition on the parity of high-order Hermite coefficients, Assumption \ref{ass:activation} resembles the setting considered by \cite{hu2022universality}, and it covers a wide family of odd activations, including $\tanh$. The parity of high-order Hermite coefficients is used to reconstruct the correct sign of the training data, and it appears to be necessary for that, see Remark \ref{rmk:sign} for details.

\begin{assumption}[Over-parameterization]\label{ass:scalings}
We consider a data dimensionality regime where $n = O(d)$, and an over-parameterized model with
\vspace{-3mm}
\begin{equation}\label{eq:overparameterization}
    p = \omega \left( nd \log^2 d \right).
\end{equation}
\end{assumption}
\vspace{-1mm}
To guarantee that the RF model interpolates the data, it suffices that $p \gg n$ \citep{mmm2022, wang2024deformed}. The stronger over-parameterization requirement in Eq.\ (\ref{eq:overparameterization}) was shown to be both necessary \citep{bubeck2021a} and, in some settings, sufficient \citep{bombari2023universal} to achieve \emph{smooth} interpolation.
We focus on the regime $n = O(d)$, which includes the popular proportional regime $n = \Theta(d)$ as well as the regime where $n$ is a fixed constant (independent of $d, p$). We expect our results to be generalizable also to $n\gg d$ \citep{mmm2022, hu2024asymptotics, pandit2024universality}.
\vspace{-2mm}
\section{Main results}\label{sec:main_results}

In this section, we present our main theoretical results, pinpointing $p\gg dn$ as the over-parameterization threshold such that data reconstruction can take place given knowledge of the subspace of the training samples in feature space. 

More formally, the goal of data reconstruction is to exhibit a matrix $\hat X \in \R^{n \times d}$ such that its rows $\hat x_j \in \R^d$ are close to the rows of the original training data matrix $X$. When $\norm{\hat x_j}_2=\norm{x_i}_2=\sqrt{d}$, a reconstructed sample $\hat x_j$ can be considered close to a training sample $x_i$ if $\norm{\hat x_j - x_i}_2 = o(\sqrt d)$. Let us also define $\hat \Phi = \left[ \varphi(\hat x_1), \ldots , \varphi(\hat x_{n}) \right] \in \R^{n \times p}$ as the feature matrix of the reconstructed data. Then, the result below gives sufficient conditions for all rows of $\hat X$ to be close to training samples.
\begin{theorem}\label{thm:reconveronelargen}
    Let Assumptions \ref{ass:data}, \ref{ass:activation}, and \ref{ass:scalings} hold. Let $\hat X \in \R^{n \times d}$ be such that its rows satisfy $\norm{\hat x_i}_2 = \sqrt d$, and for every $i \in [n]$, $\varphi(x_i) \in \Span \{ \Rows(\hat \Phi )\}$.
    Then, with overwhelming probability, for any $\hat \imath \in [n]$, there exists $i \in [n]$ such that
    \begin{equation}\label{eq:claimmain}
        \norm{\hat x_{\hat \imath} - x_i}_2 = o(\sqrt d).
    \end{equation}
\end{theorem}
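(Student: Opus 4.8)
The plan is to first convert the hypothesis into the statement that each feature vector $\varphi(\hat x_{\hat\imath})$ lies in the span of the \emph{training} features $\{\varphi(x_i)\}_{i=1}^n$, and then to prove a quantitative injectivity property of the random-features map: on the sphere of radius $\sqrt d$, any point far from all training samples is sent to a vector at macroscopic distance from $\Span\{\varphi(x_i)\}$. For the reduction I would show that, with overwhelming probability, $\varphi(x_1),\dots,\varphi(x_n)$ are linearly independent. By standard concentration of the random-features Gram matrix (matrix Bernstein, requiring only $p\gg n\,\mathrm{polylog}\,d$, comfortably within Assumption~\ref{ass:scalings}), $\tfrac1p\Phi\Phi^\top$ is $o(1)$-close in operator norm to the kernel matrix $K$ with $K_{ij}=\kappa(\langle x_i,x_j\rangle/d)$, $\kappa(\rho):=\sum_{\ell\ge0}\mu_\ell^2\rho^\ell$; since Assumption~\ref{ass:data} forces the off-diagonal entries $\langle x_i,x_j\rangle/d$ to be of order $\sqrt{\log d/d}$ and $\mu_0=\mu_2=0$, one gets $K=(\|\phi\|_2^2-\mu_1^2)I_n+\mu_1^2\,\tfrac1d XX^\top+o(1)$, hence $\lambda_{\min}(K)\ge\sum_{\ell\ge3}\mu_\ell^2>0$ by Assumption~\ref{ass:activation}, so $\dim\Span\{\varphi(x_i)\}=n$. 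Since $\Span\{\varphi(x_i)\}\subseteq\Span\{\Rows(\hat\Phi)\}$ and the right-hand side is spanned by $n$ vectors, the two subspaces coincide, so $P^\perp_\Phi\varphi(\hat x_{\hat\imath})=0$ for every $\hat\imath$, where $P^\perp_\Phi$ is the projection onto the orthogonal complement of $\Span\{\varphi(x_i)\}$ in $\R^p$.

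It then suffices to prove that, with overwhelming probability, there is a deterministic sequence $\epsilon_d\to0$ such that $\tfrac1p\|P^\perp_\Phi\varphi(\hat y)\|_2^2>0$ (with room to spare over the error terms below) for every $\hat y$ with $\|\hat y\|_2=\sqrt d$ and $\min_{i\in[n]}\|\hat y-x_i\|_2\ge\epsilon_d\sqrt d$: applying this to $\hat y=\hat x_{\hat\imath}$ and using the vanishing of $P^\perp_\Phi\varphi(\hat x_{\hat\imath})$ forces $\min_i\|\hat x_{\hat\imath}-x_i\|_2<\epsilon_d\sqrt d=o(\sqrt d)$, which is \eqref{eq:claimmain}. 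Writing
\[
\tfrac1p\|P^\perp_\Phi\varphi(\hat y)\|_2^2=\tfrac1p\|\varphi(\hat y)\|_2^2-\Bigl(\tfrac1p\Phi\varphi(\hat y)\Bigr)^{\!\top}\Bigl(\tfrac1p\Phi\Phi^\top\Bigr)^{-1}\Bigl(\tfrac1p\Phi\varphi(\hat y)\Bigr),
\]
I would control the three pieces uniformly over the sphere by an $\eta$-net argument with $\eta=\sqrt d/\mathrm{poly}(d)$ (net cardinality $e^{O(d\log d)}$): at a fixed net point each piece is an average of $p$ i.i.d.\ sub-exponential quantities concentrating around $\|\phi\|_2^2$, $k(\hat y):=(\kappa(\langle x_i,\hat y\rangle/d))_{i\in[n]}$, and $K$, with deviations of failure probability $e^{-\Omega(pt^2)}$; since $p\gg nd\log^2 d$ exceeds the $d\log d$ needed for the union bound by a divergent factor, a deviation $t=o(1)$ is affordable, and the transfer to arbitrary $\hat y$ uses that $\varphi$ is $O(\sqrt{p/d})$-Lipschitz because $\|V\|_{\mathrm{op}}=O(\sqrt{p/d})$ with overwhelming probability. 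Consequently, uniformly over such $\hat y$,
\[
\tfrac1p\|P^\perp_\Phi\varphi(\hat y)\|_2^2=\|\phi\|_2^2-k(\hat y)^\top K^{-1}k(\hat y)+o(1),
\]
and the right-hand side, up to $o(1)$, is the Schur complement $\min_{c\in\R^n}\bigl\|\phi(\xi_0)-\sum_{i}c_i\phi(\xi_i)\bigr\|_{L^2}^2$, where $(\xi_0,\xi_1,\dots,\xi_n)$ is a centered Gaussian vector whose covariance is the Gram matrix of the unit vectors $u_0:=\hat y/\sqrt d$, $u_i:=x_i/\sqrt d$.

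The crux is to bound this minimum below by a positive constant whenever $\hat y$ is $\epsilon_d$-far from all $x_i$. Expanding $\phi=\sum_\ell\mu_\ell H_\ell$ in Hermite polynomials and using $\E[H_\ell(\xi_a)H_m(\xi_b)]=\delta_{\ell m}\langle u_a,u_b\rangle^\ell$, the squared residual decouples across degrees, so the minimum is $\ge\min_c$ of the partial sum over degrees $1$, $\ell_0$, $m_0$, where $\ell_0$ is odd and $m_0$ even, both $\ge3$ and with nonzero Hermite coefficient by Assumption~\ref{ass:activation}. For degrees $\ge3$ the off-diagonal Gram powers $\langle u_i,u_j\rangle^\ell$ sum to $o(1)$ in operator norm (here $n=O(d)$ is used), so up to $o(1)$ this partial sum equals $s-2\langle b,c\rangle+c^\top Q c$ with $s=\mu_1^2+\mu_{\ell_0}^2+\mu_{m_0}^2$, $Q=\mu_1^2\Sigma_\xi+(\mu_{\ell_0}^2+\mu_{m_0}^2)I$ (where $\Sigma_\xi$ is the Gram of the $u_i$), and $b=(q(\langle u_0,u_i\rangle))_{i}$ for the polynomial $q(t):=\mu_1^2 t+\mu_{\ell_0}^2 t^{\ell_0}+\mu_{m_0}^2 t^{m_0}$; minimizing over $c$ gives $s-b^\top Q^{-1}b$, and the quadratic term also pins $\|c^\star\|_2=O(1)$. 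The key structural fact is that $|q(t)|\le s|t|$ on $[-1,1]$ with \emph{strict} inequality at $t=-1$ — i.e.\ $|q(-1)|<q(1)=s$ — precisely because $q$ contains a nonzero even power $m_0$ (if $q$ were odd, $q(-1)=-s$); this is what rules out the sign-flipped near-miss $\hat y\approx-x_{i_0}$, and is the role of the parity assumption (cf.\ Remark~\ref{rmk:sign}). A case analysis on the largest correlation $\rho:=\langle u_0,u_{i_0}\rangle$ (in absolute value) then finishes: if $|\rho|$ is small the high odd-degree Schur complement $\mu_{\ell_0}^2(1-\sum_i\langle u_0,u_i\rangle^{2\ell_0})$ is bounded below; if $\rho$ is bounded away from both $0$ and $1$, the combination $s-b^\top Q^{-1}b$ is bounded below using $|q(\rho)|<s|\rho|$ (and, at $\rho$ near $-1$, the strict gap at $t=-1$); and $\rho$ near $+1$ is excluded by the $\epsilon_d$-separation from $x_{i_0}$. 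Choosing $\epsilon_d\to0$ slowly enough that the resulting constant dominates the $o(1)$ from the previous step closes the proof.

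I expect the last step to be the real obstacle: turning the case analysis into a uniform, data-independent lower bound while carrying the $O(\sqrt{n/d})$-sized non-orthogonality of the training data through the degree-$1$ block of $Q$ and the bound on $\|b\|_2$ — in particular controlling $\sum_i\langle u_0,u_i\rangle^{2\ell_0}$ away from $1$ when no single correlation is close to $1$, which ultimately rests on the delocalization of the top eigenvectors of $\tfrac1d XX^\top$ in the proportional regime (in the regime $n=o(d)$ these corrections are already $o(1)$ and the argument is clean). The net-and-concentration estimate above is comparatively routine; the $\log^2 d$ slack in \eqref{eq:overparameterization} is there to absorb the sub-exponential (rather than sub-Gaussian) tails and the $e^{O(d\log d)}$ net size, and since the theorem only claims $o(\sqrt d)$, the rate of $\epsilon_d\to0$ can be chosen as slowly as needed, so balancing it against the net error is not delicate.
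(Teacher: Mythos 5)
Your framing is a genuinely different route from the paper's, and — modulo one step that you yourself flag as the likely obstacle — it works. The paper writes $\varphi(\hat x)=\sum_i a_i\varphi(x_i)$ and controls the coefficient vector $a$ directly: Lemma~\ref{lemma:ai} pins $\tilde\varphi(x_i)^\top\tilde\varphi(\hat x)\approx\tilde\mu^2 p\,a_i$, Lemma~\ref{lemma:norma} pins $\|a\|_2\approx1$, and the main proof combines these with a Cauchy--Schwarz step and a Hadamard-power bound to force $\tilde\mu^2(1-C^2)=o(1)$. You instead pass to the orthogonal residual $\tfrac1p\|P_\Phi^\perp\varphi(\hat y)\|_2^2$, show uniform concentration to the kernel Schur complement $\|\phi\|_2^2-k(\hat y)^\top K^{-1}k(\hat y)$, and then decouple across Hermite degrees, extracting a polynomial inequality $|q(t)|\le s|t|$ with strict gap at $t=-1$. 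This is an elegant reorganization that makes the mechanism transparent: the residual is a kernel-interpolation error that can only vanish if $\hat y$ near-coincides with a training point, and the parity assumption on $\phi$ enters exactly because the mixed-parity powers force $|q(-1)|<q(1)$. Both routes use the same $\epsilon$-net machinery (Lemma~\ref{lemma:facts}-style) and both use $p\gg dn\log^2d$ in the same way, so the conceptual difference is the coefficient-view versus the residual-view, and the two minimization formulas $s-b^\top Q^{-1}b$ and the paper's $\tilde\mu^2(1-C^2)$ are essentially Schur complements of the same quadratic form written in different bases.

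The step you flag as the obstacle is indeed where the paper does nontrivial work, and your proposed way to close it is not the paper's. You need, uniformly over $\hat y\in\sqrt d\,\mathbb S^{d-1}$, a bound like $\sum_i\langle u_0,u_i\rangle^{2\ell}\le C^{2(\ell-1)}+o(1)$ (with leading coefficient $1$, not just $O(C^{2(\ell-1)})$) where $C=\max_i|\langle u_0,u_i\rangle|$; the naive bound $\sum_i\rho_i^{2\ell}\le C^{2(\ell-1)}\|X/\sqrt d\|_\op^2$ loses a constant factor in the proportional regime $n=\Theta(d)$ and is not enough. This is precisely the paper's Lemma~\ref{lemma:C}. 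It is proved not via delocalization of top eigenvectors of $\tfrac1d XX^\top$ (as you suggest) but by a combinatorial device: threshold at $\delta=d^{-0.3}$, observe that at most $O(\delta^{-2})$ indices can exceed the threshold, union-bound over all size-$m$ subsets, and control $\|X_\delta\|_\op$ on each subset via \citep{plan2025}; the remaining indices contribute $n\delta^{2\ell}\le n\delta^4=o(1)$. This elementary argument works uniformly over $\hat y$ and avoids eigenvector analysis entirely. A final simplification worth noting: you do not need the degree-$1$ block at all. Restricting the Schur complement to the degrees $\{\ell_0,m_0\}$, the Gram-power matrix $K_{\mathrm{high}}$ is $s_{\mathrm{high}}I+o(1)$ because its off-diagonals are $O((\log d/d)^{3/2})$ entrywise, so the reduced Schur complement is $s_{\mathrm{high}}-\|k_{\mathrm{high}}\|_2^2/s_{\mathrm{high}}+o(1)$ and the polynomial inequality together with Lemma~\ref{lemma:C} closes the case analysis without carrying the $\mu_1^2\Sigma_\xi$ term anywhere — which removes the non-orthogonality headache you were worried about.
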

In words, Theorem \ref{thm:reconveronelargen} states that, if the random features of the training samples are spanned by the random features of a matrix $\hat X$, the rows of $\hat X$ \emph{must} be close (in input space) to the original training samples. Geometrically, this result proves that, when $p \gg dn$, in order to span the subspace generated by the training data features, one has to consider approximately the same vectors, as there is no solution to this problem obtained by non-degenerate linear combinations. 
In contrast with \citep{loo2024understanding} which tackles the case $p \to \infty$, our main contribution is to pinpoint the threshold $p\gg dn$ constituting a sufficient amount of over-parameterization to reconstruct the data. We also highlight that, for the claim of Theorem \ref{thm:reconveronelargen} to hold, assumptions on the activation are necessary: if $\phi$ is linear, picking $\hat x_i=\sqrt{d}(x_i+x_{i+1})/\|x_i+x_{i+1}\|_2$ ($i\in [n-1]$) and $\hat x_n=\sqrt{d}(x_n-x_{1})/\|x_n-x_{1}\|_2$ gives a counterexample to (\ref{eq:claimmain}). The proof of Theorem \ref{thm:reconveronelargen} is deferred to  Appendix \ref{app:thm1}, and a sketch is below.

\emph{Proof sketch.} Prior results on the RF kernel concentration \citep{mmm2022, wang2024deformed} guarantee that, for $p\gg n$, the smallest eigenvalue $\lambda_{\min}(\Phi \Phi^\top)$ is bounded away from 0. 
Thus, the rows of $\Phi$ are linearly independent (they span a sub-space of dimension \emph{exactly} $n$). Then, as the $n$ row vectors of $\hat \Phi$ span all rows of $\Phi$ (by hypothesis),  $\Span \{ \Rows( \Phi )\} = \Span \{ \Rows( \hat \Phi )\}$.
This in turn gives that, if $\hat x \in \R^d$ is a generic row of $\hat X$, then $\varphi(\hat x) \in \Span \{ \Rows( \Phi )\}$, i.e.,
\begin{equation}\label{eq:dec}
    \varphi(\hat x) = \sum_{i = 1}^n a_i \varphi(x_i).
\end{equation}
As argued above, the result cannot hold for linear activations, so let us focus on the non-linear component $\tilde \varphi(x_i) = \phi(V x_i) - \mu_1 V x_i$ in the Hermite basis. Taking the inner product of both sides of (\ref{eq:dec}) with $\tilde \varphi(x_i)$ and with $\tilde \varphi(\hat x)$ yields, with overwhelming probability, 
\begin{equation}\label{eq:pf-body}
    \left| \tilde \varphi(x_i)^\top \tilde \varphi(\hat x) - \tilde \mu^2 a_i \right| = \tilde O \left( \sqrt{\frac{dn}{p}} \right) + o(1), \qquad \left| \norm{a}_2^2 - 1 \right| = \tilde O \left( \sqrt{\frac{dn}{p}} \right) + o(1),
\end{equation}
where $a \in \R^n$ is defined as the vector containing $a_i$ in its $i$-th entry, and $\tilde \mu^2$ is the sum of the squares of the Hermite coefficients of $\phi$ of order at least $3$. The two results in Eq.\ (\ref{eq:pf-body}) are formalized in Lemmas \ref{lemma:ai}-\ref{lemma:norma}, and they rely on concentration arguments on the random features $V$, which have to hold \emph{uniformly} over any $\hat x \in \sqrt d \, \mathbb S^{d - 1}$. To obtain such uniform concentration, we rely on an $\eps$-net argument which crucially uses the condition $p \gg dn$.

Next, in Lemma \ref{lemma:C}, we show that, with overwhelming probability,
\begin{equation}\label{eq:pf-body2}
    \textup{if \, } C = \max_i \left| \frac{x_i^\top \hat x}{d} \right|, \qquad \textup{then \, }\norm{\frac{(X \hat x)^{\circ l}}{d^l}}_2 \leq C^{l - 1} + o(1),
\end{equation}
uniformly for every $l \geq 2$. Combining (\ref{eq:pf-body})-(\ref{eq:pf-body2}) gives that $|C - 1| = o(1)$, i.e., there exists a single training sample $x_j$ aligned with $\hat x$. To resolve the ambiguity in the sign, we use that there exist two non-zero Hermite coefficients of order $\geq 3$ with different parity, which concludes the argument.
\qed

\begin{remark}[Sign ambiguity]\label{rmk:sign}
The existence of two non-zero Hermite coefficients with different parity is a necessary condition to recover the sign of the training samples. In fact, if $\phi$ is either even or odd, the problem is under-determined in terms of the sign of the $\hat x_i$-s, as $\Span \{ \Rows(\hat \Phi )\}$ does not depend on them. Remarkably, this effect is also evident in numerical experiments optimizing the reconstruction loss defined in Eq.\ (\ref{eq:reconstructionloss}): Figure \ref{fig:rf_relu_cifar10_n=100_real} considers ReLU activation (which violates the last condition of Assumption \ref{ass:activation}, as its Hermite coefficients $\mu_{2l + 1} = 0$ for all $l > 1$) showing that negatives of training samples may be reconstructed.
\end{remark}

Theorem \ref{thm:reconveronelargen} guarantees that all the rows of $\hat X$ are close to the training samples. However, it may still happen that multiple rows of $\hat X$ are close to the same sample, leaving part of the training dataset not reconstructed. This gap is approached by our next result which focuses on the case $n = 2$.

\begin{theorem}\label{thm:reconveralln2}
    Let Assumptions \ref{ass:data}, \ref{ass:activation}, and \ref{ass:scalings} hold. Let $n=2$ and $\hat X \in \R^{2 \times d}$ be such that its rows satisfy $\norm{\hat x_i}_2 = \sqrt d$, and for every $i \in \{1, 2\}$, $\varphi(x_i) \in \Span \{ \Rows(\hat \Phi )\}$. Then, with overwhelming probability, for any $i \in \{1, 2\}$, there exists $\hat \imath \in \{1, 2\}$ such that
    \begin{equation}
        \norm{\hat x_{\hat \imath} - x_i}_2 = o(\sqrt d).
    \end{equation}
\end{theorem}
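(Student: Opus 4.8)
\emph{Proof plan.} The strategy is to combine Theorem~\ref{thm:reconveronelargen} with a pigeonhole step and then rule out the single degenerate configuration it leaves open; the restriction $n=2$ is what makes the relevant feature‑space geometry two‑dimensional and hence tractable. With overwhelming probability $\lambda_{\min}(\Phi\Phi^\top)$ is bounded away from $0$ (by the RF kernel concentration used in the proof of Theorem~\ref{thm:reconveronelargen}), so $\varphi(x_1),\varphi(x_2)$ are linearly independent; as $\Span\{\Rows(\hat\Phi)\}$ is at most two‑dimensional and contains both of them, it must equal $\Span\{\varphi(x_1),\varphi(x_2)\}$, whence $\varphi(\hat x_1),\varphi(\hat x_2)$ are linearly independent as well and each $\varphi(\hat x_{\hat\imath})$ lies in $\Span\{\varphi(x_1),\varphi(x_2)\}$. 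Theorem~\ref{thm:reconveronelargen} then gives that each $\hat x_{\hat\imath}$ is $o(\sqrt d)$‑close in $\ell_2$ to $x_1$ or to $x_2$. If the two rows are close to different training samples we are done, so assume for contradiction that both are close to $x_1$; write $\hat x_j=x_1+\delta_j$ with $\norm{\delta_j}_2=o(\sqrt d)$ and set $\delta:=\hat x_1-\hat x_2\neq 0$, so $\norm{\delta}_2=o(\sqrt d)$.

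The first substantive step is to show this forces $\varphi(x_2)$ to be almost parallel to $\varphi(\hat x_1)-\varphi(\hat x_2)$. As $x_1,x_2$ are independent, $\abs{x_1^\top x_2}/d=o(1)$, so by continuity of the RF kernel the unit vectors $\varphi(x_1)/\norm{\varphi(x_1)}_2$ and $\varphi(x_2)/\norm{\varphi(x_2)}_2$ are nearly orthogonal; meanwhile $\hat x_1,\hat x_2,x_1$ lie within $o(\sqrt d)$ of each other on $\sqrt d\,\mathbb S^{d-1}$, so $\varphi(\hat x_1),\varphi(\hat x_2),\varphi(x_1)$ have pairwise normalized inner products $1-o(1)$. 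Hence, inside the plane $\Span\{\varphi(\hat x_1),\varphi(\hat x_2)\}=\Span\{\varphi(x_1),\varphi(x_2)\}$, the unit vector $\varphi(x_2)/\norm{\varphi(x_2)}_2$ has $o(1)$ component along the $\varphi(x_1)$‑direction and therefore lies within $o(1)$ of $\pm(\varphi(\hat x_1)-\varphi(\hat x_2))/\norm{\varphi(\hat x_1)-\varphi(\hat x_2)}_2$. Equivalently $\varphi(x_2)=\lambda(\varphi(\hat x_1)-\varphi(\hat x_2))+r$ with $\norm{r}_2=o(\sqrt p)$ and, using $\norm{\varphi(\hat x_1)-\varphi(\hat x_2)}_2^2\asymp p\,\E[\phi'(g)^2]\,\norm{\delta}_2^2/d$ for these nearby points, $\abs{\lambda}=\Theta(\sqrt d/\norm{\delta}_2)=\omega(1)$.

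The second step linearizes and projects onto the column span $\mathcal V:=\{Vu:u\in\R^d\}\subseteq\R^p$ of $V$; let $Q,Q^\perp$ be the orthogonal projectors onto $\mathcal V$ and $\mathcal V^\perp$. Put $\tilde\phi(t):=\phi(t)-\mu_1 t$ (so $\tilde\varphi(x)=\tilde\phi(Vx)$ componentwise and $\varphi(x)=\mu_1 Vx+\tilde\varphi(x)$) and $\tilde\phi'(t):=\phi'(t)-\mu_1$. A componentwise Taylor expansion around $V\hat x_2$—valid because the entries of $V\delta$ are $o(1)$, and taken at $\hat x_2$ so that the remainder depends only on $\norm{\delta}_2$—gives $\tilde\varphi(\hat x_1)-\tilde\varphi(\hat x_2)=\diag(\tilde\phi'(V\hat x_2))V\delta+\tilde E$ with $\norm{\tilde E}_2$ negligible. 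The elementary fact that $\delta$ is nearly orthogonal to $\hat x_2$ (indeed $\delta^\top\hat x_2=-\tfrac12\norm{\delta}_2^2$) makes $\norm{Q\,\diag(\tilde\phi'(V\hat x_2))V\delta}_2=o(\sqrt p/\abs{\lambda})$, while $\norm{Q\,\tilde\varphi(x_2)}_2=O(\sqrt d)=o(\sqrt p)$ because $\tilde\varphi(x_2)$ carries no degree‑one Hermite content (Stein's lemma) and $\svmin{V}=\Theta(\sqrt{p/d})$. Rewriting $\varphi(x_2)=\lambda(\varphi(\hat x_1)-\varphi(\hat x_2))+r$ as $\mu_1 V(x_2-\lambda\delta)=\lambda(\tilde\varphi(\hat x_1)-\tilde\varphi(\hat x_2))-\tilde\varphi(x_2)+r$ and projecting onto $\mathcal V$ then yields $\norm{V(x_2-\lambda\delta)}_2=o(\sqrt p)$, hence $\norm{x_2-\lambda\delta}_2=o(\sqrt d)$; projecting onto $\mathcal V^\perp$ yields $\lambda\,\diag(\tilde\phi'(V\hat x_2))V\delta=\tilde\varphi(x_2)+(\text{term of norm }o(\sqrt p))$.

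Finally, substituting $\lambda\delta=x_2+(\text{error }o(\sqrt d))$ and replacing $\diag(\tilde\phi'(V\hat x_2))$ by $\diag(\tilde\phi'(Vx_1))$—licit since $\norm{\diag\!\big(\tilde\phi'(V\hat x_2)-\tilde\phi'(Vx_1)\big)Vx_2}_2=o(\sqrt p)$ by an $\ell_2$ bound exploiting $\hat x_2\approx x_1$—gives $\diag(\tilde\phi'(Vx_1))Vx_2=\tilde\varphi(x_2)+(\text{term of norm }o(\sqrt p))$. But a direct Hermite computation shows that, with overwhelming probability,
\begin{equation*}
\tfrac1p\big\|\tilde\varphi(x_2)-\diag(\tilde\phi'(Vx_1))Vx_2\big\|_2^2=\E\big[(\tilde\phi(g_2)-\tilde\phi'(g_1)g_2)^2\big]+o(1)=\sum_{l\ge 3}(l+1)\mu_l^2+o(1)>0
\end{equation*}
for independent $g_1,g_2\sim\mathcal N(0,1)$—using $\E[\tilde\phi(g)g]=\E[\tilde\phi'(g)]=0$ and that $\phi$ has a non‑zero Hermite coefficient of order $\ge 3$—which contradicts the $o(\sqrt p)$ bound and finishes the proof. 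I expect the main obstacle to be making all the concentration estimates above (the Taylor remainder, the $\mathcal V$‑projection bounds, the replacement of $\hat x_2$ by $x_1$, and the RF‑kernel continuity statements) hold \emph{uniformly} over the adversarial pair $(\hat x_1,\hat x_2)$; as in the proof of Theorem~\ref{thm:reconveronelargen} this requires $\eps$‑net arguments over $\sqrt d\,\mathbb S^{d-1}$ that exploit $p\gg dn$, and the sub‑regime where $\hat x_1,\hat x_2$ are much closer to each other than to $x_1$ (so $\norm{\delta}_2\ll\norm{\delta_j}_2$ and $\abs{\lambda}$ is only mildly large) is the delicate one, which is precisely why the Taylor base point must be taken among the $\hat x_j$ rather than at $x_1$.
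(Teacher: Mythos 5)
Your proof takes a genuinely different route from the paper's, and it is sound at the strategic level. The paper's argument never introduces your scalar $\lambda$ or the column–span projectors $Q,Q^\perp$; instead it writes $\varphi(x_2)=a_1\varphi(\hat x_1)+a_2\varphi(\hat x_2)$, pairs this identity successively with $Vx_1$, with $V(\eps_2-\eps_1)$, and with $\tilde\varphi(x_2)$, and contradicts the resulting bounds with $\tilde\varphi(x_2)^\top\varphi(x_2)=\Theta(p)$ (Lemmas~\ref{lemma:a1vsa2}--\ref{lemma:isserlis}). You instead (i) extract a leading coefficient $\lambda$ via a $2$-D geometric argument in $\Span\{\varphi(x_1),\varphi(x_2)\}$, (ii) split $\varphi=\mu_1V(\cdot)+\tilde\varphi(\cdot)$ and project onto $\Cols(V)$ and its complement to decouple the linear from the higher Hermite content, (iii) extract the alignment $\|x_2-\lambda\delta\|_2=o(\sqrt d)$ as an explicit intermediate structural fact, and (iv) close with a direct variance computation $\E[(\tilde\phi(g_2)-\tilde\phi'(g_1)g_2)^2]=\sum_{l\ge3}(l+1)\mu_l^2>0$, which is clean and correct. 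What the paper's route buys is that it never needs $|a_1+a_2|=o(1)$; it is content with $|a_1+a_2|=O(1)$ (Lemma~\ref{lemma:a1a2}), and the contradiction still goes through because it is phrased at the $o(p)$ vs.\ $\Theta(p)$ level. What your route buys is the transparent geometric picture (difference direction in feature space $\leftrightarrow$ the missing training point) and a contradiction that localizes the failure mode to a single explicit Hermite inequality.

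The one place where your argument has a real load-bearing claim that the paper does not prove, and that your sketch does not fully justify, is $\|r\|_2=o(\sqrt p)$, i.e.\ $|a_1+a_2|=o(1)$. Your ``therefore lies within $o(1)$ of $\pm(\varphi(\hat x_1)-\varphi(\hat x_2))/\|\cdots\|_2$'' step implicitly requires $\big|(\varphi(\hat x_1)-\varphi(\hat x_2))^\top\varphi(x_1)\big| = o\big(\|\varphi(\hat x_1)-\varphi(\hat x_2)\|_2\,\|\varphi(x_1)\|_2\big)$ uniformly over $(\hat x_1,\hat x_2)$. Checking this, the deterministic part reduces to $\delta^\top x_1=(\|\delta_2\|_2^2-\|\delta_1\|_2^2)/2$, which is $O(\|\delta\|_2(\|\delta_1\|_2+\|\delta_2\|_2))$ and therefore does normalize to $O(\max_j\|\delta_j\|_2/\sqrt d)=o(1)$; the fluctuation part normalizes to $O(\sqrt{d\log d/p})=o(1)$ under Assumption~\ref{ass:scalings}. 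So the claim appears true, but it is an upgrade over the paper's bound and it is exactly in the sub-regime $\|\delta\|_2\ll\max_j\|\delta_j\|_2$ that the cancellation matters (the naive $O(\max_j|a_j|\cdot\max_j\beta_j^2)$ estimate is not $o(1)$ there; the cancellation $a_1'\approx -a_2'$ must be used). You flag the uniform-concentration work at the end, which is appropriate; but I would stress that this particular estimate is the step where your route diverges from what the paper actually establishes and where the main technical effort would concentrate. Once $\|r\|_2=o(\sqrt p)$ is in hand, the remaining steps (the $Q$-bounds on $\tilde\varphi(x_2)$ and on the Taylor remainder, the replacement of $V\hat x_2$ by $Vx_1$ in the diagonal, the final variance computation) all check out at the level of orders.
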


In words, Theorem \ref{thm:reconveralln2} shows that \emph{all} training samples are in fact reconstructed, ruling out the possibility of $\hat X$ containing repetitions. 
The proof is deferred to Appendix \ref{app:thm2} and a sketch is below.

\emph{Proof sketch.} We approach the problem by contradiction, supposing the existence of two vectors $\eps_1, \eps_2 \in \R^d$ such that
\begin{equation*}
    \|\eps_1\|_2, \| \eps_2 \|_2 = o(\sqrt d), \qquad \varphi(x_2) = a_1 \varphi(x_1 + \eps_1) + a_2 \varphi(x_1 + \eps_2), 
\end{equation*}
for some real values $a_1$ and $a_2$.
A Taylor expansion of $\varphi(x_1 + \eps_2)$ around $x_1 + \eps_1$ gives
\begin{equation}\label{eq:pf-body3}
\begin{aligned}
    \varphi(x_2) &= (a_1 + a_2) \varphi(x_1 + \eps_1) \\
    & \qquad + a_2 \, \phi' (V (x_1 + \eps_1)) \circ \left( V  (\eps_2 - \eps_1) \right) \\
    & \qquad + a_2 \left( \left( \varphi(x_1 + \eps_2) - \varphi(x_1 + \eps_1) \right) - \phi' (V (x_1 + \eps_1)) \circ \left( V  (\eps_2 - \eps_1) \right) \right),
\end{aligned}
\end{equation}
where $\circ$ denotes the component-wise product of two vectors.
First, we take the inner product of both sides of Eq.\ (\ref{eq:pf-body3}) with $V x_1$, which yields
\begin{equation*}
    \left| a_1 + a_2 \right| = O \left( \frac{|a_2| \norm{\eps_2 - \eps_1}_2}{\sqrt d} \right) + o(1),
\end{equation*}
with overwhelming probability (see Lemma \ref{lemma:a1vsa2}). Then, we take the inner product with $V (\eps_2 - \eps_1)$, and show that the first and third term of the RHS of Eq.\ (\ref{eq:pf-body3}) are negligible with respect to the second one (see Lemmas \ref{lemma:secondorder} and \ref{lemma:epsalignement}). An upper bound on the LHS of Eq.\ (\ref{eq:pf-body3}) based on Cauchy-Schwartz inequality is then enough to show in Lemma \ref{lemma:a1a2} that
\begin{equation*}
    | a_2 | = O \left( \frac{\sqrt d}{\norm{\eps_2 - \eps_1}_2} \right), \qquad \left| a_1 + a_2 \right| = O(1),
\end{equation*}
with overwhelming probability.
The argument in Lemma \ref{lemma:secondorder} relies on a concentration result on the sum of independent random variables with sub-exponential norm much larger than their standard deviation (see Lemma \ref{lemma:subexp2tails}). This is obtained via a Bernstein-type bound combined with an $\eps$-net argument as, similarly to Theorem \ref{thm:reconveronelargen}, we need a \emph{uniform} control over any choice of $\eps_1, \eps_2$.
Finally, by taking the inner product of the two sides of Eq.\ (\ref{eq:pf-body3}) with $\tilde \varphi(x_2)$, we obtain
\begin{equation}\label{eq:triangleeq}
    \tilde \varphi(x_2)^\top \varphi(x_2) \leq \left| (a_2 + a_2) \tilde \varphi(x_2)^\top  \varphi(x_1 + \eps_1) \right| + \left| a_2 \tilde \varphi(x_2)^\top \left( \varphi(x_1 + \eps_2)  - \varphi(x_1 + \eps_1) \right) \right|.
\end{equation}
Now, the LHS of Eq.\ (\ref{eq:triangleeq}) is $\Theta(p)$ since $\phi$ is non-linear; the first term in the RHS of Eq.\ (\ref{eq:triangleeq}) is $o(p)$ as $x_1$ and $x_2$ are roughly orthogonal with overwhelming probability; and the last term in the RHS of Eq.\ (\ref{eq:triangleeq}) is also $o(p)$ via generalized Stein's lemma (see Lemma \ref{lemma:isserlis}). This gives the desired contradiction. \qed
\begin{remark}[Technical challenge for $n \geq 3$]
    We note that, already when $n=3$, the presence of duplicates is either given by the ``triplet'' $\hat x_1, \hat x_2, \hat x_3$ all similar to each other, or by a pair of duplicates $\hat x_1, \hat x_2$, with a different $\hat x_3$. The constructive approach used in the proof of Theorem \ref{thm:reconveralln2} would require us to consider the two cases separately, and the amount of cases increases combinatorially with $n$. Nevertheless, we suspect the idea that the span of duplicate samples cannot contain higher order terms of the left-out samples ($\tilde \varphi(x_2)$ in Eq.\ (\ref{eq:triangleeq})) to carry over to a general $n$, as long as $p \gg dn$.
\end{remark}

\begin{wrapfigure}{r}{0.46\textwidth}
    \centering
    \vspace{-1.2em}
    \includegraphics[width=\linewidth]{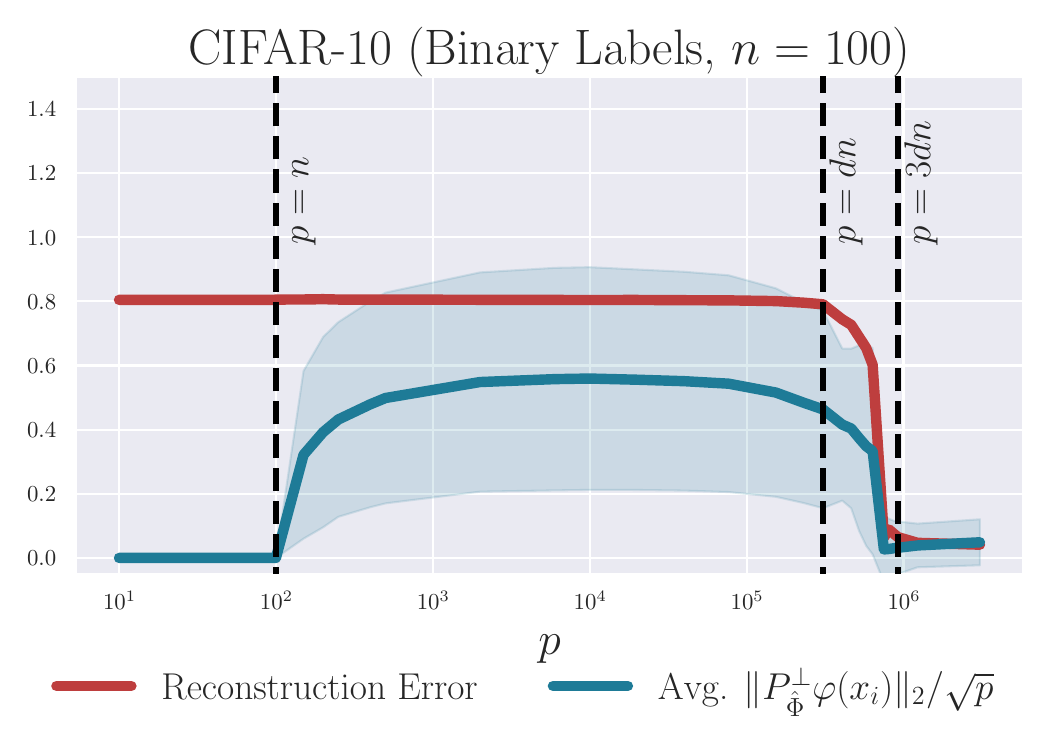}
    \vspace{-2em}
    \caption{\textbf{Features of the training dataset $\Phi$ are spanned by the features of the reconstructed dataset $\hat\Phi$.} We consider the same setup as in Figure \ref{fig:teaser}. For different values of $p$, we optimize until $\mathcal L(\hat X) = 0$, and report reconstruction error (in red, defined in Eq.\ (\ref{eq:recerror})) and normalized residual $\|P^\perp_{\hat \Phi} \varphi(x_i)\|_2$ averaged over $i \in [n]$ (in blue), with their confidence interval at one standard deviation (shaded area). Further details and evidence are in Appendix \ref{sec:add_exp_span}, see Figure \ref{fig:residuals_rfsynthd=100n=20_2layermulti_resnet_cifar10_n=10}.}
    \label{fig:checkspan_rf_relu_n=10_cifar10}
    \vspace{-1em}
\end{wrapfigure}

\paragraph{From the theory to a reconstruction algorithm.}
Our theoretical analysis shows that, under sufficient over-parameterization ($p\gg dn$), the matrix $\hat X$ successfully reconstructs the training dataset when $\|P^\perp_{\hat \Phi} \varphi(x_i)\|_2 = 0$ for every $i \in [n]$, where $P_{\hat \Phi}$ denotes the projector on $\Span \{ \Rows(\hat \Phi )\}$. 
In practice, we only have access to the trained model $\theta^*$, $V$ and the activation $\phi$. Recall $\theta^* = \Phi^+ Y\in \Span \{ \Rows( \Phi )\}$, so $\theta^*$ is a linear combination of $\{\varphi(x_i)\}_{i=1}^n$, which suggests to solve the problem:
\begin{equation}\label{eq:reconstructionloss}
    \hat X^* = \underset{\hat X \,:\, \|\hat x_i\|_2=\sqrt{d}}{\arg\min}\mathcal{L}(\hat X)~,  \quad \mathcal{L}(\hat X) = \left\|P^\perp_{\hat\Phi}\theta^*\right\|_2^2~.
\end{equation}
Importantly, enforcing that $\theta^*$ lies in the span of the reconstructed features ($\mathcal{L}(\hat X) = 0$) does not immediately imply that $\varphi(x_i) \in \Span \{ \Rows(\hat \Phi )\}$ for all $i$ (as the implication only goes in the other direction). In Figure \ref{fig:checkspan_rf_relu_n=10_cifar10}, we numerically minimize $\mathcal{L}(\hat X)$ and check whether the vectors $\varphi(x_i)$ approximately lie in the subspace $\Span \{ \Rows( \hat \Phi )\}$. To do so, we calculate the average per-feature orthogonal residual, \ie the average over $i \in [n]$ of $\|P_{\hat\Phi}^{\perp} \varphi(x_i)\|_2/\sqrt{p}$, where $P_{\hat\Phi}^{\perp}=I-P_{\hat\Phi}$ projects onto the orthogonal complement of $\Span\{\Rows(\hat\Phi)\}$. This quantity equals 0 if and only if every $\varphi(x_i)$ lies in $\Span\{\Rows(\hat\Phi)\}$. The normalization by $\sqrt{p}$ makes $r(\hat\Phi):=\sum_{i=1}^n \|P_{\hat\Phi}^{\perp} \varphi(x_i)\|_2/(n\sqrt{p})$ of order 1 so that values of $r(\hat\Phi)\ll 1$ indicate numerically negligible residuals (\ie, effective span inclusion).
In Figure \ref{fig:checkspan_rf_relu_n=10_cifar10}, the per-feature orthogonal residual (plotted in blue) remains large until the model crosses the threshold $p \approx dn$, after which it drops sharply (for $p \leq n$, the optimization converges to the non-degenerate case $P_{\hat\Phi} = I$, which makes the orthogonal residual trivially zero). Thus, this numerical evidence suggests that minimizing $\mathcal{L}(\hat X)$ is sufficient to satisfy the hypotheses of our main theorems for $p \gg dn$, and hence to successfully reconstruct the training data.
Equipped with these insights and a recipe for dataset reconstruction, we complement our theory with numerical results as discussed below.
\vspace{-2.5mm}
\section{Numerical experiments}\label{sec:exp}

We initialize the rows $\hat x_i$ of $\hat X$ with \iid standard Gaussian vectors. We then minimize $\mathcal{L}(\hat X)$ in (\ref{eq:reconstructionloss}) with gradient descent with momentum, normalizing each row $\hat x_i$ after the update to constrain it on the $d$-dimensional sphere $\sqrt{d}~\mathbb{S}^{d-1}$. In every experiment, we perform the optimization until the reconstruction loss converges to zero, \ie, $\mathcal{L}(\hat X^*) = 0$ (up to machine precision). To quantify reconstruction quality, we report the average $\ell_2$ distance between the rows of the ground truth and reconstructed data matrices, modulo a permutation on the rows of the latter, \ie,
\begin{equation}\label{eq:recerror}
    \rho(X, \hat{X}^*) = \min \limits_{\Pi \in \mathcal{P}_n} \frac{1}{n \sqrt{d}} \sum_{i=1}^n \left\| x_i - \hat x_{\Pi(i)}^* \right\|_2,
\end{equation}
where $\mathcal{P}_n$ denotes the set of permutations of $[n]$.
Obtaining $\Pi^*$ constitutes a classic linear assignment problem \citep{bertsekas1998network}, which we solve in polynomial time via the Hungarian method \citep{kuhn1955hungarian}. We normalize our success metric with $n \sqrt d$ so that a reconstruction can be considered successful as $\rho(X, \hat{X}^*)$ becomes much smaller than 1. As our main focus is on ReLU activations, whose odd Hermite coefficients (except $\mu_1$) are zero, recovered samples can appear with flipped sign (see Remark \ref{rmk:sign} and Figure \ref{fig:rf_relu_cifar10_n=100_real}). Accordingly, we also maximize $\rho(X, \hat{X}^*)$ over per-image sign flips.

In the following, we present numerical results by first considering a synthetic setup (\iid data uniformly distributed on the $d$-dimensional sphere), and then by reconstructing natural images from the CIFAR-10 dataset (Section \ref{subsec:RF}). Finally, we explore the extent to which these phenomena are observed in neural networks trained with gradient descent (Section \ref{subsec:NN}). 

\vspace{3mm}

\subsection{Random features regression}\label{subsec:RF}

\begin{figure}[t]
    \centering
    \includegraphics[width=\linewidth]{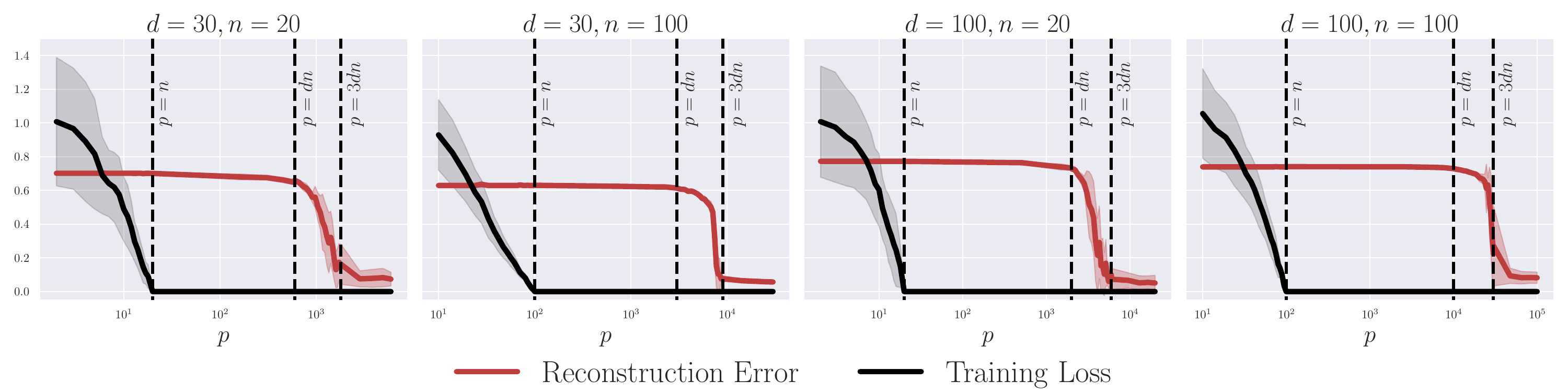}
    \caption{\textbf{Thresholds for label fitting and data reconstruction when training on \iid data unformly drawn from the $d$-dimensional sphere.} We consider RF regression with ReLU activation, fitting a noisy linear model. We report mean (solid line) and standard deviation (shaded area) for both the reconstruction error (in red) and training loss (in black) as the number of parameters $p$ increases, at different choices of input dimensions $d$ and number of dataset examples $n$. Statistics are computed across 10 distinct random seeds. Two distinct thresholds clearly emerge: $p\gg n$ for label fitting, and $p\gg dn$ for data reconstruction.}
    \label{fig:rf_relu_synthetic_results}
    \vspace{-2mm}
\end{figure}

\begin{figure}[t]
    \vspace{-3mm}
    \centering
    \includegraphics[width=\linewidth]{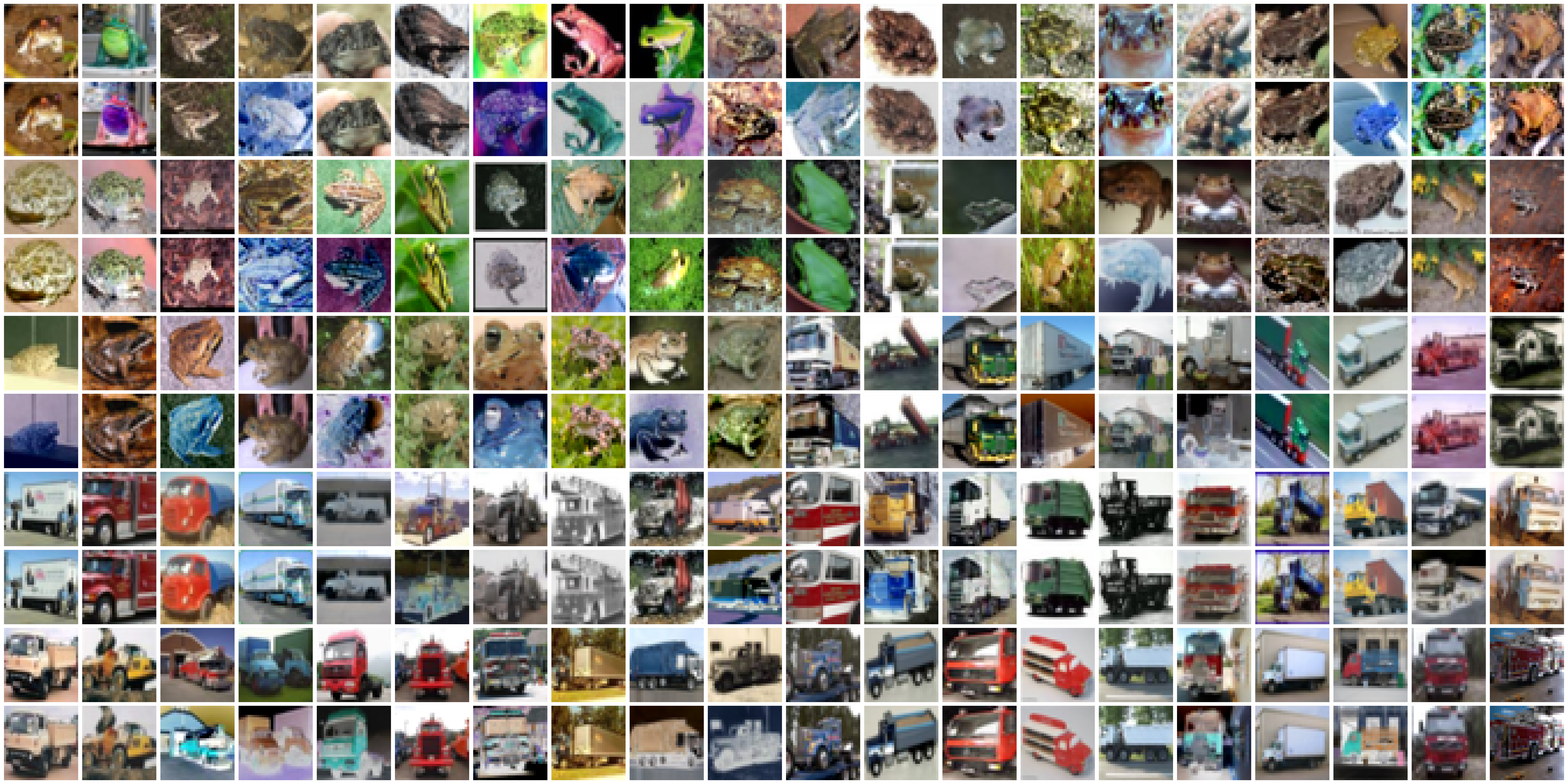}
    \caption{\textbf{Images reconstructed from an RF model with ReLU activation may have the wrong sign.} We repeat the experiment of Figure \ref{fig:teaser} using a different random seed and observe that reconstructions from ReLU models can appear as sign-flipped versions of original training data. This is due to the fact that ReLU has odd Hermite coefficients of order $\ge 3$ equal to zero, as discussed in Remark \ref{rmk:sign}.}
    \label{fig:rf_relu_cifar10_n=100_real}
  
\end{figure}

\textbf{Synthetic data on the $d$-dimensional sphere.} 
We begin with a synthetic task, where the training data $X = [x_1,\dots,x_n]^\top \in \mathbb{R}^{n \times d}$ are $n$ \iid samples drawn uniformly from the sphere of radius $\sqrt{d}$. The labels $Y \in \mathbb{R}^n$ are given by $Y=Xg + \epsilon$, where $g \in \mathbb{R}^d$ has entries $g_i \sim \mathcal{N}(0,1/d)$ and the noise $\epsilon \in \mathbb{R}^n$ is independent of the data $X$, with \iid Gaussian entries with zero mean and variance $0.25$. 
Figure \ref{fig:rf_relu_synthetic_results} showcases the same trend for several choices of $d$ and $n$: at $p \geq n$, the training loss approaches zero, while the reconstruction error approaches zero when $p$ becomes larger than $dn$.

\begin{wrapfigure}{r}{0.6\textwidth}
    \centering
    \includegraphics[width=\linewidth]{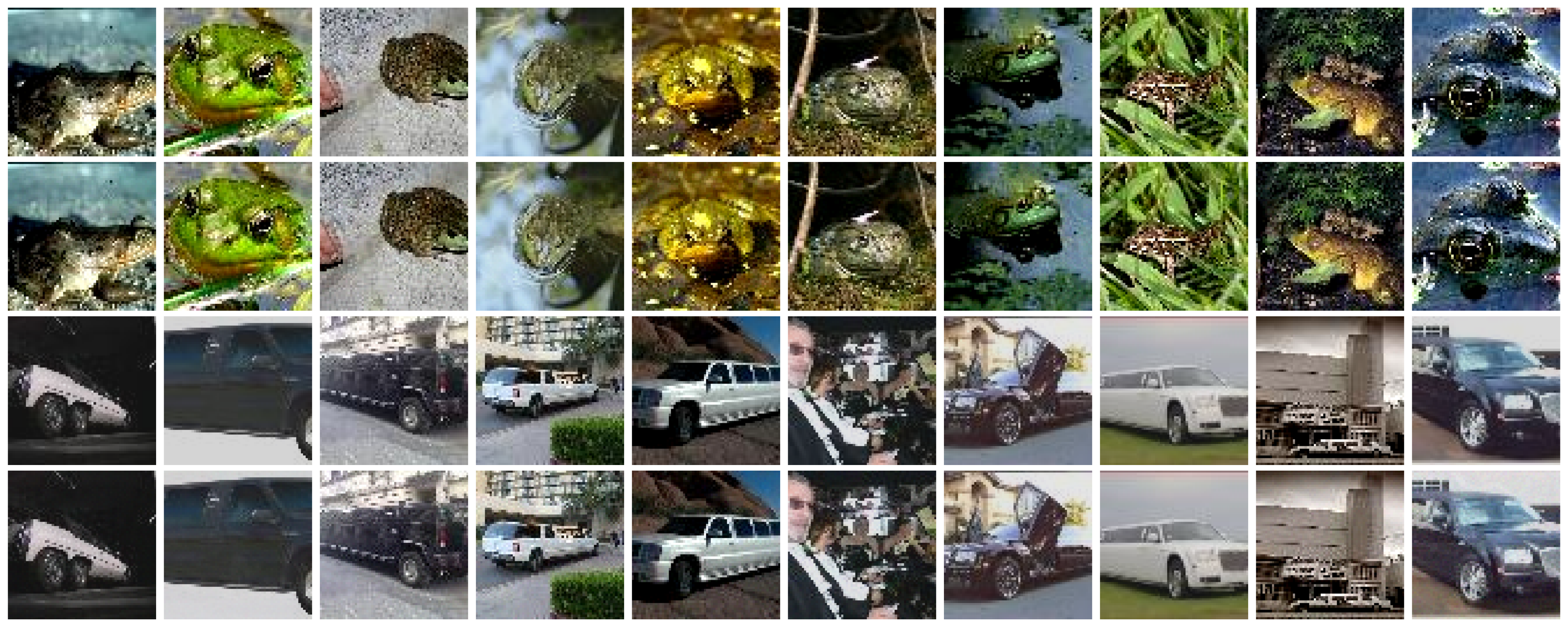}
    \vspace{-1.5em}
    \caption{\textbf{Reconstruction of higher-dimensional images.} We repeat the same experiment of Figure \ref{fig:teaser} (right) using $n = 20$ samples from Tiny-ImageNet and a random features model with $p=10dn$.}
    \label{fig:tinyimagenet}
    \vspace{-1.0em}
\end{wrapfigure}

\textbf{Natural images with binary labels.}
The same phenomenon observed in the previous scenario carries over to natural images with binary labels $\{\pm 1\}$. For these experiments, we restrict the CIFAR-10 training split to the first 50 instances of the ``\emph{frog}'' and ``\emph{truck}'' classes, fitting several RF models with ReLU activation. In the left part of Figure \ref{fig:teaser}, we can appreciate the same transitions at $p = n$ and $p \approx dn$ respectively for label fitting and data reconstruction; the right part of the figure then demonstrates that the reconstructed dataset appears perceptually indistinguishable from the original training dataset. Notably, for ReLU the above visual match may fail due to a \emph{sign error}. In fact, with a different seed, we reconstruct the negatives of some training images (Figure \ref{fig:rf_relu_cifar10_n=100_real}) even if the condition $p \gg dn$ is met and the loss in Eq.\ (\ref{eq:reconstructionloss}) converges. This aligns with Remark \ref{rmk:sign}: ReLU violates Assumption \ref{ass:activation} since $\mu_{2\ell+1}=0$ for $\ell>1$. In Figure \ref{fig:appendix_relu+tanh} deferred to Appendix \ref{sec:add_exp_relu+tanh}, we show that the sign ambiguity disappears upon taking the activation $\phi(z)=\mathrm{ReLU}(z)+\tanh(z)$, which has mixed-parity Hermite coefficients and, therefore, satisfies Assumption \ref{ass:activation}. In Figure \ref{fig:tinyimagenet}, we display the reconstructed images from the same experiment on Tiny-ImageNet, which has input dimension $d = 3 \times 64 \times 64$, for $n=20$ (additional details can be found in Figure \ref{fig:rebuttal_tinyimagenet} and Appendices \ref{sec:impl_details}--\ref{sec:add_exp_ablation}).

\subsection{Neural networks trained with gradient descent}\label{subsec:NN}

Motivated by the RF baseline, we now turn our attention to finite-width networks, studying how \emph{the number of parameters in the last layer} $p^{(L)}$ determines the ability to reconstruct the training dataset.
Concretely, we train two-layer and deep residual networks with full-batch gradient descent, minimizing the square loss.
Given that in this case the initialization is non-zero, we modify the reconstruction loss as $\mathcal{L}(\hat X) = \|P^\perp_{\hat\Phi} (\theta_*^{(L)} - \theta_0^{(L)}) \|_2^2,$ where $\theta_0^{(L)}$ and $\theta_*^{(L)}$ are respectively the parameters of the last layer at initialization and at the end of training. The feature matrix $\hat\Phi$ stacks the penultimate layer feature vectors $\varphi(\hat x_i)$. We assume access to trained weights of all layers $\{\theta_*^{(1)}, \dots, \theta_*^{(L)}\}$, the last 
\begin{figure}[t!]
    \centering
    \includegraphics[width=0.38\linewidth]{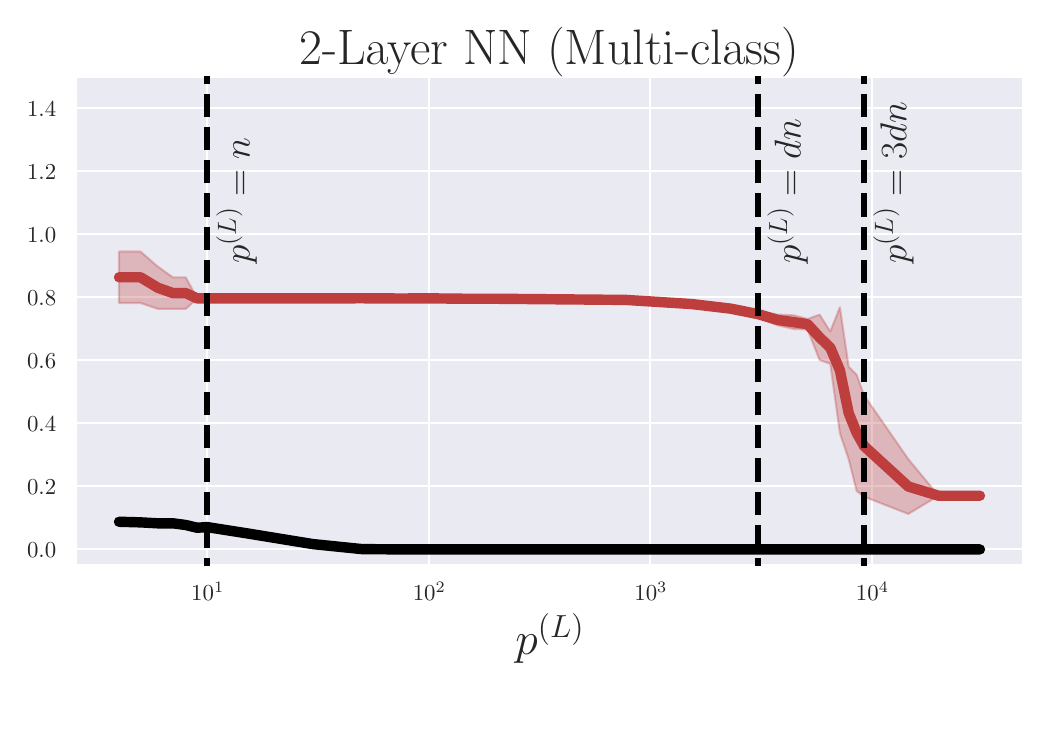}
    \hspace{12mm}
    \includegraphics[width=0.38\linewidth]{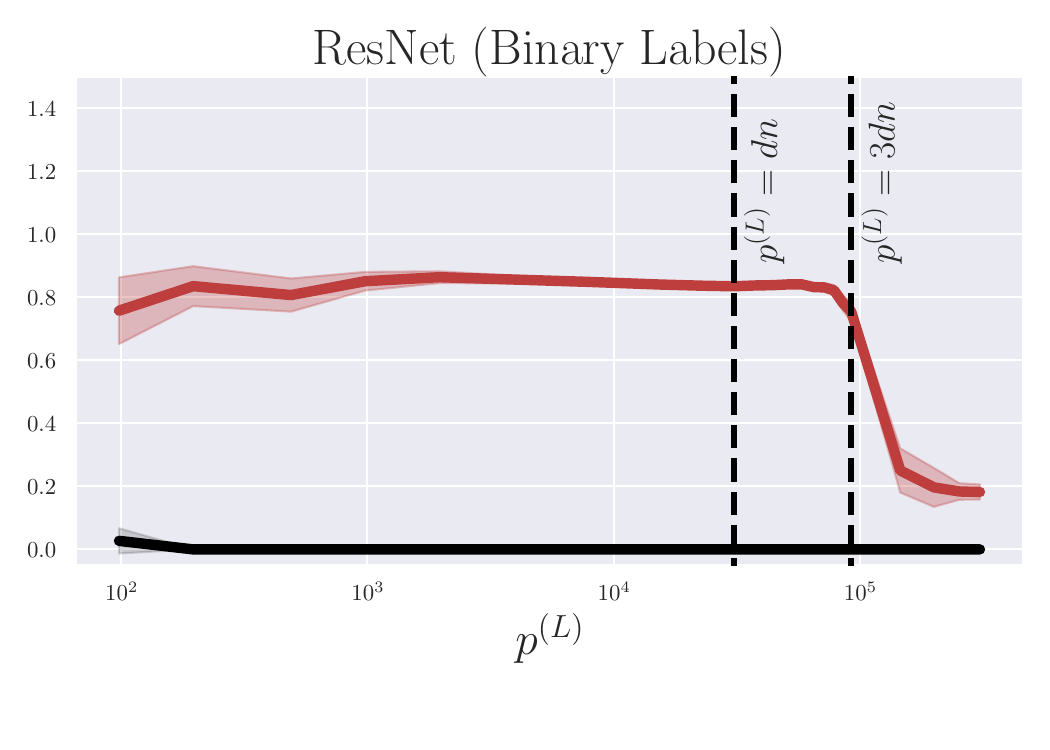}
    
    \vspace{-7mm}
    \includegraphics[width=0.4\linewidth]{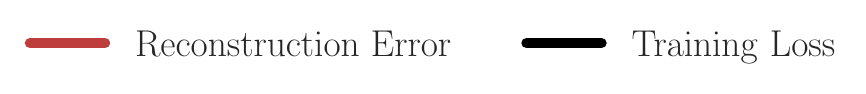}
    \vspace{-2mm}
    \caption{\textbf{Thresholds for label fitting and data reconstruction for neural networks trained with gradient descent on $n=10$ CIFAR-10 images.} We consider regression with the square loss, training  two-layer ReLU networks on the one-hot encoding of the 10-class labels (left) and ResNets  on binary labels (right). We report mean (solid line) and standard deviation (shaded area) for both reconstruction error (in red) and training loss (in black), as the number of parameters in the last layer $p^{(L)}$ increases. Statistics are computed across 10 distinct random seeds.}
    \label{fig:transition_rf_relu_2layermulti_resnet_cifar10}
   
\end{figure}
layer at initialization $\theta_0^{(L)}$, and the neural network's computational graph.

\begin{figure}[t]
    \centering
    \includegraphics[width=\linewidth]{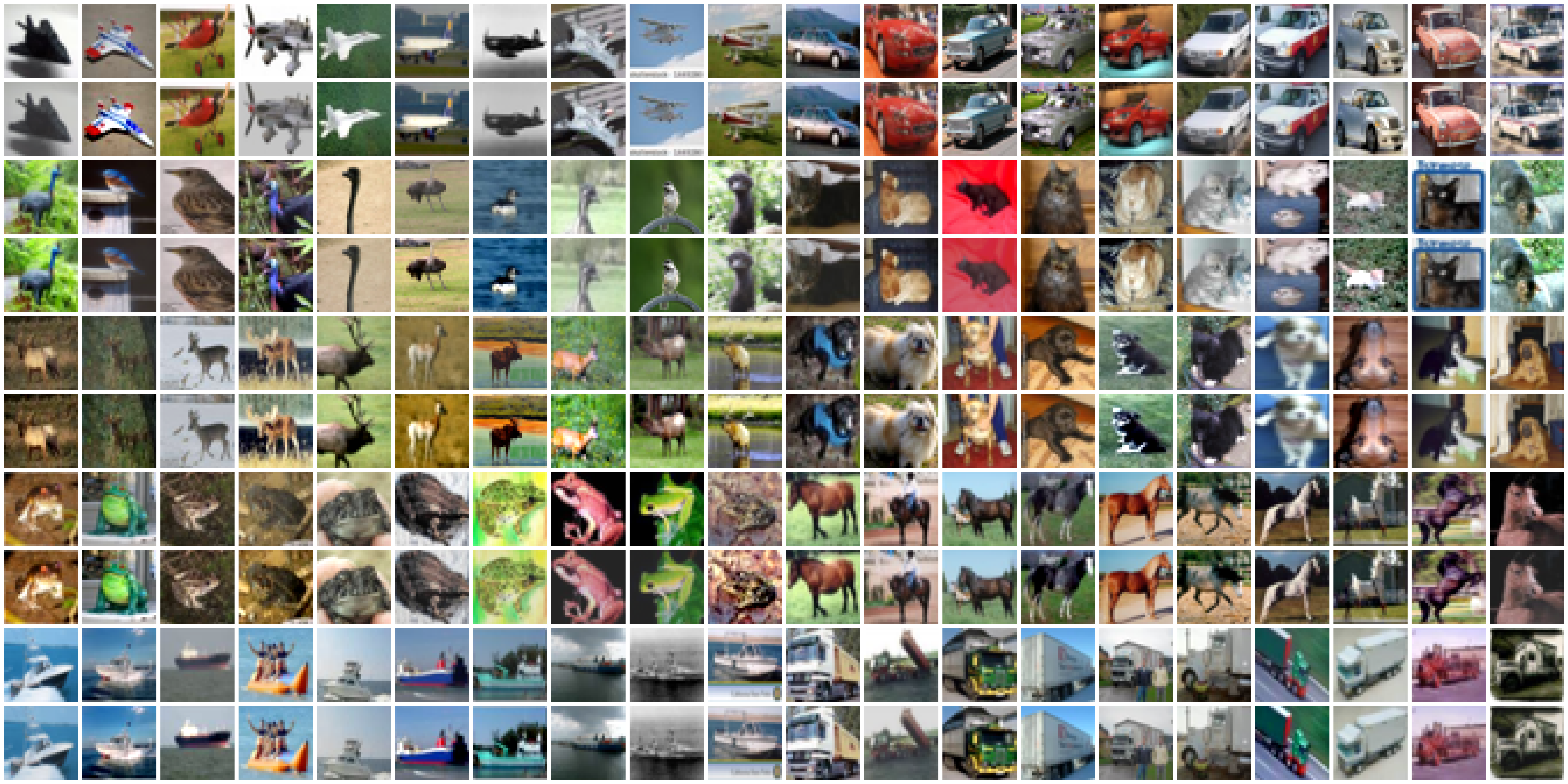}
    \caption{\textbf{Multi-class training data reconstructed from a neural network trained on CIFAR-10.} We train a two-layer ReLU network on $n=100$ images from CIFAR-10 dataset (10 examples per class) with gradient descent. We cast the training procedure as multi-class regression on square loss, using one-hot encoded class labels as targets. The number of parameters in the last layer of the network is $p^{(L)}=4dn$.}
    \label{fig:2layer_relu_gd_multiclass_lastlayerntk_cifar10_width=8192x15_n=100}
    
\end{figure}

\textbf{Two-layer neural networks.} We consider a two-layer neural network $f_\text{NN}(x)=\theta^{(2)}\phi(\theta^{(1)}x)$, with $k$-class output and ReLU activation $\phi$. The weight matrices $\theta^{(1)} \in \mathbb{R}^{h \times d}$ and $\theta^{(2)} \in \mathbb{R}^{k \times h}$ have \iid entries, initialized as $\theta^{(1)}_{i,j} \sim \mathcal{N}(0,1/d)$, $\theta^{(2)}_{i,j} \sim \mathcal{N}(0,1/h)$. In this setting, $p^{(L)} = k \times h$, where $h$ is the width of the network. 
In the left panel of Figure \ref{fig:transition_rf_relu_2layermulti_resnet_cifar10}, we analyze the ability to reconstruct CIFAR-10 images with one-hot targets for each of the 10 classes. Although the output is no longer a scalar (as for random features), when $p^{(L)}$ becomes larger than $dn$, the reconstruction error starts decreasing. 
For all models reported in the plot, the total number of trainable parameters satisfies $p > n$ (even when $p^{(L)} < n$), so the model interpolates all training labels and the training loss is close to zero. 
As a confirmation of the quality of reconstructed images, we report in Figure \ref{fig:2layer_relu_gd_multiclass_lastlayerntk_cifar10_width=8192x15_n=100} the reconstruction of $n=100$ examples (10 per class), when $p^{(L)}=4dn$. Also in this scenario, the reconstructed images are perceptually indistinguishable from original training data. 

\textbf{Deep residual networks.} We conclude the experimental evaluation with residual architectures, probing how their structure (\ie, residual connections and convolutions) affects reconstructability. We defer the formal definition of the model involved in these experiments to Appendix \ref{sec:add_details_resnet}. On CIFAR-10 (\emph{frogs vs. trucks}), shown in the rightmost panel of Figure \ref{fig:transition_rf_relu_2layermulti_resnet_cifar10}, ResNets display a transition for data reconstruction consistent with earlier experiments and happening after the $p^{(L)} = dn$ threshold. As for two-layer networks, the ResNets  interpolate training labels even when $p^{(L)}<n$, since the total number of parameters $p$ is much larger than the number of samples $n$ for all models. 

\begin{figure}[t]
    \centering
    \vspace{-1.5em}
    \begin{minipage}[c]{0.48\linewidth}
        \includegraphics[width=0.9\linewidth]{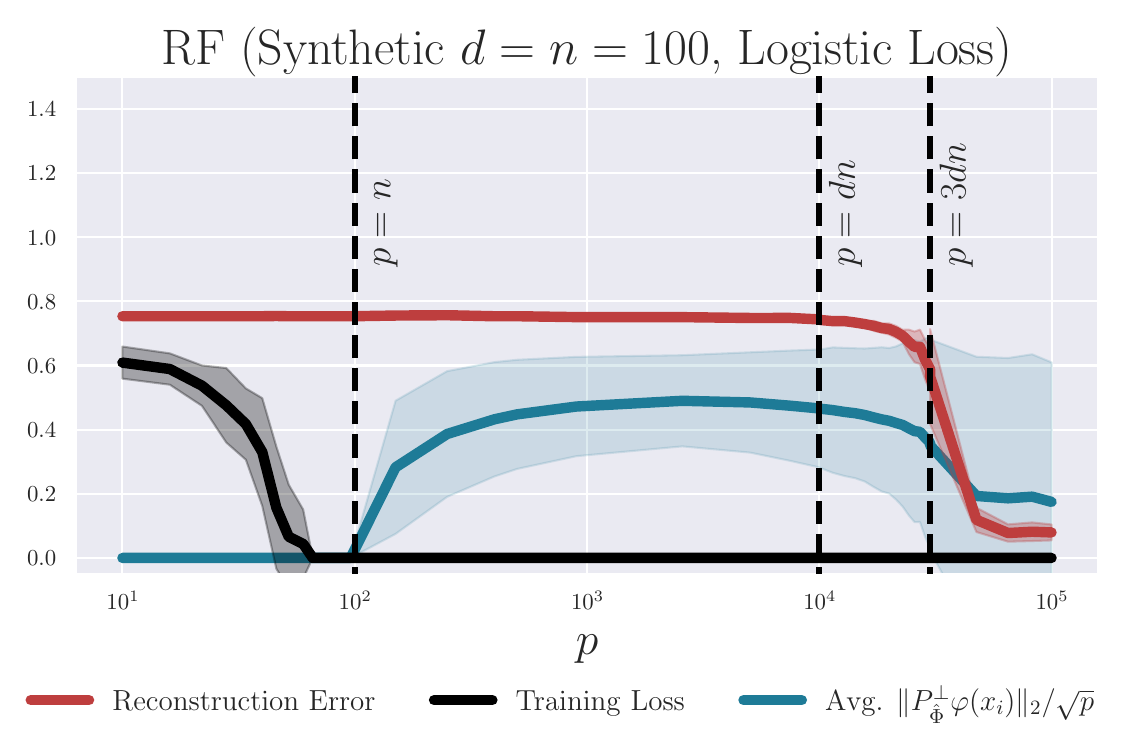}
    \end{minipage}
    \hfill
    \begin{minipage}[c]{0.48\linewidth}
        \includegraphics[width=0.9\linewidth]{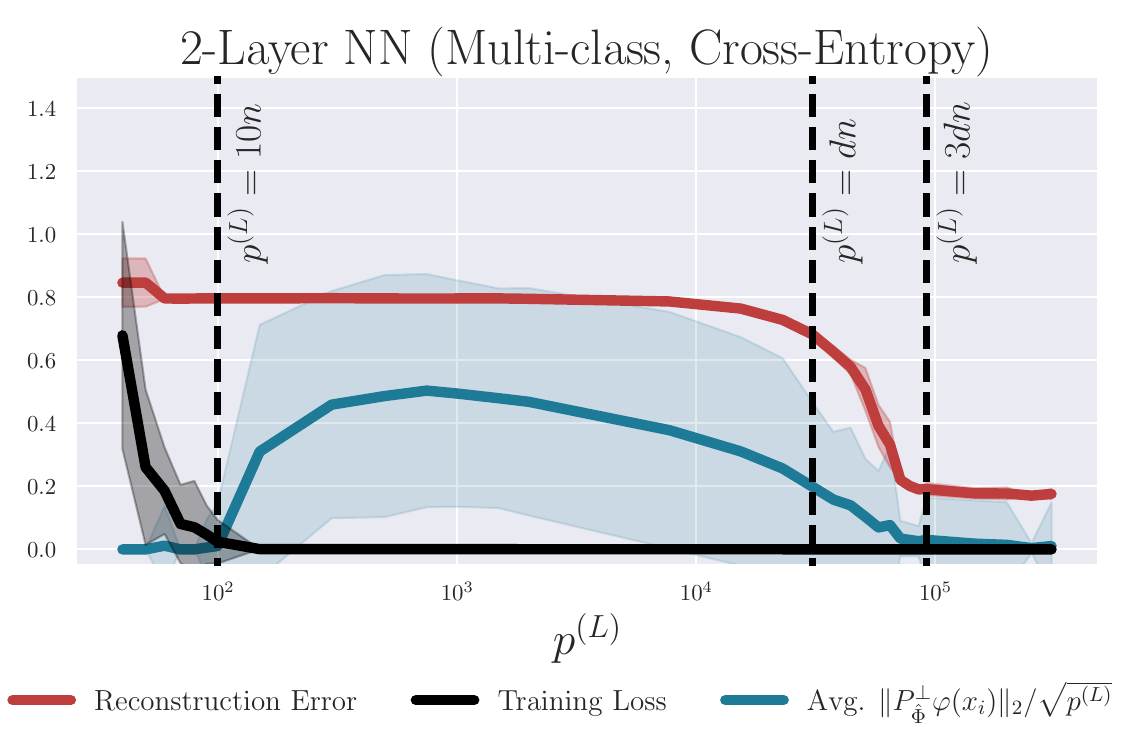}
    \end{minipage}
    \vspace{-1em}
    \caption{\textbf{Thresholds for data reconstruction on classification experiments.} (Left) We consider binary classification over $n=100$ \iid samples uniformly drawn from the $d$-dimensional sphere ($d=100$). We train RF models with ReLU activation by minimizing the logistic loss with gradient descent. (Right) We consider multi-class classification and train two-layer ReLU networks on $n=10$ CIFAR-10 images by minimizing cross-entropy loss with gradient descent. We report mean (solid line) and standard deviation (shaded area) for the reconstruction error (in red), for the training loss (in black) and for the per-feature orthogonal residual of projecting $\Phi$ onto $\Span\{\Rows(\hat\Phi)\}$ (in blue). We indicate with $p^{(L)}$ the number of parameters in the last layer. Statistics are computed across 10 distinct random seeds.}
    \vspace{-1em}
    \label{fig:rebuttal_classification}
\end{figure}

\textbf{Classification via logistic and cross-entropy loss.} We consider a synthetic task, where the training data $x_i$ are $n=100$ \iid samples drawn uniformly on the sphere of radius $\sqrt{d}=10$. The labels are given by $y_i= \text{sign}(g^\top x_i)$, where $g\in\mathbb{R}^d$ has entries $g_i \sim \mathcal{N}(0,1/d)$. We compute $\theta^*$ minimizing the logistic loss $\ell(\theta):=\sum_{i=1}^n \log(1+e^{-y_i \varphi(x_i)^\top \theta})$ with gradient descent, and consider the same reconstruction algorithm as in Eq.\ (\ref{eq:reconstructionloss}). In the left panel of Figure \ref{fig:rebuttal_classification}, we find the usual threshold $p \approx dn$ for successful reconstruction of the training set. In the right panel of Figure \ref{fig:rebuttal_classification}, we consider a two-layer neural network trained with cross-entropy loss on $n = 10$ samples from the 10 classes of CIFAR-10. and we see that the reconstruction algorithm yields similar results as the ones for regression in Figure \ref{fig:transition_rf_relu_2layermulti_resnet_cifar10}. Further implementation details can be found in Appendix \ref{sec:add_exp_ablation}, and the reconstructed images can be found in Figure \ref{fig:rebuttal_classification_mosaic}.

\textbf{Additional ablation studies.} In Appendix \ref{sec:add_exp_ablation}, we first show that the optimization of $\mathcal L(\hat X)$ is robust to different choices of the learning rate $\eta$ (if it is sufficiently small). Then, we explore the setting where $\hat X$ has $\hat n\neq n$ rows, showing that the result of the optimization gives overlapping images in the case $\hat n < n$, and successfully reconstructs the full training set (plus some extra duplicates) in the case $\hat n > n$. We later show successful reconstruction from the parameters of a vision transformer, and give empirical evidence that adding a weight decay regularizer does not affect the law of reconstruction. Finally, we show that it is possible to reconstruct the data from a pruned neural network, albeit with higher values of $p$ depending on the sparsity.
\section{Conclusions}\label{sec:conclusions}

This work studies data memorization, intended as the feasibility of reconstructing $n$ data samples of dimension $d$ from a trained model, focusing on the number of parameters $p$ at which this becomes possible. Our results on random features point to a \emph{law of data reconstruction}, establishing the threshold $p \approx dn$.
Remarkably, the reconstruction method grounded in our theoretical analysis is also successful in two-layer and deep residual networks. While our analysis assumes knowledge of the subspace of the training samples in feature space ($\Span \{ \{ \varphi(x_i) \}_{i = 1}^n \}$), the experimental results demonstrate that, by optimizing the loss $\mathcal L(\hat X)$ in Eq.\ (\ref{eq:reconstructionloss}), the whole training set is reconstructed when $p\gg dn$. This suggests two outstanding open problems to be tackled in future research: proving that \emph{(i)} all the global optima of $\mathcal L(\hat X)$ are permutations of the training dataset, and that \emph{(ii)} the optimization problem can be efficiently solved with gradient methods despite the non-convexity of $\mathcal L(\hat X)$.
Another interesting direction is to understand what happens in the regime $n \ll p \ll dn$. On the one hand, \cite{balle2022reconstructing} indicate how to reconstruct \emph{any single} training sample in this regime under certain model assumptions, given access to the final parameters and, additionally, to the remaining training data. On the other hand, we cautiously suspect that, without this additional knowledge, reconstructing the entire dataset is information-theoretically impossible when $p \ll dn$: at fixed machine precision, this would require recovering $\Theta(dn)$ bits from only $\Theta(p)$ bits, underscoring a hard limit on reconstructability.
Importantly, this would not guarantee that \emph{none} of the training samples is memorized. There is empirical evidence that learning models tend to memorize outliers \citep{feldman2020}, and a possible direction for future work is to investigate if parts of the training data sampled from a heavy-tailed distribution (therefore not respecting Assumption \ref{ass:data}) would result in memorization before the threshold $p \approx dn$.

\newpage
\section*{Acknowledgements}
M.M.\ is funded by the European Union (ERC, INF$^2$, project number 101161364). S.B.\ was supported by a Google PhD fellowship.
L.I.\ acknowledges the grant received from the European Union Next-GenerationEU (Piano Nazionale di Ripresa E Resilienza (PNRR)) DM 351 on Trustworthy AI. 
T.T. \& L.I. acknowledge the EU project ELSA - European Lighthouse on Secure and Safe AI. This study was carried out within the FAIR - Future Artificial Intelligence Research and received funding from the European Union Next-GenerationEU (PIANO NAZIONALE DI RIPRESA E RESILIENZA (PNRR) – MISSIONE 4 COMPONENTE 2, INVESTIMENTO 1.3 – D.D. 1555 11/10/2022, PE00000013). This manuscript reflects only the authors’ views and opinions, neither the European Union nor the European Commission can be considered responsible for them. The authors would like to thank Yizhe Zhu for helpful discussions.

\bibliography{bibliography}
\bibliographystyle{plainnat}

\newpage
\appendix
\section{Additional notations and preliminaries}\label{app:notation}

We use the definition of the Orlicz norm of order $\alpha$ of a real random variable $X$ as 
\begin{equation}\label{eq:orlicz}
    \norm{X}_{\psi_\alpha} := \inf \{ t>0 \; : \; \E \left[ \exp(|X|^\alpha/t^\alpha) \right] \leq 2 \}.
\end{equation}
From this definition, it follows that $\norm{|X|^\gamma}_{\psi_{\alpha / \gamma}} = \norm{X}_{\psi_{\alpha}}^\gamma$. We will say that a random variable $X$ is sub-Gaussian (sub-exponential) if its sub-Gaussian norm $\norm{X}_{\psi_2}$ (sub-exponential norm $\norm{X}_{\psi_1}$) is $O(1)$ (\emph{i.e.} it does not increase with the scalings of the problem). Notice that if $X$ and $Y$ are scalar random variables, we have $\subEnorm{XY} \leq \subGnorm{X} \subGnorm{Y}$. We use the analogous definitions for vectors. In particular, let $X \in \mathbb R^n$ be a random vector, then $\subGnorm{X} := \sup_{\norm{u}_2=1} \subGnorm{u^\top X}$ and $\subEnorm{X} := \sup_{\norm{u}_2=1} \subEnorm{u^\top X}$. We note that if $X \in \R$ is sub-Gaussian (sub-exponential) and $\tau: \R \to \R$ is Lipschitz, we have that $\tau(X)$ is sub-Gaussian (sub-exponential) as well. Also, if a random variable is sub-Gaussian or sub-exponential, its $p$-th momentum is upper bounded by a constant (that might depend on $p$).

Given a matrix $A$, we indicate with $A_{i:}$ its $i$-th row, and with $A_{:j}$ its $j$-th column. Given a square matrix $A$, we denote by $\evmin{A}$ its smallest eigenvalue. Given a matrix $A$ we indicate with $\opnorm{A}$ its operator (or spectral) norm, and with $\norm{A}_F$ its Frobenius (or Hilbert-Schmidt) norm ($\norm{A}^2_F = \sum_{ij} A_{ij}^2$). Given two matrices $A, B\in \R^{m\times n}$, we denote by $A \circ B$ their Hadamard (component wise) product and by $A^{\circ l} = A \circ A^{\circ (l - 1)}$ the $l$-th Hadamard power, with $A^{\circ 1} = A$.

\section{Proof of Theorem \ref{thm:reconveronelargen}}\label{app:thm1}

First, notice that, with overwhelming probability over $V$ and $X$, due to Lemma 4.5 in \citep{bombari2024privacy}, we have
\begin{equation}\label{eq:evminK}
    \evmin{\Phi \Phi^\top} = \Omega(p).
\end{equation}
In particular, notice that the lower bound on $n$ in their statement is not required to lower bound the smallest eigenvalue of the kernel. This implies that $\Phi \Phi^\top$ is full rank.
Since for every $i \in [n]$ we have $\varphi(x_i) \in \Span \{ \Rows(\hat \Phi )\}$, 
we also have
\begin{equation}
    \Span \{ \Rows( \Phi )\} \subseteq \Span \{ \Rows(\hat \Phi )\}.
\end{equation}
Thus, the equation above implies that a subspace of dimension $n$ ($\Phi \Phi^\top$ is full rank) is a subset of another subspace of dimension at most $n$. This implies that the two subspaces must be identical, which in turn implies, for every $i \in [n]$,
\begin{equation}\label{eq:hatinspan}
    \varphi(\hat x_i) \in \Span \{ \Rows(\Phi )\}.
\end{equation}
Throughout the proof, we omit the subscript $\hat\imath$ to simplify the notation and we let $\hat x$ denote a row of $\hat X$ such that $\mathcal L(\hat X) = 0$. Then, due to (\ref{eq:hatinspan}), we will write
\begin{equation}\label{eq:definitiona}
    \varphi(\hat x) = \Phi^\top a = \sum_{i = 1}^n \varphi(x_i) a_i,
\end{equation}
where we introduced the vector $a \in \R^n$ containing the coefficients of $\varphi(\hat x)$ written in the basis $\{ \varphi(x_i) \}_{i = 1}^n$.

Furthermore, in this section, we will introduce the shorthand $\tilde \varphi(x) = \tilde \phi (Vx)$, where $\tilde \phi : \R \to \R$ is a non-linearity that shares the same Hermite coefficients as $\phi$, but with the first $\tilde \mu_1 = 0$. This implies that $\tilde \phi(z) = \phi(z) - \mu_1 z$ is a Lipschitz function. We will use the shorthand $\tilde \mu^2 = \sum_{l = 3}^\infty \mu_l^2$.

\begin{lemma}\label{lemma:facts}
    For any $0 < t < p$ and any $i \in [n]$, we have that
    \begin{equation}\label{eq:lemma1eq1}
        \left| \tilde \varphi(x_i)^\top \varphi(x_i) - \tilde \mu^2 p \right| = O \left( t \sqrt p \right),
    \end{equation}
    with probability at least $1 - 2 \exp(-c t^2)$ over $V$. Furthermore, for any $i \neq j$, we have
    \begin{equation}\label{eq:lemma1eq2}
        \left| \tilde \varphi(x_i)^\top \varphi(x_j) \right| = O \left( t \sqrt p + p \frac{\log^3 d}{d^{3 / 2}} \right),
    \end{equation}
    with probability at least $1 - 2 \exp(-c t^2) - 2 \exp(-c \log^2 d)$ over $V$ and $X$. Finally, we jointly have 
  \begin{equation}\label{eq:lemma1eq3}
\sup_{\hat x \in \sqrt d \, \mathbb S^{d - 1}}        \left| \tilde \varphi(\hat x)^\top \varphi(\hat x) - \tilde \mu^2 p \right| = O \left( \sqrt{pd} \log d + \frac{p}{d^2} \right),
    \end{equation}
    \begin{equation}\label{eq:lemma1eq4}
\sup_{\hat x \in \sqrt d \, \mathbb S^{d - 1}}           \left| \tilde \varphi(x_i)^\top \varphi(\hat x) - \tilde \varphi(x_i)^\top \tilde \varphi(\hat x) \right| = O\left(\sqrt {pd} \log d + \frac{p}{d^2}\right),
    \end{equation}
    \begin{equation}\label{eq:lemma1eq5}
  \sup_{\hat x \in \sqrt d \, \mathbb S^{d - 1}}         \left| \varphi(x_i)^\top \tilde \varphi(\hat x) - \tilde \varphi(x_i)^\top \tilde \varphi(\hat x) \right| = O\left(\sqrt {pd} \log d + \frac{p}{d^2}\right),
    \end{equation}
    for all $i \in [n]$, with probability at least $1 - 2  \exp( - c d \log^2 d)$ over $V$.
\end{lemma}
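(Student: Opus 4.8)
\emph{Proof plan.} All five bounds follow from a common recipe: for a \emph{fixed} argument, identify the mean via the Hermite expansions of $\phi$ and $\tilde\phi$ and control the fluctuation by Bernstein's inequality for sums of independent sub-exponential variables; then, for the three supremum bounds (\ref{eq:lemma1eq3})--(\ref{eq:lemma1eq5}), upgrade the pointwise estimate to one that is uniform over $\sqrt d\,\mathbb S^{d-1}$ via an $\eps$-net. Throughout I condition on $X$ and work on the high-probability events $\opnorm{V}=O(\sqrt{p/d})$ and $\max_i\norm{\tilde\varphi(x_i)}_2=O(\sqrt p)$, which hold with overwhelming probability (Gaussian matrix concentration for $V$, and Bernstein for $\norm{\tilde\varphi(x_i)}_2^2$ together with a union bound over $i\in[n]$, using $n=O(d)$).

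\textbf{Pointwise bounds.} For a fixed $x$ with $\norm{x}_2=\sqrt d$ the vector $Vx$ has i.i.d.\ standard Gaussian entries $g_1,\dots,g_p$, so $\tilde\varphi(x)^\top\varphi(x)=\sum_{k=1}^p\tilde\phi(g_k)\phi(g_k)$ is a sum of i.i.d.\ terms. Expanding $\phi=\sum_{l\ge1}\mu_l h_l$ and $\tilde\phi=\sum_{l\ge3}\mu_l h_l$ in the normalized Hermite basis (recall $\mu_0=\mu_2=0$), orthonormality gives $\E[\tilde\phi(g)\phi(g)]=\sum_{l\ge3}\mu_l^2=\tilde\mu^2$; since $\phi,\tilde\phi$ are Lipschitz, $\phi(g),\tilde\phi(g)$ are sub-Gaussian with $O(1)$ norm, hence their product is sub-exponential with $O(1)$ norm, and Bernstein's inequality yields (\ref{eq:lemma1eq1}). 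For $i\ne j$, the pair $((Vx_i)_k,(Vx_j)_k)$ is bivariate normal with correlation $\rho=x_i^\top x_j/d$; by the Hermite product rule $\E[h_l(g)h_m(g')]=\rho^l\delta_{lm}$, so $\E[\tilde\phi(g)\phi(g')]=\sum_{l\ge3}\mu_l^2\rho^l=O(\rho^3)$. Near-orthogonality of independent sub-Gaussian vectors of norm $\sqrt d$ (a Hanson--Wright/Bernstein bound) gives $|\rho|=O(\log d/\sqrt d)$ with probability $1-2e^{-c\log^2 d}$, so the mean of $\tilde\varphi(x_i)^\top\varphi(x_j)$ is $O(p\log^3 d/d^{3/2})$, and the same sub-exponential Bernstein bound controls the fluctuation by $O(t\sqrt p)$, giving (\ref{eq:lemma1eq2}). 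Finally, since $\varphi(\hat x)-\tilde\varphi(\hat x)=\mu_1 V\hat x$, the left-hand sides of (\ref{eq:lemma1eq4}) and (\ref{eq:lemma1eq5}) equal $|\mu_1\sum_k\tilde\phi((Vx_i)_k)(V\hat x)_k|$ and $|\mu_1\sum_k(Vx_i)_k\tilde\phi((V\hat x)_k)|$; because $\E[h_l(g)h_1(g')]=\rho\,\delta_{l1}$ while $\tilde\phi$ has no $h_1$ component, each summand has mean exactly $0$ \emph{for every} correlation $\rho$, so only the fluctuation survives and Bernstein again gives an $O(t\sqrt p)$ pointwise bound — and likewise the pointwise version of (\ref{eq:lemma1eq3}) deviates from $\tilde\mu^2 p$ by $O(t\sqrt p)$.

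\textbf{The $\eps$-net.} Each of the maps $\hat x\mapsto\tilde\varphi(\hat x)^\top\varphi(\hat x)$, $\hat x\mapsto\tilde\varphi(x_i)^\top(V\hat x)$, $\hat x\mapsto(Vx_i)^\top\tilde\varphi(\hat x)$ is Lipschitz on $\sqrt d\,\mathbb S^{d-1}$ with constant $O(p/\sqrt d)$: writing the increment entrywise, using $|\phi(u)|,|\tilde\phi(u)|=O(1+|u|)$ with the Lipschitz property and a Cauchy--Schwarz step, and bounding $\norm{V\hat x}_2\le\opnorm{V}\sqrt d=O(\sqrt p)$ and $\norm{V(\hat x-\hat x')}_2\le\opnorm{V}\norm{\hat x-\hat x'}_2$. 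Take a $\delta$-net $\mathcal N_\delta$ of $\sqrt d\,\mathbb S^{d-1}$ with $|\mathcal N_\delta|\le(3\sqrt d/\delta)^d$ and apply the pointwise bound with $t=\Theta(\sqrt d\log d)$ at every net point. Choosing $\delta=\Theta(t\sqrt d/\sqrt p)$ makes the discretization error $O(p\delta/\sqrt d)=O(t\sqrt p)=O(\sqrt{pd}\log d)$, while $|\mathcal N_\delta|=\exp(O(d\log d))$; the union bound then fails with probability at most $\exp(O(d\log d))\cdot 2e^{-ct^2}=\exp(O(d\log d)-cd\log^2 d)\le 2e^{-c'd\log^2 d}$. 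This gives the $O(\sqrt{pd}\log d)$ terms in (\ref{eq:lemma1eq3})--(\ref{eq:lemma1eq5}); the residual $O(p/d^2)$ collects the lower-order mean and Taylor corrections, which are relevant only in the (for our application irrelevant) regime of very large $p$ and are otherwise dominated by $\sqrt{pd}\log d$.

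\textbf{Main obstacle.} The delicate part is the $\eps$-net step, where three scales must be balanced: the Lipschitz constant $O(p/\sqrt d)$ of the functionals, the net cardinality $\exp(\Theta(d\log(\sqrt d/\delta)))$, and the sub-Gaussian tail exponent $e^{-ct^2}$. One needs a single choice — $t\asymp\sqrt d\log d$ and $\delta\asymp t\sqrt d/\sqrt p$ — for which the discretization error is $O(\sqrt{pd}\log d)$ \emph{and} the union bound over $\mathcal N_\delta$ stays summable at the level $e^{-\omega(\log d)}$; this is precisely where the over-parameterization hypothesis $p=\omega(nd\log^2 d)$ enters. Getting the auxiliary high-probability bounds on $\opnorm{V}$ and $\norm{\tilde\varphi(x_i)}_2$ with exponents strong enough not to spoil this probability is the remaining bookkeeping.
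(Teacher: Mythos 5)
Your plan follows the same architecture as the paper's proof: identify the mean by Hermite decomposition (using $\mu_0=\mu_2=0$ so the self-inner-product mean is exactly $\tilde\mu^2 p$ and the cross terms involving $\varphi-\tilde\varphi=\mu_1 V\hat x$ have zero mean), control fluctuations by Bernstein, and uniformize over the sphere with an $\eps$-net whose resolution trades off against the Lipschitz constant $O(p/\sqrt d)$ of the feature maps. The pointwise bounds (\ref{eq:lemma1eq1})--(\ref{eq:lemma1eq2}) and the observations that $\E[h_l(g)h_m(g')]=\rho^l\delta_{lm}$ and that $\tilde\phi$ has no $h_1$ component are exactly what the paper uses. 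Your direct rewriting of (\ref{eq:lemma1eq4})--(\ref{eq:lemma1eq5}) via $\varphi(\hat x)-\tilde\varphi(\hat x)=\mu_1 V\hat x$ is slightly cleaner than the paper's two-sum decomposition but leads to the same estimate.

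However, the net-resolution choice has a gap. You set $\delta\asymp t\sqrt d/\sqrt p$ so the discretization error becomes $O(p\delta/\sqrt d)=O(\sqrt{pd}\log d)$ and then assert $|\mathcal N_\delta|=\exp(O(d\log d))$. But with that $\delta$, the net cardinality is $(3\sqrt d/\delta)^d=\bigl(3\sqrt p/(\sqrt d\log d)\bigr)^d$, so $\log|\mathcal N_\delta|=\Theta(d\log(p/d))$; this is $O(d\log d)$ only when $p$ is polynomial in $d$. Since Assumption~\ref{ass:scalings} imposes no upper bound on $p$, the union bound $|\mathcal N_\delta|\cdot e^{-ct^2}$ is no longer $e^{-\omega(\log d)}$ when $p$ grows super-polynomially, and the $O(p/d^2)$ term cannot be waved away as a Taylor remainder — it is precisely the discretization error you must accept. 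The paper fixes the resolution $\eps=1/d^2$ \emph{independently of $p$}: then $|\mathcal N|=\exp(O(d\log d))$ unconditionally, the Lipschitz estimate yields a discretization error of $O\bigl(\sqrt p\cdot\sqrt{p/d}\cdot d^{-3/2}\bigr)=O(p/d^2)$, and the union bound with $t=\sqrt d\log d$ holds for all $p$ in the assumed regime. Your argument is correct for $p=\mathrm{poly}(d)$ but does not establish the lemma in the generality stated; replacing the adaptive $\delta$ with a fixed $\eps=1/d^2$ and keeping the resulting $p/d^2$ discretization term closes the gap.
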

\begin{proof}
    For the first statement, we have
    \begin{equation}\label{eq:forbernfirst}
    \begin{aligned}
        \tilde \varphi(x_i)^\top \varphi(x_j) &= \sum_{k = 1}^p \tilde \phi(v_k^\top x_i) \phi(v_k^\top x_j) \\
        &= \sum_{k = 1}^p \left( \tilde \phi(v_k^\top x_i) \phi(v_k^\top x_j) - \E_{v_k} \left[ \tilde \phi(v_k^\top x_i) \phi(v_k^\top x_j) \right] \right) \\
        & \qquad + p \,  \E_v \left[ \tilde \phi(v^\top x_i) \phi(v^\top x_j) \right].
    \end{aligned}
    \end{equation}
    Let us analyze the two terms in the RHS separately. The first is the sum of $p$ independent, mean-0, sub-exponential random variables (in the probability space of $V$). This holds since both $\phi$ and $\tilde \phi$ are Lipschitz due to Assumption \ref{ass:activation}, and they arguments are sub-Gaussian. Then, due to Bernstein inequality (see Theorem 2.8.1 of \citep{vershynin2018high}), for any $0 < t < p$, we have
    \begin{equation}\label{eq:bernestein1}
        \left| \sum_{k = 1}^p \tilde \phi(v_k^\top x_i) \phi(v_k^\top x_j) - \E_{v_k} \left[ \tilde \phi(v_k^\top x_i) \phi(v_k^\top x_j) \right] \right| \leq t \sqrt p,
    \end{equation}
    with probability at least $1 - 2 \exp \left(-c_1 t^2 \right)$ over $V$.
    For the second term in the RHS of (\ref{eq:forbernfirst}), using the Hermite decomposition of $\phi$ and $\tilde \phi$, we have that
    \begin{equation}
        p \,  \E_v \left[ \tilde \phi(v^\top x_i) \phi(v^\top x_j) \right] = p \sum_{l = 3}^{+\infty} \mu_l^2 \frac{\left( x_i^\top x_j \right)^l}{d^l}.
    \end{equation}

    Considering the case $i = j$, we readily obtain (\ref{eq:lemma1eq1}). For the case $i \neq j$, since the $x_i$-s are sampled independently from a sub-Gaussian distribution due to Assumption \ref{ass:data}, and $\norm{x_i}_2 = \sqrt d$ for every $i$, we have that
    \begin{equation}\label{eq:dataisorthogonal}
        \max_{i \neq j} |x_i^\top x_j| \leq \sqrt d \log d,
    \end{equation}
    with probability at least $1 - 2 n^2 \exp(- c_2 \log^2 d) \geq 1 - 2 \exp(- c_3 \log^2 d)$, due to Assumption \ref{ass:scalings}. Then, with this probability, we have that
    \begin{equation}
         p \sum_{l = 3}^{+\infty} \mu_l^2 \frac{\left( x_i^\top x_j \right)^l}{d^l} \leq p \frac{\left( x_i^\top x_j \right)^3}{d^3} \sum_{l = 3}^{+\infty} \mu_l^2 \leq \tilde \mu^2 p \frac{\log^3 d}{d^{3 / 2}},
    \end{equation}
    for every $i \neq j$, which gives (\ref{eq:lemma1eq2}).

    Let us now consider an $\epsilon \, \sqrt d$-net of $\sqrt d \, \mathbb S^{d-1}$, namely $\{ x^\epsilon_m \}_{m = 1}^M$, such that for any $x \in \sqrt d \, \mathbb S^{d-1}$ there exists $m \in [M]$ such that $\norm{x - x^\epsilon_m}_2 \leq \epsilon \, \sqrt d$. Due to Corollary 4.2.13 in \citep{vershynin2018high}, for $\epsilon < 1$ we have that the net can be chosen such that $M \leq (3 / \epsilon)^d$. Notice that, for any $m \in [M]$, the same argument leading to (\ref{eq:lemma1eq1}) yields\begin{equation}\label{eq:concentrationonnet1}
        \left| \tilde \varphi(x^\epsilon_m)^\top \varphi(x^\epsilon_m) - \tilde \mu^2 p \right| = O( \sqrt {p d} \log d ),
    \end{equation}
    with probability at least $1 - 2 \exp(-c_3 d \log^2 d)$, after setting $t = \sqrt d \log d$.
    Setting $\epsilon = 1 / d^2$, and performing a union bound on all $m \in [M]$, we have that the previous statement holds uniformly with probability at least
    \begin{equation}\label{eq:concnet1prob}
        1 - 2 (3 / \epsilon)^d \exp(-c_3 d\log^2 d) = 1 - 2  \exp( (\log (3 / \epsilon) - c_3\log^2 d)  d) \geq 1 - 2  \exp( - c_4  d \log^2 d),
    \end{equation}
    where $c_4$ is an absolute constant.
    By Lemma C.3 in \citep{bombari2023stability} we also have that for any $m \in [M]$, we jointly have
    \begin{equation}\label{eq:normsnet}
        \norm{\varphi(x^\epsilon_m)}_2 = O(\sqrt p), \qquad \norm{\tilde \varphi(x^\epsilon_m)}_2 = O(\sqrt p),
    \end{equation}
    with probability at least $1 - 2 \exp(-c_5 p)$. Taking a union bound for all $m \in [M]$, (\ref{eq:normsnet}) holds uniformly over the net with probability at least $1 - 2 \exp(-c_6 p)$ due to Assumption \ref{ass:scalings}. Since $\epsilon = 1 / d^2$, for any $\hat x$, there exists $m$ such that $\norm{\hat x - x^\epsilon_m}_2 \leq 1 / d^{3/2}$. This implies
    \begin{equation}\label{eq:normsup}
    \begin{aligned}
\sup_{\hat x \in \sqrt d \, \mathbb S^{d - 1}}         \norm{\varphi(\hat x)}_2 &\leq \max_{m\in [M]}\norm{\varphi(x^\epsilon_m)}_2 + \sup_{\hat x \in \sqrt d \, \mathbb S^{d - 1}} \norm{\varphi(\hat x) - \varphi(x^\epsilon_m)}_2 \\
        &\leq \max_{m\in [M]}\norm{\varphi(x^\epsilon_m)}_2 + L \opnorm{V} \sup_{\hat x \in \sqrt d \, \mathbb S^{d - 1}} \norm{\hat x - x^\epsilon_m}_2 \\
        &= O\left(\sqrt p + L \sqrt{\frac{p}{d}} \frac{1}{d^{3/2}}\right) = O(\sqrt p),
    \end{aligned}
    \end{equation}
    where the third step follows from $\opnorm{V} = O(\sqrt{p / d})$, which holds with probability at least $1 - 2 \exp(-c_7 p)$ due to Theorem 4.4.5 in \citep{vershynin2018high}.
    Then, we have
    \begin{equation}\label{eq:fromnettopoint1}
    \begin{aligned}
     \sup_{\hat x \in \sqrt d \, \mathbb S^{d - 1}}    \left| \tilde \varphi(x^\epsilon_m)^\top \varphi(x^\epsilon_m) - \tilde \varphi(\hat x)^\top \varphi(\hat x) \right| &\leq \sup_{\hat x \in \sqrt d \, \mathbb S^{d - 1}} \left| \tilde \varphi(x^\epsilon_m)^\top \varphi(x^\epsilon_m) - \tilde \varphi(x^\epsilon_m)^\top \varphi(\hat x) \right|  \\
        &\qquad +\sup_{\hat x \in \sqrt d \, \mathbb S^{d - 1}} \left| \tilde \varphi(x^\epsilon_m)^\top \varphi(\hat x) - \tilde \varphi(\hat x)^\top \varphi(\hat x) \right| \\
        & \leq \max_{m\in [M]}\norm{\tilde \varphi(x^\epsilon_m)}_2 L \opnorm{V}\sup_{\hat x \in \sqrt d \, \mathbb S^{d - 1}}  \norm{x^\epsilon_m - \hat x}_2  \\
        &\qquad +\sup_{\hat x \in \sqrt d \, \mathbb S^{d - 1}} \norm{\varphi(\hat x)}_2 \tilde L \opnorm{V} \sup_{\hat x \in \sqrt d \, \mathbb S^{d - 1}} \norm{x^\epsilon_m - \hat x}_2 \\
        & = O \left( \sqrt p \, \frac{\sqrt p}{\sqrt d} \frac{1}{d^{3/2}} \right) = O \left( \frac{p}{d^2} \right).
    \end{aligned}
    \end{equation}
    Thus, merging (\ref{eq:concentrationonnet1}), (\ref{eq:concnet1prob}) and (\ref{eq:fromnettopoint1}) yields
    \begin{equation}
      \sup_{\hat x \in \sqrt d \, \mathbb S^{d - 1}}   \left| \tilde \varphi(\hat x)^\top \varphi(\hat x) - \tilde \mu^2 p \right| = O \left( \sqrt{pd} \log d + \frac{p}{d^2} \right),
    \end{equation}
    with probability at least $1 - 2  \exp( - c_8  d \log^2 d)$ over $V$, which proves (\ref{eq:lemma1eq3}).

    Let us now consider a fixed $i \in [n]$ and $m \in [M]$. We have
    \begin{equation}\label{eq:forpoint4}
    \begin{aligned}
        \tilde \varphi(x_i)^\top \varphi(x^\epsilon_m) - \tilde \varphi(x_i)^\top \tilde \varphi(x^\epsilon_m) =& \sum_{k = 1}^p \left( \tilde \phi(v_k^\top x_i) \phi(v_k^\top x^\epsilon_m) - \E_{v_k} \left[ \tilde \phi(v_k^\top x_i) \phi(v_k^\top x^\epsilon_m) \right] \right) \\
        & - \sum_{k = 1}^p \left( \tilde \phi(v_k^\top x_i) \tilde \phi(v_k^\top x^\epsilon_m) - \E_{v_k} \left[ \tilde \phi(v_k^\top x_i) \tilde \phi(v_k^\top x^\epsilon_m) \right] \right) \\
        & + p \,  \E_v \left[ \tilde \phi(v^\top x_i) \phi(v^\top x^\epsilon_m) \right] - p \, \E_v \left[ \tilde \phi(v^\top x_i) \tilde \phi(v^\top x^\epsilon_m) \right].
    \end{aligned}
    \end{equation}
    The last line is equal to 0, due to the Hermite coefficients of $\phi$ and $\tilde \phi$. The first two lines of the RHS can be bounded separately via Bernstein inequality as in (\ref{eq:bernestein1}), giving
    \begin{equation}
        \left| \tilde \varphi(x_i)^\top \varphi(x^\epsilon_m) - \tilde \varphi(x_i)^\top \tilde \varphi(x^\epsilon_m) \right| = O(\sqrt p \, t),
    \end{equation}
    with probability at least $1 - 2 \exp (-c_9 t^2)$ over $V$, for any $0 < t < p$. As done in (\ref{eq:concnet1prob}), considering again $\epsilon = 1 / d^2$, we have that the previous bound holds uniformly for every $i \in [n]$ and for every $m \in [M]$, after setting $t = \sqrt d \log d$, with probability at least $1 - 2 \exp (-c_{10} d \log^2 d)$. Then, a similar argument as the one used in (\ref{eq:fromnettopoint1}) yields
    \begin{equation}
   \sup_{\hat x \in \sqrt d \, \mathbb S^{d - 1}}      \left| \tilde \varphi(x_i)^\top \varphi(\hat x) - \tilde \varphi(x_i)^\top \tilde \varphi(\hat x) \right| = O\left(\sqrt {pd} \log d + \frac{p}{d^2}\right),
    \end{equation}
    with probability at least $1 - 2 \exp (-c_{11} d \log^2 d)$. This proves (\ref{eq:lemma1eq4}). The proof of  (\ref{eq:lemma1eq5}) is analogous and the argument is complete.
\end{proof}

\begin{lemma}\label{lemma:ai}
    Let $\hat x$ be a generic row of $\hat X$ such that $\mathcal L(\hat X) = 0$ and $a \in \R^n$ be defined according to (\ref{eq:definitiona}). Then, we have that, for any $i \in [n]$,
    \begin{equation}
        \left| \tilde \varphi(x_i)^\top \tilde \varphi(\hat x) - \tilde \mu^2 p a_i \right| = O \left( \left( \norm{a}_2 + 1 \right) \left( \sqrt{pd} \log d +  \sqrt{p n} \log d + p \frac{\sqrt n \log^3 d}{d^{3/2}}  \right) \right),
    \end{equation}
    with probability at least $1 - 2 \exp (-c \log^2 d)$ over $V$ and $X$.
\end{lemma}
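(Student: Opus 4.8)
The plan is to expand the inner product $\tilde\varphi(x_i)^\top\varphi(\hat x)$ using the representation (\ref{eq:definitiona}) and isolate the diagonal term:
\begin{equation*}
\tilde\varphi(x_i)^\top\varphi(\hat x)=a_i\,\tilde\varphi(x_i)^\top\varphi(x_i)+\sum_{j\neq i}a_j\,\tilde\varphi(x_i)^\top\varphi(x_j).
\end{equation*}
By Lemma \ref{lemma:facts}, Eq.\ (\ref{eq:lemma1eq1}) with $t=\sqrt d\log d$ (and a union bound over $i\in[n]$, which costs only $2n\exp(-cd\log^2 d)$), the diagonal term equals $\tilde\mu^2 p\,a_i+O(|a_i|\sqrt{pd}\log d)$ with $|a_i|\le\norm a_2$. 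At the very end I will convert $\tilde\varphi(x_i)^\top\varphi(\hat x)$ on the left-hand side into $\tilde\varphi(x_i)^\top\tilde\varphi(\hat x)$ via Eq.\ (\ref{eq:lemma1eq4}), which adds only an $O(\sqrt{pd}\log d+p/d^2)$ error, independent of $a$ --- this is what produces the ``$+1$'' inside the $(\norm a_2+1)$ prefactor. Thus the whole statement reduces to bounding the off-diagonal sum $S_i:=\sum_{j\neq i}a_j\,\tilde\varphi(x_i)^\top\varphi(x_j)$.

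To control $S_i$ I would reuse the decomposition from the proof of Lemma \ref{lemma:facts}: $\tilde\varphi(x_i)^\top\varphi(x_j)=\sum_k\big(\tilde\phi(v_k^\top x_i)\phi(v_k^\top x_j)-\E_{v_k}[\,\tilde\phi(v_k^\top x_i)\phi(v_k^\top x_j)\,]\big)+p\sum_{l\ge3}\mu_l^2(x_i^\top x_j)^l/d^l$. The deterministic part contributes $\sum_{j\neq i}a_j\,p\sum_{l\ge3}\mu_l^2(x_i^\top x_j)^l/d^l$, which on the event $\max_{i\neq j}|x_i^\top x_j|\le\sqrt d\log d$ (Eq.\ (\ref{eq:dataisorthogonal})) is at most $\norm a_1\,\tilde\mu^2 p\log^3 d/d^{3/2}\le\sqrt n\,\norm a_2\cdot O(p\log^3 d/d^{3/2})$, matching the third term of the bound. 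The random part is $W_i:=\sum_{k=1}^p\big(\tilde\phi(v_k^\top x_i)h_k-\E_{v_k}[\tilde\phi(v_k^\top x_i)h_k]\big)$, where $h_k:=\sum_{j\neq i}a_j\phi(v_k^\top x_j)$. Since $v_k\sim\mathcal N(0,I/d)$ and $\opnorm X=O(\sqrt d)$ holds with overwhelming probability (standard for sub-Gaussian rows when $n=O(d)$), the linear map $v\mapsto Xv$ is $O(1)$-Lipschitz; hence $v_k\mapsto h_k$ is $O(\norm a_2)$-Lipschitz, and, since $\E[h_k]=0$ because $\mu_0=0$, Gaussian Lipschitz concentration gives $\subGnorm{h_k}=O(\norm a_2)$. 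As $\tilde\phi$ is Lipschitz, $\tilde\phi(v_k^\top x_i)$ is $O(1)$-sub-Gaussian, so each summand of $W_i$ is a centered sub-exponential variable with $\subEnorm{\cdot}=O(\norm a_2)$, and the summands are independent over $k$; Bernstein's inequality then yields $|W_i|=O(\norm a_2\,t\sqrt p)$ with probability $1-2\exp(-ct^2)$ for any fixed coefficient direction.

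The main obstacle is that $a$ is not fixed: it is determined by $\hat x$ through $a=(\Phi\Phi^\top)^{-1}\Phi\varphi(\hat x)$, so the bound on $W_i$ must hold simultaneously over all feasible $a$. I would address this with an $\eps$-net of the sphere $\mathbb S^{n-1}$, of cardinality $(3/\eps)^n$, applied to the direction $a/\norm a_2$, together with a continuity estimate: between two unit directions at distance $\le\eps$, $|W_i|$ changes by at most $\eps\cdot O(\sqrt d)\cdot\sum_k|\tilde\phi(v_k^\top x_i)|=O(\eps\,p\sqrt d)$, on the overwhelmingly likely event that $\max_{k}\norm{\phi(Xv_k)}_2=O(\sqrt d)$ and $\sum_k|v_k^\top x_i|=O(p)$. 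Taking $\eps$ polynomially small in $d$ and $p$ makes this fluctuation negligible, while choosing $t\asymp\sqrt n\log d$ makes the union bound over the net at most $(3/\eps)^n\exp(-ct^2)\le\exp(-c'\log^2 d)$, so that $|W_i|=O(\norm a_2\sqrt{pn}\log d)$ uniformly. Collecting the diagonal term $O(\norm a_2\sqrt{pd}\log d)$, the bias term $O(\sqrt n\,\norm a_2\,p\log^3 d/d^{3/2})$, the random term $O(\norm a_2\sqrt{pn}\log d)$, and the $\varphi\to\tilde\varphi$ correction $O(\sqrt{pd}\log d+p/d^2)$, and taking a final union bound over $i\in[n]$, gives the claimed estimate, with all failure probabilities dominated by $2\exp(-c\log^2 d)$. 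A secondary point to verify carefully is the $O(1)$-Lipschitzness of $v\mapsto Xv$, where Assumption \ref{ass:scalings} ($n=O(d)$) and the sub-Gaussianity of the data (Assumption \ref{ass:data}) are used.
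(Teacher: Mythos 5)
Your proof is correct in substance but takes a genuinely different and more involved route than the paper. The paper's off-diagonal bound is just Cauchy--Schwarz applied to the coefficient vector: after writing $\tilde\varphi(x_i)^\top\varphi(\hat x) = a_i\,\tilde\varphi(x_i)^\top\varphi(x_i) + \sum_{j\neq i}a_j\,\tilde\varphi(x_i)^\top\varphi(x_j)$, it bounds
\begin{equation*}
\Bigl|\sum_{j\neq i}a_j\,\tilde\varphi(x_i)^\top\varphi(x_j)\Bigr|\;\le\;\norm{a}_2\,\sqrt{\sum_{j\neq i}\bigl(\tilde\varphi(x_i)^\top\varphi(x_j)\bigr)^2},
\end{equation*}
and then plugs in the per-entry bound $|\tilde\varphi(x_i)^\top\varphi(x_j)| = O(\sqrt p\log d + p\log^3 d/d^{3/2})$ from Eq.\ (\ref{eq:lemma1eq2}). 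Since the quantities $\tilde\varphi(x_i)^\top\varphi(x_j)$ depend only on $V$ and $X$, the resulting bound holds for \emph{all} $a$ simultaneously, so the uniformity-over-$\hat x$ issue that you (correctly) identified as the main obstacle never arises at all. You instead center the sum, treat it as a random quadratic form in $a$, prove sub-exponential concentration for fixed $a$ via Gaussian Lipschitz concentration on $h_k$ plus Bernstein, and then run an $\eps$-net over $\mathbb{S}^{n-1}$ to restore uniformity. Both yield the same final bound, but the Cauchy--Schwarz route buys simplicity and avoids the net entirely, while your route would also work at the cost of extra bookkeeping.

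Two small points to tighten. First, you say ``$v\mapsto Xv$ is $O(1)$-Lipschitz,'' but $\opnorm{X}=O(\sqrt d)$, so this map is $O(\sqrt d)$-Lipschitz in the Euclidean metric; the conclusion $\subGnorm{h_k}=O(\norm a_2)$ is still correct because the relevant Gaussian concentration is with respect to $u_k:=\sqrt d\,v_k\sim\mathcal N(0,I_d)$, and $u\mapsto X u/\sqrt d$ is $O(1)$-Lipschitz. Second, $\eps$ must be chosen independent of $p$ (say $\eps=d^{-3}$), since Assumption \ref{ass:scalings} imposes no upper bound on $p$ and hence $(3/\eps)^n\exp(-c n\log^2 d)$ would not be controlled otherwise; with $\eps=d^{-3}$, the residual fluctuation $\eps\, p\sqrt d = p/d^{5/2}$ is dominated by the third term $p\sqrt n\log^3 d/d^{3/2}$ of the target bound, so this choice suffices.
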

\begin{proof}
    For any $i \in [n]$, (\ref{eq:definitiona}) implies
    \begin{equation}
    \begin{aligned}
        \tilde \varphi(x_i)^\top \varphi(\hat x) &= \tilde \varphi(x_i)^\top \Phi^\top a = \sum_{j = 1}^n \tilde \varphi(x_i)^\top \varphi(x_j) a_j \\
        &= \tilde \mu^2 p a_i + \left( \tilde \varphi(x_i)^\top \varphi(x_i) a_i - \tilde \mu^2 p a_i \right) + \sum_{j \neq i} \tilde \varphi(x_i)^\top \varphi(x_j) a_j.
    \end{aligned}
    \end{equation}
    This implies 
    \begin{equation}\label{eq:bigOwitha}
    \begin{aligned}
        \left| \tilde \varphi(x_i)^\top \varphi(\hat x) - \tilde \mu^2 p a_i \right| &\leq \left| \tilde \varphi(x_i)^\top \varphi(x_i) - \tilde \mu^2 p \right| |a_i| + \sqrt{\sum_{j \neq i} \left( \tilde \varphi(x_i)^\top \varphi(x_j) \right)^2} \norm{a}_2 \\
        &=O \left( \norm{a}_2 \left( \sqrt{p} \log d + \sqrt{n \left( p \log^2 d  + p^2 \frac{\log^6 d}{d^3} \right)} \right) \right) \\
        &= O \left( \norm{a}_2 \left( \sqrt{p} \log d +  \sqrt{p n} \log d + p \frac{\sqrt n \log^3 d}{d^{3/2}}  \right) \right),
    \end{aligned}
    \end{equation}
    with probability at least $1 - 2 \exp \left( -c_1 \log^2 d \right)$ over $V$ and $X$, where the second step holds due to the first two equations in the statement of Lemma \ref{lemma:facts}. 
    Then, due to the fourth equation in the statement of Lemma \ref{lemma:facts}, we have
    \begin{equation}
        \left| \tilde \varphi(x_i)^\top \varphi(\hat x) - \tilde \varphi(x_i)^\top \tilde \varphi(\hat x) \right| = O\left(\sqrt {pd} \log d + \frac{p}{d^2}\right),
    \end{equation}
    which, together with (\ref{eq:bigOwitha}), gives the desired result.
\end{proof}

\begin{lemma}\label{lemma:norma}
We have that
\begin{equation}
\left| \norm{a}_2^2 - 1 \right| = O \left( \sqrt{\frac{dn}{p}} \log d + \frac{\log^3 d }{\sqrt d} \right),
\end{equation}
with probability at least $1 - 2 \exp (-c \log^2 d)$.
\end{lemma}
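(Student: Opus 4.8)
\emph{Proof proposal.} The plan is to pin down $\|a\|_2^2$ by computing the scalar $\tilde\varphi(\hat x)^\top\varphi(\hat x)$ in two ways and matching them. Throughout, $\hat x$ is a generic row of $\hat X$ with $\mathcal L(\hat X)=0$; this is an adversarial choice depending on $V,X$, so I will only use bounds that hold \emph{uniformly} over $\hat x\in\sqrt d\,\mathbb S^{d-1}$, which is precisely what parts (\ref{eq:lemma1eq3})--(\ref{eq:lemma1eq5}) of Lemma~\ref{lemma:facts} (and Lemma~\ref{lemma:ai}) provide. Recall $\varphi(\hat x)=\Phi^\top a=\sum_{i=1}^n a_i\varphi(x_i)$ from (\ref{eq:definitiona}).

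First I would show that, uniformly in $\hat x$,
\[
\tilde\varphi(\hat x)^\top\varphi(x_i)=\tilde\mu^2 p\,a_i+r_i,\qquad |r_i|\le C(\|a\|_2+1)\,E',
\]
with $E':=\sqrt{pd}\log d+\sqrt{pn}\log d+p\sqrt n\log^3 d/d^{3/2}+p/d^2$. Indeed, $\tilde\varphi(\hat x)^\top\varphi(x_i)=\varphi(x_i)^\top\tilde\varphi(\hat x)$, and by the triangle inequality this is within $O(\sqrt{pd}\log d+p/d^2)$ of $\tilde\varphi(x_i)^\top\tilde\varphi(\hat x)$ (by (\ref{eq:lemma1eq5})), which in turn is within $C(\|a\|_2+1)(\sqrt{pd}\log d+\sqrt{pn}\log d+p\sqrt n\log^3 d/d^{3/2})$ of $\tilde\mu^2 p\,a_i$ (by Lemma~\ref{lemma:ai}). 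Taking the inner product of (\ref{eq:definitiona}) with $\tilde\varphi(\hat x)$ then gives
\[
\tilde\varphi(\hat x)^\top\varphi(\hat x)=\sum_{i=1}^n a_i\big(\tilde\mu^2 p\,a_i+r_i\big)=\tilde\mu^2 p\,\|a\|_2^2+\sum_{i=1}^n a_i r_i,
\]
and Cauchy--Schwarz bounds $|\sum_i a_i r_i|\le\|a\|_2\sqrt n\max_i|r_i|\le C\sqrt n(\|a\|_2^2+\|a\|_2)E'$. On the other hand, (\ref{eq:lemma1eq3}) gives $|\tilde\varphi(\hat x)^\top\varphi(\hat x)-\tilde\mu^2 p|=O(\sqrt{pd}\log d+p/d^2)$. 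Subtracting and dividing by $\tilde\mu^2 p=\Theta(p)$ (the constant $\tilde\mu^2=\sum_{l\ge3}\mu_l^2>0$ by Assumption~\ref{ass:activation}) yields
\[
\big|\|a\|_2^2-1\big|\le C'\big(\|a\|_2^2+\|a\|_2+1\big)\,\delta,\qquad \delta:=\frac{\sqrt n\,E'+\sqrt{pd}\log d+p/d^2}{p}.
\]

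It then remains to simplify $\delta$ and to invert this self-referential inequality. Using $n=O(d)$ one checks term by term that $\delta=O\big(\sqrt{dn/p}\log d+\log^3 d/\sqrt d\big)$: for instance $\sqrt n\cdot\sqrt{pn}\log d/p=n\log d/\sqrt p\le\sqrt{dn/p}\log d$ because $n\le d$, and $\sqrt n\cdot p\sqrt n\log^3 d/(d^{3/2}p)=(n/d)\log^3 d/\sqrt d=O(\log^3 d/\sqrt d)$, while $\sqrt{pd}\log d/p=\sqrt{d/p}\log d\le\sqrt{dn/p}\log d$ and $(p/d^2)/p=d^{-2}=O(\log^3 d/\sqrt d)$. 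By Assumption~\ref{ass:scalings} ($p=\omega(nd\log^2 d)$) this $\delta$ is $o(1)$. Writing $s=\|a\|_2^2$ and using $\sqrt s\le(s+1)/2$, the bound reads $|s-1|\le 2C'(s+1)\delta$; for $\delta$ small this gives $s\le(1+2C'\delta)/(1-2C'\delta)$ when $s\ge1$ and $s\ge1-4C'\delta$ when $s<1$, so in both cases $|s-1|=O(\delta)$, which is the claim. The probability is $1-2e^{-c\log^2 d}$, inherited from Lemma~\ref{lemma:ai} (the uniform parts of Lemma~\ref{lemma:facts} hold with the even larger probability $1-2e^{-cd\log^2 d}$).

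The main obstacle is the bookkeeping in the displayed bound: tracking which of the several contributions in $E'$ and in Lemma~\ref{lemma:facts} dominates and confirming that, even after being multiplied by the crude factor $\|a\|_2^2+\|a\|_2+1$, they all collapse to exactly the two advertised terms $\sqrt{dn/p}\log d$ and $\log^3 d/\sqrt d$ under $n=O(d)$. The closing self-bounding step is routine once $\delta=o(1)$ is in hand.
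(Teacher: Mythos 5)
Your proof is correct and follows essentially the same route as the paper: both expand $\tilde\varphi(\hat x)^\top\varphi(\hat x)$ via (\ref{eq:definitiona}) and control the cross terms using Lemma~\ref{lemma:facts}, parts (\ref{eq:lemma1eq3}) and (\ref{eq:lemma1eq5}), together with Lemma~\ref{lemma:ai}, arriving at the same self-referential bound $\tilde\mu^2 p\,|\norm{a}_2^2-1| = O\left((\norm{a}_2^2+1)(\sqrt{pdn}\log d + p\log^3 d/\sqrt d)\right)$. The only (cosmetic) difference is in the closing step: the paper first shows $\norm{a}_2 = O(1)$ by contradiction and then substitutes back, whereas you invert the quadratic inequality in $s=\norm{a}_2^2$ directly---the two routes are logically equivalent.
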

\begin{proof}
    (\ref{eq:definitiona}) directly implies
    \begin{equation}\label{eq:starta}
        \tilde \varphi(\hat x)^\top \varphi(\hat x) = \tilde \varphi(\hat x)^\top \Phi^\top a_i = \sum_{j = 1}^n \tilde \varphi(\hat x)^\top \varphi(x_j ) a_j.
    \end{equation}
    Due to the third and fifth equations in the statement of Lemma \ref{lemma:facts}, we respectively have that
    \begin{equation}\label{eq:forfinaltrianga}
        \left| \tilde \varphi(\hat x)^\top \varphi(\hat x) - \tilde \mu^2 p \right| = O \left( \sqrt{pd} \log d + \frac{p}{d^2} \right),
    \end{equation}
    \begin{equation}\label{eq:forCSa1}
        \left| \varphi(x_i)^\top \tilde \varphi(\hat x) - \tilde \varphi(x_i)^\top \tilde \varphi(\hat x) \right| = O\left(\sqrt {pd} \log d + \frac{p}{d^2}\right),
    \end{equation}
    with probability at least $1 - 2 \exp (-c_1 d \log^2 d)$.    
    Then, by Cauchy-Schwartz inequality, (\ref{eq:forCSa1}) yields
    \begin{equation}\label{eq:CSa1}
        \left| \sum_{i = 1}^n \tilde \varphi(\hat x)^\top \varphi(x_i ) a_i - \sum_{i = 1}^n \tilde \varphi(\hat x)^\top \tilde \varphi(x_i ) a_i\right| = O \left( \norm{a}_2 \left( \sqrt{pdn} \log d + \frac{p \sqrt n}{d^2} \right) \right).
    \end{equation}
    Again by Cauchy-Schwartz inequality, we have
    \begin{equation}\label{eq:CSa2}
    \begin{aligned}
        & \left| \sum_{i = 1}^n a_i \left( \tilde \varphi(\hat x)^\top \tilde \varphi(x_i ) - \tilde \mu^2 p a_i \right)\right| = \\
        & = O \left( \norm{a}_2  \left( \norm{a}_2 + 1 \right)  \left( \sqrt{p n d} \log d +  \sqrt{p} n \log d + p n \frac{\log^3 d}{d^{3/2}}  \right) \right) \\
        & = O \left( \norm{a}_2  \left( \norm{a}_2 + 1 \right)  \left( \sqrt{p n d} \log d +  p \frac{\log^3 d}{\sqrt d}  \right) \right),
    \end{aligned}
    \end{equation}
    with probability at least $1 - 2 \exp (-c_2 \log^2 d)$, due to Lemma \ref{lemma:ai}, where the last step is a consequence of Assumption \ref{ass:scalings}.
    Then, merging (\ref{eq:starta}), (\ref{eq:forfinaltrianga}), (\ref{eq:CSa1}) and (\ref{eq:CSa2}), an application of the triangle inequality yields
    \begin{equation}
        \tilde \mu^2 p \left| \norm{a}_2^2 - 1 \right| = O \left( \left( \norm{a}_2^2 + 1 \right)  \left( \sqrt{pdn} \log d + p \frac{\log^3 d }{\sqrt d} \right) \right).
    \end{equation}

    Notice that, if $\norm{a}_2 = \omega(1)$, the LHS of the previous equation would be $\Omega(p \norm{a}_2^2)$, while the RHS would be $o(p \norm{a}_2^2)$ due to Assumption \ref{ass:scalings}. Then, we necessarily have that $\norm{a}_2 = O(1)$, which yields
    \begin{equation}
        \left| \norm{a}_2^2 - 1 \right| = O \left( \sqrt{\frac{dn}{p}} \log d + \frac{\log^3 d }{\sqrt d} \right),
    \end{equation}
    with probability at least $1 - 2 \exp (-c_3 \log^2 d)$, which gives the desired result.
\end{proof}

\begin{lemma}\label{lemma:C}
    Let $\hat x \in \sqrt d \, \mathbb S^{d-1}$ a generic vector, and let $0 \leq C \leq 1$ be defined as
    \begin{equation}\label{eq:defClemma}
        C = \max_i \left| \frac{x_i^\top \hat x}{d} \right|.
    \end{equation}
    Then, with probability at least $1 - 2 \exp \left( -c \sqrt d \right)$ over $X$, we have that
    \begin{equation}
        \norm{\frac{(X \hat x)^{\circ l}}{d^l}}_2 \leq C^{l - 1} + O \left( d^{-0.1} \right),
    \end{equation}
    uniformly for every $l \geq 2$.
\end{lemma}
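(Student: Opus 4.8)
\emph{Proof plan.} The plan is to reduce the statement to the $n$ scalar products $z_i := x_i^\top \hat x / \sqrt d$. Since $\hat x$ is a fixed vector with $\norm{\hat x}_2 = \sqrt d$, the vector $u := \hat x / \sqrt d$ is a unit vector, so the $z_i = u^\top x_i$ are independent (over the randomness of $X$) with $\subGnorm{z_i} \leq \subGnorm{x_i} = O(1)$ by Assumption \ref{ass:data}. Setting $C_i := |x_i^\top \hat x| / d = |z_i| / \sqrt d$, the $i$-th coordinate of $(X\hat x)^{\circ l}/d^l$ equals $\pm C_i^{\,l}$, so that
\[
    \norm{\frac{(X\hat x)^{\circ l}}{d^l}}_2^2 = \sum_{i=1}^n C_i^{2l}, \qquad C = \max_{i\in[n]} C_i,
\]
and moreover $0 \leq C_i \leq C \leq 1$ by Cauchy--Schwarz, since $|x_i^\top\hat x|\leq\norm{x_i}_2\norm{\hat x}_2 = d$.

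The key elementary observation is that, for every $l \geq 2$,
\[
    \sum_{i=1}^n C_i^{2l} \;\leq\; C^{2l} + \sum_{i\,:\,C_i<C} C_i^{4} \;\leq\; C^{2(l-1)} + \sum_{i=1}^n C_i^{4},
\]
using $C^{2l}\leq C^{2(l-1)}$ for the single maximal term (as $C\leq 1$) and $C_i^{2l}\leq C_i^{4}$ for the remaining terms (as $C_i\leq 1$ and $2l\geq 4$). Taking square roots and applying $\sqrt{a+b}\leq\sqrt a+\sqrt b$ gives $\norm{(X\hat x)^{\circ l}/d^l}_2 \leq C^{l-1} + (\sum_i C_i^4)^{1/2}$. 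Crucially, the second term does not depend on $l$, so the whole lemma — uniformly over $l\geq 2$ — follows once we establish $\sum_{i=1}^n C_i^4 = O(d^{-0.2})$ with probability at least $1 - 2\exp(-c\sqrt d)$.

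To prove this, write $\sum_i C_i^4 = d^{-2}\sum_i z_i^4$. The variables $z_i^4$ are i.i.d., non-negative, with $\E[z_i^4] = O(1)$ and $\norm{z_i^4}_{\psi_{1/2}} = \subGnorm{z_i}^4 = O(1)$ (using $\norm{|Z|^\gamma}_{\psi_{\alpha/\gamma}} = \norm{Z}_{\psi_\alpha}^\gamma$). Since these tails are far heavier than sub-exponential, I would not apply Bernstein directly, but first truncate. On the event $\mathcal E := \{\max_i |z_i|\leq d^{1/4}\}$, which holds with probability at least $1-2n\exp(-c'\sqrt d)\geq 1-2\exp(-c''\sqrt d)$ by a union bound over the sub-Gaussian tails of the $z_i$ and $n=O(d)$, one has $z_i^4 = z_i^4\wedge d$ for every $i$. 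The truncated variables $Y_i := z_i^4\wedge d \in [0,d]$ are i.i.d.\ with $\E[Y_i] \leq \E[z_i^4] = O(1)$, so a Hoeffding bound gives $\sum_i Y_i \leq n\E[Y_i] + d^{1.8} = O(d) + d^{1.8} = O(d^{1.8})$ except with probability $\exp(-\Omega(d^{0.6}))\leq\exp(-c\sqrt d)$ (the Hoeffding exponent being of order $d^{3.6}/(nd^2)=\Omega(d^{0.6})$ since $n=O(d)$). As $\sum_i z_i^4 = \sum_i Y_i$ on $\mathcal E$, a union bound over $\mathcal E^c$ and the Hoeffding event yields $\sum_i z_i^4 = O(d^{1.8})$, hence $\sum_i C_i^4 = O(d^{-0.2})$, with the required probability.

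The only non-routine ingredient is this last concentration bound: because $z_i^4$ has sub-exponential norm much larger than its standard deviation, the vanilla sub-exponential Bernstein inequality does not apply, and the truncation step (equivalently, a heavy-tailed Bernstein bound in the spirit of the one used in the proof of Theorem \ref{thm:reconveralln2}) is what produces the $\exp(-c\sqrt d)$ failure probability. Even there, the exponents are not delicate, as $\E[\sum_i z_i^4] = O(d)$ lies well below the target level $d^{1.8}$.
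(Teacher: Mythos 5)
Your reduction $\norm{(X\hat x)^{\circ l}/d^l}_2 \le C^{l-1} + (\sum_i C_i^4)^{1/2}$ is clean and removes the $l$-dependence, which is nice. Unfortunately the argument has a gap that is not just cosmetic: you treat $\hat x$ as a \emph{fixed} vector and compute probability over $X$ with $\hat x$ held deterministic, so what you prove is a pointwise statement. But the Lemma is invoked in the proof of Theorem~\ref{thm:reconveronelargen} at the reconstruction point $\hat x$ (more precisely, at a net point $x^\epsilon_m$ chosen to be within $1/\sqrt d$ of it), and this $\hat x$ is allowed to depend on $X$ through the constraint $\varphi(x_i) \in \Span\{\Rows(\hat\Phi)\}$. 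So the conclusion must hold simultaneously for all $\hat x \in \sqrt d\,\mathbb S^{d-1}$ (or at least for all net points, of which there are $\exp(\Theta(d\log d))$), and a pointwise failure probability of $\exp(-c\sqrt d)$ (or even the $\exp(-cd^{0.6})$ your Hoeffding step actually gives) cannot survive that union bound.

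The problem is more fundamental than the union bound, though: your central claim $\sum_{i=1}^n C_i^4 = O(d^{-0.2})$ is simply false for data-dependent $\hat x$. Take $\hat x = x_1$; then $C_1 = 1$ and $\sum_i C_i^4 \ge 1$. The inequality $\sum_i C_i^{2l} \le C^{2(l-1)} + \sum_i C_i^4$ re-introduces the maximal term $C^4$ into the error sum, which is fine only when $C$ is already $o(1)$ — i.e., in the fixed-$\hat x$ setting, where every $C_i = O(\sqrt{\log d}/\sqrt d)$ with high probability and the Lemma is essentially content-free. The interesting regime of the Lemma is precisely when $C$ is close to $1$, and then your error term blows up. (A smaller issue: you write $\sum_{i:C_i < C} C_i^4$, which mishandles ties at the maximum; you should isolate a single $i^*\in\argmax$ and bound $\sum_{i\ne i^*}C_i^{2l}$.) The paper's proof sidesteps both issues simultaneously: it thresholds at $\delta$, bounds the contribution of \emph{all} large indices at once via $C^{2(l-1)}\norm{X_\delta\hat x/d}_2^2 \le C^{2(l-1)}\opnorm{X_\delta/\sqrt d}^2$, and controls $\opnorm{X_\delta}$ uniformly over $\hat x$ by a union bound over the at most $\exp(O(\log d/\delta^2))$ row-subsets of bounded size — a much smaller union than a net over the sphere. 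That is the ingredient your proof is missing.
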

\begin{proof}
    Fix $0 \leq \delta \leq 1$, and consider the values of $i \in [n]$ such that
    \begin{equation}\label{eq:delta}
        \left|\frac{x_i^\top \hat x}{d} \right| > \delta.
    \end{equation}
    Define $M \subseteq [n]$ as the set containing all the values of $i$ that satisfy the inequality above, $m = |M|$ as the cardinality of $M$, and $X_{\delta} \in \R^{m \times d}$ as the matrix that contains in its rows all and only the $x_i$-s such that $i \in M$.
    Note that    $X^\top$ is a matrix with independent sub-Gaussian columns, with fixed $\ell_2$ norm equal to $\sqrt d$. Then, Theorem 1.3 in \citep{plan2025} yields
    \begin{equation}
        \left| \opnorm{X} - \sqrt d \right| = O(\sqrt n),
    \end{equation}
    with probability at least $1 - 2 \exp(-c_1 n)$. This, due to Assumption \ref{ass:scalings}, guarantees that $\opnorm{X / \sqrt d} = O(1)$, and therefore 
    \begin{equation}
        \delta^2 m \leq \norm{\frac{X \hat x}{d}}_2^2 \leq \opnorm{\frac{X}{\sqrt d}}^2 \norm{\frac{\hat x}{\sqrt d}}_2^2 = O(1),
    \end{equation}
    which implies
    \begin{equation}\label{eq:upperboundmdelta}
        m = O \left( \frac{1}{\delta^2} \right).
    \end{equation}

    Consider all the possible subsets of size $m$ of $[n]$. There are in total
    \begin{equation}
        \binom{n}{m} \leq n^m \leq \exp \left( m \log n \right) \leq \exp \left( C_1 \frac{\log d}{\delta^2} \right)
    \end{equation}
    such subsets, where $C$ is an absolute constant, and where we used Assumption \ref{ass:scalings}. For each of these subsets, the operator norm of the matrix $X_s \in \R^{m \times d}$ with rows indexed by $M$ is
    \begin{equation}\label{eq:opnormXs}
        \left| \opnorm{X_s} - \sqrt d \right| \leq C_2 \left( \sqrt m + t \right),
    \end{equation}
    with probability $1 - 2 \exp(-c_2 t^2)$, again due to Theorem 1.3 in \citep{plan2025}. Then, performing a union bound over all subsets and setting $t = \sqrt{2 C_1 / c_2} \frac{\sqrt{\log d}}{\delta}$, we have that all matrices with rows belonging to a generic subset of size $m$ of the rows of $X$ respect (\ref{eq:opnormXs}) with probability at least $1 - 2 \exp \left(- C_1 \frac{\log d}{\delta^2} \right)$. In particular, with this probability, we also have
    \begin{equation}\label{eq:opnormXdelta}
        \left| \opnorm{X_{\delta}} - \sqrt d \right| \leq C_2 \left( \sqrt m + t \right) = O \left( \frac{\sqrt{\log d}}{\delta} \right),
    \end{equation}
    where in the second step we used (\ref{eq:upperboundmdelta}).
    Then, we have that
    \begin{equation}\label{eq:beforesettingdelta}
        \norm{\frac{(X \hat x)^{\circ l}}{d^l}}_2^2 \leq \norm{\frac{(X_\delta \hat x)^{\circ l}}{d^l}}_2^2 + (n - m) \delta^{2l} \leq C^{2l - 2} \norm{\frac{X_\delta \hat x}{d}}_2^2 + n \delta^4,
    \end{equation}
    where in the last step we used that $l \geq 2$, and the definition of $C$ in (\ref{eq:defClemma}).
    Setting $\delta = d^{- 0.3}$, Assumption \ref{ass:scalings} gives $n \delta^4 = O(d^{-0.2})$, which yields
    \begin{equation}
        \norm{\frac{X_\delta \hat x}{d}}_2 \leq \opnorm{X_\delta / \sqrt d} \leq 1 + \left| \opnorm{X_\delta / \sqrt d} - 1 \right| \leq 1 + C_3 \frac{\sqrt{\log d} + 1}{\sqrt d \, d^{-0.3}},
    \end{equation}
    where the last step holds due to (\ref{eq:opnormXdelta}) with probability at least $1 - 2 \exp \left(- C_1 \frac{\log d}{d^{- 0. 6}} \right) \geq 1 - 2 \exp \left( -c_3 \sqrt{d} \right)$. Thus, plugging in (\ref{eq:beforesettingdelta}), the thesis readily follows.
\end{proof}

\paragraph{Proof of Theorem \ref{thm:reconveronelargen}.}
As done in Lemma \ref{lemma:facts}, consider the $\sqrt d \, \epsilon$-net of the sphere $\sqrt d \, \mathbb S^{d - 1}$, with $\epsilon = 1/d$. For any element $x^\epsilon_m$ of this set (with a fixed index $m \in [M]$, where $M$ denotes the cardinality of the net), (\ref{eq:definitiona}) yields
\begin{equation}\label{eq:fromai5}
    \tilde \varphi(x^\epsilon_m)^\top \varphi(\hat x) = \tilde \varphi(x^\epsilon_m)^\top \Phi^\top a = \sum_{i = 1}^n  \tilde \varphi(x^\epsilon_m)^\top \varphi(x_i) a_i,
\end{equation}
where each term of the sum above reads
\begin{equation}\label{eq:decomposizbern5}
    \begin{aligned}
        \tilde \varphi(x^\epsilon_m)^\top \varphi(x_i) &= \sum_{k = 1}^p \left( \tilde \phi(v_k^\top x^\epsilon_m) \phi(v_k^\top x_i) - \E_{v_k} \left[ \tilde \phi(v_k^\top x^\epsilon_m) \phi(v_k^\top x_i) \right] \right) \\
        & \qquad + p \, \E_v \left[ \tilde \phi(v^\top x^\epsilon_m)^\top \phi(v^\top x_i) \right].
    \end{aligned}
\end{equation}
By Bernstein inequality (see the same argument as in (\ref{eq:bernestein1})), we have that
\begin{equation}\label{eq:labelbernsteinnet2}
    \sum_{k = 1}^p \left( \tilde \phi(v_k^\top x^\epsilon_m) \phi(v_k^\top x_i) - \E_{v_k} \left[ \tilde \phi(v_k^\top x^\epsilon_m) \phi(v_k^\top x_i) \right] \right) = O\left( \sqrt{pd} \log d \right),
\end{equation}
with probability at least $1 - 2 \exp \left( - c_1 d \log^2 d \right)$. Performing a union bound over the elements of the net, we have that (\ref{eq:labelbernsteinnet2}) holds uniformly for all $i\in [n]$ and all $m \in [M]$ with probability at least $1 - 2 \exp \left( - c_2 d \log^2 d \right)$ (see the argument prior to (\ref{eq:concnet1prob})). Furthermore, the Hermite decomposition of $\phi$ and $\tilde \phi$ gives
\begin{equation}\label{eq:hermite5}
    \E_v \left[ \tilde \phi(v^\top x^\epsilon_m) \phi(v^\top x_i) \right] = \sum_{l = 3}^{+\infty} \mu_l^2 \frac{\left( x_i^\top x^\epsilon_m \right)^l}{d^l},
\end{equation}
which thus allows us to write
\begin{equation}\label{eq:forlasttermlater5}
\begin{aligned}
    \left| \sum_{i = 1}^n  \tilde \varphi(x^\epsilon_m)^\top \varphi(x_i) a_i - p \sum_{i = 1}^n  a_i \sum_{l = 3}^{+\infty} \mu_l^2 \frac{\left( x_i^\top x^\epsilon_m \right)^l}{d^l} \right| &= O \left( \norm{a}_2 \sqrt{pdn} \log d \right) \\
    &= O \left( \sqrt{pdn} \log d \right),
\end{aligned}
\end{equation}
where the first step follows from (\ref{eq:decomposizbern5}), (\ref{eq:labelbernsteinnet2}) and (\ref{eq:hermite5}), and an application of Cauchy Schwartz inequality; the second step holds due to Lemma \ref{lemma:norma} with probability at least $1 - 2 \exp \left( - c_3 \log^2 d \right)$.

By the definition of the net, there exists $m \in [M]$ such that $\norm{x^\epsilon_m - \hat x}_2 \leq 1 / \sqrt d$. Fixing such $m$, we have
\begin{equation}\label{eq:concentrationmu5}
\begin{aligned}
    \left| \tilde \mu^2 p - p \sum_{i = 1}^n  a_i  \sum_{l = 3}^{+\infty} \mu_l^2 \frac{\left( x_i^\top x^\epsilon_m \right)^l}{d^l} \right| &\leq \left| \tilde \mu^2 p - \tilde \varphi(\hat x)^\top \varphi(\hat x) \right| + \left| \tilde \varphi(\hat x)^\top \varphi(\hat x) - \tilde \varphi(x^\epsilon_m)^\top \varphi(\hat x) \right| \\
    & \qquad + \left| \sum_{i = 1}^n  \tilde \varphi(x^\epsilon_m)^\top \varphi(x_i) a_i - p \sum_{i = 1}^n  a_i \sum_{l = 3}^{+\infty} \mu_l^2 \frac{\left( x_i^\top x^\epsilon_m \right)^l}{d^l} \right| \\
    &= O \left( \sqrt {p d} \log d + \frac{p}{d} + \frac{p}{d} + \sqrt{pdn} \log d  \right) \\
    &= O \left( \sqrt{pdn} \log d + \frac{p}{d} \right),
\end{aligned}
\end{equation}
due to the third equation in the statement of Lemma \ref{lemma:facts}, an argument equivalent to the one in (\ref{eq:fromnettopoint1}), and (\ref{eq:forlasttermlater5}). Considering the union bound on these high probability events, we have that (\ref{eq:concentrationmu5}) holds with probability at least $1 - 2 \exp \left( - c_4 \log^2 d \right)$.

Let's suppose we have
\begin{equation}\label{eq:contradictionC5}
    \max_i \left| \frac{\hat x^\top x_i}{d} \right| \leq C < 1,
\end{equation}
where $C$ is an absolute constant (independent of $d, n, p$). Thus,
\begin{equation}
    \max_i \left| \frac{{x^\epsilon_m}^\top x_i}{d} \right| \leq \max_i \left| \frac{\hat x^\top x_i}{d} \right| + \max_i \left| \frac{\norm{x^\epsilon_m - \hat x}_2\norm{x_i}_2}{d} \right| \leq C + \frac{1}{d} \leq C_\epsilon < 1,
\end{equation}
where the last inequality holds for sufficiently large $d$. Then, we have
\begin{equation}\label{eq:inequalityC5}
\begin{aligned}
    \left| \sum_{i = 1}^n  a_i  \sum_{l = 3}^{+\infty} \mu_l^2 \frac{\left( x_i^\top x^\epsilon_m \right)^l}{d^l} \right| &\leq \norm{a}_2 \sum_{l = 3}^{+\infty} \mu_l^2 \norm{\frac{(X x^\epsilon_m)^{\circ l}}{d^l}}_2 \\
    &\leq \left( 1 + C_1 \left( \sqrt{\frac{dn}{p}} \log d + \frac{\log^3 d }{\sqrt d} \right) \right) \sum_{l = 3}^{+\infty} \mu_l^2 \left(C_\epsilon^{l - 1} + C_2 d^{-0.1} \right) \\
    &\leq \tilde \mu^2 C_\epsilon^2 + C_3 \left( \sqrt{\frac{dn}{p}} \log d + 
    d^{-0.1} \right),
\end{aligned}
\end{equation}
where we applied Cauchy Schwartz inequality separately for every $l$ in the first step and used Lemmas \ref{lemma:norma} and \ref{lemma:C} in the second step. Here, $C_1$ and $C_2$ denote two positive absolute constants, and the inequality holds with probability at least $1 - 2 \exp \left( - c_5 \log^2 d \right)$.
Plugging (\ref{eq:inequalityC5}) in (\ref{eq:concentrationmu5}) yields
\begin{equation}
    \tilde \mu^2 \left( 1 - C_\epsilon^2 \right) = O \left( \sqrt{\frac{dn}{p}} \log d + d^{-0.1} \right) = o(1),
\end{equation}
where the last step is a consequence of Assumption \ref{ass:scalings}, which provides a contradiction with the hypothesis in (\ref{eq:contradictionC5}), implying that
\begin{equation}\label{eq:alignedwithmodolus}
    \left| 1 - \max_i \left| \frac{\hat x^\top x_i}{d} \right| \right| = o(1),
\end{equation}
with probability at least $1 - 2 \exp \left( - c_5 \log^2 d \right)$.

Let $j = \argmax_i \left| \hat x^\top x_i \right|$ (taking the smallest index if there are multiple indices that maximize this value), and suppose $\hat x^\top x_i \geq 0$. Then, the law of cosines yields
\begin{equation}\label{eq:normisclose5}
    \norm{\hat x - x_j}_2 = \sqrt{2 d \left (1 - \frac{\hat x^\top x_j}{d} \right)} = o\left(\sqrt d \right),
\end{equation}
where the last step holds due to (\ref{eq:alignedwithmodolus}). Thus, for $k \neq j$ we have
\begin{equation}
    \left| \frac{x_k^\top \hat x}{d} \right|  = \left| \frac{x_k^\top x_j}{d} + \frac{x_k^\top \left( \hat x - x_j \right)}{d} \right| \leq \frac{\left| x_k^\top x_j \right|}{d} + \frac{\norm{x_k}_2 \norm{\hat x - x_j}_2}{d} = o(1), 
\end{equation}
where the last step holds with probability at least $1 - 2 \exp \left( - c_6 \log^2 d \right)$ due to (\ref{eq:normisclose5}) and (\ref{eq:dataisorthogonal}). In the case $\hat x^\top x_i < 0$ the same argument with $-x_j$ instead of $x_j$ yields the same thesis. Thus, comparing with (\ref{eq:alignedwithmodolus}), we have that $\argmax_i \left| \hat x^\top x_i \right|$ has a unique solution $j$, and all other indices are such that $\left| \hat x^\top x_i \right| / d = o(1)$.

This proves that $\hat x$ is aligned with $x_i$. It remains to prove that $\hat x$ also has the correct sign (i.e., it is close to $x_i$ and not to $-x_i$). To do so, following the same approach that led to (\ref{eq:inequalityC5}), we have that
\begin{equation}
    \left| \sum_{i \neq j}  a_i  \sum_{l = 3}^{+\infty} \mu_l^2 \frac{\left( x_i^\top x^\epsilon_m \right)^l}{d^l} \right| = o(1),
\end{equation}
with probability at least $1 - 2 \exp \left( - c_7 \log^2 d \right)$. Thus, comparing with (\ref{eq:concentrationmu5}), we get
\begin{equation}\label{eq:concmuwithaj}
    \left| \tilde \mu^2 -  a_j  \sum_{l = 3}^{+\infty} \mu_l^2 \frac{\left( x_j^\top x^\epsilon_m \right)^l}{d^l}\right| = o(1).
\end{equation}

Since $|a_j| \le 1 + o(1)$ by Lemma \ref{lemma:norma}, (\ref{eq:concmuwithaj}) can hold only if $\left(x_j^\top x^\epsilon_m \right)^l$ have all the same sign for all $l \geq 3$ such that $\mu_l \neq 0$. By Assumption \ref{ass:activation}, this is possible only if $x_j^\top x^\epsilon_m > 0$, which concludes the argument. 
\qed

\section{Proof of Theorem \ref{thm:reconveralln2}}\label{app:thm2}

Due to Theorem \ref{thm:reconveronelargen}, we have that, with overwhelming probability, every $\hat x_{\hat \imath}$ has a unique ``closest'' row vector in $X$, such that
\begin{equation}
    \left| 1 - \frac{\hat x_{\hat \imath}^\top x_{i}}{d} \right| = o(1).
\end{equation}
Then, if we consider the case $n = 2$, either both samples are reconstructed, or the same training sample is reconstructed twice. By contradiction, let us suppose the latter hypothesis, which without loss of generality can be framed as the first sample $x_1$ being reconstructed twice.

Since we have that $\varphi(x_2) \in \Span \{ \Rows (\hat \Phi) \}$, which means that there exist two real numbers $a_1$ and $a_2$ such that
\begin{equation}\label{eq:contradiction}
    \varphi(x_2) = a_1 \varphi(x_1 + \eps_1) + a_2 \varphi(x_1 + \eps_2),
\end{equation}
where
\begin{equation}\label{eq:reconstructonthesph}
    \norm{x_1 + \eps_1}_2 = \norm{x_1 + \eps_2}_2 = \sqrt d,
\end{equation}
as we are considering data reconstruction on the sphere. From now on, we will always assume that  (\ref{eq:contradiction}) and (\ref{eq:reconstructonthesph}) hold.
We will often consider the following expansion:
\begin{equation}\label{eq:contradictionexpansion}
\begin{aligned}
    \varphi(x_2) &=  a_1 \varphi(x_1 + \eps_1) + a_2 \varphi(x_1 + \eps_2)\\
    &= a_1 \varphi(x_1 + \eps_1) + a_2 \varphi(x_1 + \eps_1 + (\eps_2 - \eps_1)) \\
    &= (a_1 + a_2) \varphi(x_1 + \eps_1) \\
    & \qquad + a_2 \left( \left( \varphi(x_1 + \eps_2) - \varphi(x_1 + \eps_1) \right) - \phi' (V (x_1 + \eps_1)) \circ \left( V  (\eps_2 - \eps_1) \right) \right) \\
    & \qquad + a_2 \, \phi' (V (x_1 + \eps_1)) \circ \left( V  (\eps_2 - \eps_1) \right),
\end{aligned}
\end{equation}
which can be used since $\phi$ admits first derivative according to Assumption \ref{ass:activation}.
Note that
\begin{equation}
    0 = \norm{x_1 + \eps_1}_2^2 - \norm{x_1}_2^2 = 2 x_1^\top \eps_1 + \norm{\eps_1}_2^2,
\end{equation}
which yields
\begin{equation}\label{eq:epsperpx1}
    2 \left| x_1^\top \eps_1 \right| = \norm{\eps_1}_2^2, 
\end{equation}
with the same relation holding for $\eps_2$. Similarly, we have
\begin{equation}
    0 = \norm{x_1 + \eps_1}_2^2 - \norm{x_1 + \eps_2}_2^2 = 2 x_1^\top \left( \eps_1 - \eps_2\right) + \left( \norm{\eps_1}_2 + \norm{\eps_2}_2 \right) \left( \norm{\eps_1}_2 - \norm{\eps_2}_2 \right),
\end{equation}
which yields
\begin{equation}\label{eq:eps12perpx1}
    2 \left| x_1^\top \left( \eps_1 - \eps_2\right) \right| \leq \left( \norm{\eps_1}_2 + \norm{\eps_2}_2 \right) \norm{\eps_1 - \eps_2}_2.
\end{equation}

We will use the following notation
\begin{equation}\label{eq:defeps}
    \eps = \frac{\max \{ \norm{\eps_1}_2, \norm{\eps_2}_2\}}{\sqrt d} = O(1),
\end{equation}
and
\begin{equation}
    \delta = \frac{\norm{\eps_2 - \eps_1}_2}{\sqrt d} = O(1).
\end{equation}
Notice that, by definition, we have $\delta = O(\epsilon)$. The idea is to prove that, with overwhelming probability, there exists no solution to (\ref{eq:contradiction}) such that $\eps = o(1)$. This then readily implies the claim of Theorem \ref{thm:reconveralln2}. To do so, we state and prove a number of preliminary results.

\begin{lemma}\label{lemma:facts2}
    We jointly have
    \begin{equation}
        \opnorm{V} = O(\sqrt{p / d}), \qquad \norm{V}_F^2 = O\left( p \right), \qquad \sum_{k = 1}^p \norm{v_k}_2^3 = O \left( p \right),
    \end{equation}
    with probability at least $1 - 2 \exp \left( -c \log^2 d \right)$ over $V$.
    Furthermore, 
    we have that
    \begin{equation}
     \sup_{x \in \sqrt d \, \mathbb S^{d - 1}}   \norm{\varphi(x)}_2 = \Theta(\sqrt p), \qquad\sup_{x \in \sqrt d \, \mathbb S^{d - 1}} \norm{\tilde \varphi(x)}_2 = \Theta(\sqrt p).
    \end{equation}
    with probability at least $1 - 2 \exp \left( -c \log^2 d \right)$ over $V$.
\end{lemma}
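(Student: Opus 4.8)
\emph{Proof proposal.} The plan is to dispatch the three bounds on $V$ (operator norm, Frobenius norm, sum of cubed row norms) by standard Gaussian matrix concentration, and then to obtain the two uniform bounds on $\norm{\varphi(\cdot)}_2$ and $\norm{\tilde\varphi(\cdot)}_2$ by combining a pointwise Bernstein estimate with an $\epsilon$-net over $\sqrt d\,\mathbb S^{d-1}$, exactly in the spirit of the proof of Lemma \ref{lemma:facts}. Throughout, all stated failure probabilities are (much) smaller than $2\exp(-c\log^2 d)$ once Assumption \ref{ass:scalings} is invoked, so the final intersection of events has the claimed probability.

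For $\opnorm{V}$: since $\sqrt d\,V$ has i.i.d.\ standard Gaussian entries, Theorem 4.4.5 in \citep{vershynin2018high} gives $\opnorm{\sqrt d\,V}\le C(\sqrt p+\sqrt d+t)$ with probability at least $1-2\exp(-t^2)$; taking $t=\sqrt p$ and using $p=\omega(nd\log^2 d)\gg d$ yields $\opnorm{V}=O(\sqrt{p/d})$ with probability at least $1-2\exp(-cp)$ (this is the same estimate already used inside Lemma \ref{lemma:facts}). For $\norm{V}_F^2=\sum_{k,j}V_{kj}^2$, each summand is $\tfrac1d$ times a $\chi^2_1$ variable, hence sub-exponential with norm $O(1/d)$ and mean $1/d$, so Bernstein's inequality (Theorem 2.8.1 in \citep{vershynin2018high}) over the $pd$ independent summands with deviation $p$ gives $\norm{V}_F^2\le 2p$ with probability at least $1-2\exp(-cpd)$. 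For $\sum_k\norm{v_k}_2^3$: since $\norm{v_k}_2^2=\tfrac1d\norm{g_k}_2^2$ concentrates, a union bound over the $p$ rows gives $\max_{k\in[p]}\norm{v_k}_2^2\le 2$ with probability at least $1-2p\exp(-cd)\ge 1-2\exp(-c'd)$ (as $\log p=o(d)$ under the assumed scalings); on this event $\sum_k\norm{v_k}_2^3\le 2^{3/2}p=O(p)$.

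For the uniform statements, the \emph{upper} bounds are pointwise-free: as $\phi$ (resp.\ $\tilde\phi$) is Lipschitz, $\norm{\varphi(x)}_2^2=\sum_k\phi(v_k^\top x)^2\le 2p\,\phi(0)^2+2L^2\opnorm{V}^2\norm{x}_2^2=O(p)$ uniformly over $x\in\sqrt d\,\mathbb S^{d-1}$, using $\norm{x}_2^2=d$ and $\opnorm{V}^2=O(p/d)$, and likewise for $\tilde\varphi$. For the \emph{lower} bounds, I would fix a net $\{x^\epsilon_m\}_{m=1}^M$ of $\sqrt d\,\mathbb S^{d-1}$ with $\epsilon=1/d^2$ and $M\le(3/\epsilon)^d=e^{O(d\log d)}$. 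For a fixed $x^\epsilon_m$ the entries $v_k^\top x^\epsilon_m$ are i.i.d.\ $\mathcal N(0,1)$ (because $\norm{x^\epsilon_m}_2=\sqrt d$), so $\phi(v_k^\top x^\epsilon_m)^2$ are i.i.d.\ sub-exponential with mean $\E_{z\sim\mathcal N(0,1)}[\phi(z)^2]=\sum_{l\ge 0}\mu_l^2>0$ (positive since $\mu_1\ne 0$); Bernstein over the $p$ summands with deviation $\Theta(p)$ gives $\norm{\varphi(x^\epsilon_m)}_2^2\ge\tfrac p2\sum_l\mu_l^2$ with probability at least $1-2\exp(-cp)$, and analogously $\norm{\tilde\varphi(x^\epsilon_m)}_2^2\ge\tfrac p2\,\tilde\mu^2$, using $\E_{z}[\tilde\phi(z)^2]=\sum_{l\ge 2}\mu_l^2=\tilde\mu^2>0$ (here $\mu_0=\mu_2=0$ and, by Assumption \ref{ass:activation}, some $\mu_l\ne 0$ with $l\ge 3$). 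A union bound over the net costs $e^{O(d\log d)}$, absorbed since $p=\omega(d\log d)$. Finally, for arbitrary $x$ pick $m$ with $\norm{x-x^\epsilon_m}_2\le\epsilon\sqrt d=d^{-3/2}$; Lipschitzness of $\phi$ and $\opnorm{V}=O(\sqrt{p/d})$ give $\big|\,\norm{\varphi(x)}_2-\norm{\varphi(x^\epsilon_m)}_2\,\big|\le L\opnorm{V}\norm{x-x^\epsilon_m}_2=O(\sqrt p/d^2)=o(\sqrt p)$, so $\norm{\varphi(x)}_2=\Theta(\sqrt p)$ uniformly, and the same for $\tilde\varphi$.

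The only real subtlety is the standard tension in the net step: the net must be fine enough that the discretization error $L\opnorm{V}\epsilon\sqrt d$ is $o(\sqrt p)$, yet the net size $M=e^{O(d\log d)}$ must stay below the reciprocal of the per-point failure probability $e^{-\Omega(p)}$ — both are guaranteed by $p=\omega(nd\log^2 d)$ together with $n=O(d)$. Everything else is routine Gaussian concentration, and the two structural inputs beyond $\opnorm{V}=O(\sqrt{p/d})$ are the Hermite facts $\sum_l\mu_l^2>0$ and $\tilde\mu^2=\sum_{l\ge 3}\mu_l^2>0$, which identify the (nonzero) constants around which $\norm{\varphi(x)}_2^2$ and $\norm{\tilde\varphi(x)}_2^2$ concentrate.
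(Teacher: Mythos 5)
Your argument is correct and reaches the same conclusion, but a few of its routes differ from the paper's, and one deserves a caveat. For $\sum_k\norm{v_k}_2^3$, the paper does not go through $\max_k\norm{v_k}_2$: it notes $\subGnorm{\norm{v_k}_2}=O(1)$ hence $\norm{\,\norm{v_k}_2^3\,}_{\psi_{2/3}}=O(1)$ and then invokes a sub-Weibull concentration result (Lemma B.6 in \citep{bombari2023universal}) for the \emph{sum} directly. Your union-bound-over-rows version is more elementary, but it silently requires $\log p=o(d)$ so that $p\,e^{-cd}$ is small; this is not literally implied by Assumption \ref{ass:scalings} (which only lower-bounds $p$), whereas the paper's sum-concentration route has no such hidden requirement. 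In practice $p$ is polynomial in $d$ so the gap is harmless, but you should flag the assumption. For the uniform bounds, the paper proves only the upper bound $\sup_x\norm{\varphi(x)}_2=O(\sqrt p)$ (via the net step in Eq.~(\ref{eq:normsup})), since the $\Omega(\sqrt p)$ side of the stated $\Theta$ is a lower bound on the \emph{supremum}, which is trivial from a single fixed $x$ plus Bernstein. You instead prove a genuinely \emph{uniform} pointwise lower bound over the whole sphere; that is stronger than what is claimed or later used, and it is a valid (if slightly more laborious) way to conclude. Conversely, your upper bound is actually \emph{simpler} than the paper's: you avoid the net entirely by writing $\phi(v_k^\top x)^2\le 2\phi(0)^2+2L^2(v_k^\top x)^2$ and summing, which reduces the uniform bound to the already-proved $\opnorm{V}=O(\sqrt{p/d})$. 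That simplification is worth keeping. The Frobenius-norm step is a minor variant (Bernstein over $pd$ $\chi^2_1$ terms versus citing Theorem 3.1.1 of \citep{vershynin2018high}), both fine; and your use of $\mu_0=\mu_2=0$ and the existence of a nonzero $\mu_l$ with $l\ge 3$ to guarantee $\tilde\mu^2>0$ is exactly the right appeal to Assumption \ref{ass:activation}.
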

\begin{proof}
    The first equation holds with probability at least $1 - 2 \exp(-c_1 p)$ due to Theorem 4.4.5 in \citep{vershynin2018high}. The second equation is a direct consequence of Theorem 3.1.1 in \citep{vershynin2018high}. For the third equation, due to Theorem 3.1.1 in \citep{vershynin2018high}, we have that
    \begin{equation}
        \subGnorm{\norm{v_k}_2 - 1} = O \left( \frac{1}{\sqrt d} \right),
    \end{equation}
    which implies
    \begin{equation}
        \subGnorm{\norm{v_k}_2} = O \left( 1 \right).
    \end{equation}
    Then, we have that the Orlicz norm $\norm{\norm{v_k}_2^3}_{\psi_{2/3}} = O(1)$, which also implies $\E\left[ \norm{v_k}_2^3 \right] = O(1)$. Then, due to Lemma B.6 in \citep{bombari2023universal}, we have that
    \begin{equation}
        \sum_{k = 1}^p \norm{v_k}_2^3 = O(p),
    \end{equation}
    with probability at least $1 - 2 \exp \left( -c_1 \log^2 d \right)$, where we also used Assumption \ref{ass:scalings}.
    The last statement statement can be obtained via the same argument in (\ref{eq:normsup}).
\end{proof}

\begin{lemma}\label{lemma:subexp2tails}
    Let $\rho_1, \rho_2, \rho_3 \in \R$ be sub-Gaussian random variables, not necessarily independent. Consider the random variable
    \begin{equation}
        Z = \min \left (M, |\rho_1| \right) \left| \rho_2 \rho_3 \right|,
    \end{equation}
    where $M = \omega(1)$. Then, $Z - \E[Z]$ is sub-exponential with parameters $(\nu, \alpha)$ (see Definition 2.7 in \citep{Wainwright_2019}) such that
    \begin{equation}
        \nu = O(1), \qquad \alpha = O(M).
    \end{equation}
\end{lemma}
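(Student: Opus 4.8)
The plan is to establish the Bernstein-type moment condition for the centered variable $Z - \E[Z]$ and then invoke the standard fact that this condition implies the MGF bound of Definition~2.7 in \citep{Wainwright_2019}. Recall that if a centered random variable $W$ satisfies $\E[|W|^k] \le \tfrac12 k!\, \nu^2 \alpha^{k-2}$ for all integers $k \ge 2$, then, expanding $\E[e^{\lambda W}] \le 1 + \sum_{k \ge 2} \tfrac{|\lambda|^k}{k!} \E[|W|^k] \le 1 + \tfrac{\nu^2 \lambda^2}{2(1 - |\lambda|\alpha)}$, one gets $\E[e^{\lambda W}] \le e^{\nu^2 \lambda^2}$ for $|\lambda| \le 1/(2\alpha)$; that is, $W$ is sub-exponential with parameters $(\sqrt2\,\nu, 2\alpha)$. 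Since only Hölder- and Cauchy--Schwarz-type inequalities will be used, no independence among $\rho_1, \rho_2, \rho_3$ is needed.

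First I would pass from central to raw moments: because $Z \ge 0$ we have $|Z - \E Z| \le Z + \E Z$, and $(\E Z)^k \le \E[Z^k]$ by Jensen, so $\E[|Z - \E Z|^k] \le 2^k\, \E[Z^k]$. To bound $\E[Z^k]$, I would use the elementary pointwise inequality $\min(M, |\rho_1|)^k \le M^{k-2}\, |\rho_1|^2$, valid for every $k \ge 2$ since $t \mapsto t^{k-2}$ is increasing on $[0,\infty)$. This gives
\begin{equation*}
    Z^k \;=\; \min(M,|\rho_1|)^k\, |\rho_2|^k |\rho_3|^k \;\le\; M^{k-2}\, |\rho_1\rho_2\rho_3|^2\, |\rho_2\rho_3|^{k-2},
\end{equation*}
and Cauchy--Schwarz yields $\E[Z^k] \le M^{k-2}\, \E[|\rho_1\rho_2\rho_3|^4]^{1/2}\, \E[|\rho_2\rho_3|^{2(k-2)}]^{1/2}$.

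The two expectations on the right are controlled by the sub-Gaussianity of the $\rho_i$. By Hölder, $A := \E[|\rho_1\rho_2\rho_3|^4]^{1/2} \le \bigl(\prod_{i} \E[|\rho_i|^{12}]\bigr)^{1/6} = O(1)$, a finite absolute constant. Since $\rho_2\rho_3$ is sub-exponential with $\subEnorm{\rho_2\rho_3} \le \subGnorm{\rho_2}\subGnorm{\rho_3} = O(1)$, we have $\E[|\rho_2\rho_3|^q] \le (C q)^q$ for an absolute constant $C$, hence $\E[|\rho_2\rho_3|^{2(k-2)}]^{1/2} \le (2C(k-2))^{k-2}$. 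Plugging in and using $n^n \le e^n n!$ together with $(k-2)! \le k!/2$, I obtain $\E[Z^k] \le \tfrac12 k!\, A\,(2CeM)^{k-2}$, and therefore $\E[|Z - \E Z|^k] \le \tfrac12 k!\, (4A)\,(4CeM)^{k-2}$. This is exactly the Bernstein moment condition with $\nu^2 = 4A = O(1)$ and $\alpha = 4CeM = O(M)$; the implication recalled above then gives that $Z - \E[Z]$ is sub-exponential with parameters $(\nu, \alpha) = (O(1), O(M))$, as claimed.

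The only point requiring care — and the reason the truncation appears in the hypothesis — is the split $\min(M,|\rho_1|)^k \le M^{k-2}|\rho_1|^2$: without it, the product $|\rho_1\rho_2\rho_3|$ has only sub-Weibull$(2/3)$ tails (its $k$-th moment grows like $(k/e)^{3k/2}$), too heavy for any sub-exponential bound. Absorbing two powers of each $\rho_i$ into the constant $A$ and letting the remaining $k-2$ powers be carried solely by the genuinely sub-exponential factor $\rho_2\rho_3$ (times the benign deterministic factor $M^{k-2}$) is precisely what removes the super-linear part of the tail. Everything else — tracking the absolute constants through Hölder, Cauchy--Schwarz, and Stirling — is routine.
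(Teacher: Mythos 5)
Your proof is correct, and it takes a genuinely different route from the paper's. The paper bounds the moment generating function of $\bar Z = Z - \E Z$ directly: it applies the elementary inequality $e^z \le 1 + z + \tfrac{z^2}{2} e^{|z|}$, uses the truncation together with the restriction $|\lambda| \le 1/(4M)$ to control $e^{|\lambda||\bar Z|}$ by $e^{(|\rho_2\rho_3|+1)/4}$, and finishes with one application of Cauchy--Schwarz to split $\bar Z^2$ from the exponential factor. You instead verify the Bernstein moment condition $\E[|Z-\E Z|^k] \le \tfrac12 k!\,\nu^2\alpha^{k-2}$ for all $k\ge 2$ and appeal to the standard moments-to-MGF implication. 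The key mechanism is the same in spirit --- the truncation tames the product $|\rho_1\rho_2\rho_3|$, whose tail is too heavy (sub-Weibull of order $2/3$) to be sub-exponential on its own --- but implemented differently: the paper routes two of the three sub-Gaussian factors into a bounded fourth moment under the exponential, whereas you absorb $k-2$ powers of $\min(M,|\rho_1|)$ into the deterministic factor $M^{k-2}$ and let only $|\rho_2\rho_3|^{k-2}$ carry the factorial growth. The paper's one-shot MGF bound is slightly leaner and avoids Stirling-type bookkeeping; yours is more systematic, reduces to a textbook criterion, and makes explicit exactly why the truncation in the hypothesis is necessary, which is a useful piece of exposition.
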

\begin{proof}
    Since the $\rho$-s are sub-Gaussian, we have that both the absolute value of the mean and the second moment of $Z$ (and therefore its variance) are upper bounded by positive absolute constants. Denote with $C_1$ a constant order upper bound on the absolute value of $\E[Z]$. Furthermore, without loss of generality, we will consider the sub-Gaussian norms of $\rho_1, \rho_2, \rho_3$ to be equal to $1$.

    Consider a real number $\lambda$ such that $4 M |\lambda| \leq 1$. Defining $\bar Z = Z - \E[Z]$, and noting that $e^z \leq 1 + z + z^2 e^{|z|} / 2$ for every $z \in \R$ (due to the mean-value theorem), we have that
    \begin{equation}
    \begin{aligned}
        \E \left[ e^{\lambda \bar Z} \right] &\leq \E \left[ 1 + \lambda \bar Z + \frac{\lambda^2}{2} \bar Z^2 e^{|\lambda| |\bar Z|} \right] \\
        &\leq 1 + \E \left[\frac{\lambda^2}{2} \bar Z^2 e^{|\lambda| |\bar Z|} \right] \\
        &\leq 1 + \frac{\lambda^2}{2} \E \left[ \bar Z^2 e^{ \frac{M |\rho_2 \rho_3| + C_1}{4M} } \right] \\
        &\leq 1 + \frac{\lambda^2}{2} \E \left[ \left( |\rho_1 \rho_2 \rho_3| + C_1 \right)^2 e^{ \frac{ |\rho_2 \rho_3| + 1}{4} } \right]  \\
        &\leq 1 + \frac{\lambda^2}{2} \E \left[ \left( |\rho_1 \rho_2 \rho_3| + C_1 \right)^4 \right]^{1/2} \E \left[  e^{ \frac{ |\rho_2 \rho_3| + 1}{2} } \right]^{1/2}  \\
        &\leq 1 + C_2 \lambda^2 \\
        &\leq e^{C_2\lambda^2}.
    \end{aligned}
    \end{equation}
    Here, in third line we used $Z \leq M |\rho_2 \rho_3|$; in the fourth line we used that $M \geq C_1$ and $Z \leq |\rho_1 \rho_2 \rho_3|$; in the sixth line we upper bounded the expectations via an absolute constant, as $|\rho_2 \rho_3|$ is sub-exponential with norm 1.
    Thus, we can set $\alpha = 4M$, and for all $|\lambda| \leq 1 / \alpha$, we have
    \begin{equation}
        \E \left[ e^{\lambda \bar Z} \right] \leq e^{C_2 \lambda^2},
    \end{equation}
    which gives the desired result according to Definition 2.7 in \citep{Wainwright_2019}.
\end{proof}

\begin{lemma}\label{lemma:secondorder}
    We have that
    \begin{equation}
    \begin{aligned}
    & \left | (\eps_2 - \eps_1)^\top V^\top \left( \left( \varphi(x_1 + \eps_2) - \varphi(x_1 + \eps_1) \right) - \phi' (V (x_1 + \eps_1)) \circ \left( V  (\eps_2 - \eps_1) \right) \right) \right| \\
    &\qquad = O \left( p \delta^3 + d \log^2 d \delta^2 \right),
    \end{aligned}
    \end{equation}
    with probability at least $1 - 2 \exp \left( - c_6  \log^2 d \right)$ over $V$.
\end{lemma}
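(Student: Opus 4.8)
\textit{Proof plan.} Set $\Delta=\eps_2-\eps_1$ and, for each neuron $k$, $w_k=v_k^\top\Delta$ and $u_k=v_k^\top(x_1+\eps_1)$, so that the $k$-th coordinate of the vector in the statement is the first-order Taylor remainder $R_k:=\phi(u_k+w_k)-\phi(u_k)-\phi'(u_k)w_k$ and the quantity to bound is $\bigl|\sum_{k=1}^p w_kR_k\bigr|\le\sum_{k=1}^p|w_kR_k|$. By Assumption~\ref{ass:activation} $\phi$ is Lipschitz (hence $\phi'$ is bounded) and $\phi'$ is Lipschitz, which give respectively $|R_k|=O(|w_k|)$ and $|R_k|=O(w_k^2)$, so $|w_kR_k|=O(\min(w_k^2,|w_k|^3))$. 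Since this bound does not see the base point $x_1+\eps_1$, it suffices to control $\sum_k\min(w_k^2,|w_k|^3)$ with overwhelming probability, uniformly over every $\Delta$ with $\norm{\Delta}_2=\sqrt d\,\delta$, $\delta=O(1)$.

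Fix such a $\Delta$ and write $w_k=\delta g_k$, $g_k:=v_k^\top\Delta/\delta\sim\mathcal N(0,1)$; split $\sum_k\min(w_k^2,|w_k|^3)=\sum_{|w_k|\le1}|w_k|^3+\sum_{|w_k|>1}w_k^2$. For the \emph{bulk} sum, $\sum_{|w_k|\le1}|w_k|^3\le\delta^3\sum_k\min(1/\delta,|g_k|)\,g_k^2=:\delta^3S$ (the case $\delta=\Theta(1)$ is immediate, since then $\sum_k w_k^2\le\opnorm{V}^2\norm{\Delta}_2^2=O(p\delta^2)$ by Lemma~\ref{lemma:facts2}). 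By Lemma~\ref{lemma:subexp2tails} with $\rho_1=\rho_2=\rho_3=g_k$ and $M=1/\delta=\omega(1)$, the summands of $S$ are, after centering, sub-exponential with $\nu=O(1)$ and $\alpha=O(1/\delta)$, while $\E S=O(p)$; a Bernstein bound for sub-exponential sums then gives, at the precision needed to survive the net union below, $\delta^3S=O(p\delta^3+d\log^2 d\,\delta^2)$ — here $p=\omega(nd\log^2 d)$ absorbs the Gaussian-regime term $\sqrt{pd}\,\poly(\log d)\,\delta^3$ into $p\delta^3$, while the heavy-tailed regime contributes $\delta^3\cdot\alpha$ times the exponent, i.e.\ $O(d\log^2 d\,\delta^2)$. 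For the \emph{tail} sum, with $\hat\Delta=\Delta/\norm{\Delta}_2$ one has $\sum_{|w_k|>1}w_k^2=d\delta^2\sum_{k:|v_k^\top\hat\Delta|>1/(\sqrt d\,\delta)}(v_k^\top\hat\Delta)^2$; decomposing the index set into $O(\log d)$ dyadic levels in the size of $|v_k^\top\hat\Delta|$ and bounding, at level $\rho$, the count $\#\{k:|v_k^\top\hat\Delta|>\rho\}$ by $O(\max(p\,e^{-\Omega(d\rho^2)},\,\log d/\rho^2))$ via a Chernoff estimate (uniformised over $\hat\Delta$ below), each level contributes $O(\log d)$, so the whole tail is $O(d\log^2 d\,\delta^2)$. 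Summing the two proves the bound for the fixed $\Delta$.

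To uniformise, all the preceding is run on an $\eps$-net of $\{\Delta:\norm{\Delta}_2=\sqrt d\,\delta\}$ at scale $1/\poly(d)$ over directions $\hat\Delta\in\Sphere^{d-1}$ and on a $\poly(d)$-grid of magnitudes $\delta$; this net has $\exp(O(d\log d))$ points, so each per-point estimate — each holding with overwhelming probability — survives the union bound leaving total failure probability $\le2\exp(-c\log^2 d)$. The uniform cutoff $|g_k|\le2\sqrt d$ that lets one apply Bernstein/Chernoff simultaneously over the net follows from $\max_k\norm{v_k}_2\le2$, and the passage from net points to arbitrary $\Delta$ uses that $\Delta\mapsto\sum_k\min(w_k^2,|w_k|^3)$ is Lipschitz with constant polynomial in $p,d$ — via $\opnorm{V}=O(\sqrt{p/d})$ and $\Fnorm{V}^2=O(p)$ (Lemma~\ref{lemma:facts2}) — so the discretisation error is negligible.

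The delicate step is the bulk bound. As $|w_k|^3$ belongs only to the Orlicz class $\psi_{2/3}$, a naive Bernstein for $\sum_k|w_k|^3$ over an $\exp(O(d\log d))$-size net would force a deviation $\sim\delta^3(d\log d)^{3/2}$, which is \emph{not} $O(d\log^2 d\,\delta^2)$ once $\delta\gg\sqrt{(\log d)/d}$. It is precisely the content of Lemma~\ref{lemma:subexp2tails} — that $\min(M,|\rho_1|)|\rho_2\rho_3|$ has standard deviation $O(1)$, far below its sub-exponential width $\alpha=O(M)$ — that makes the Gaussian regime of Bernstein available at the small scale $\sqrt{pd}\,\poly(\log d)$ while the heavy-tailed regime costs only $O(Md\log d)=O(d\log d/\delta)$, so that after the factor $\delta^3$ everything lands at $O(p\delta^3+d\log^2 d\,\delta^2)$. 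A comparable care is needed for the tail, where only the weaker bound $|R_k|=O(|w_k|)$ is available: the crude estimate $\#\{|w_k|>1\}\cdot\max_k w_k^2$ would produce a spurious $d^2\poly(\log d)\,\delta^4$, and it is the dyadic refinement that keeps the aggregate at the claimed order.
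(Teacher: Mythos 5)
Your proof is built on exactly the ingredients the paper uses — the Taylor-remainder reformulation, the double bound $|R_k|=O(\min(w_k^2,|w_k|))$ from Lipschitzness of $\phi$ and $\phi'$, the reduction to controlling $\sum_k\min(w_k^2,|w_k|^3)$, the truncated sub-exponential Lemma~\ref{lemma:subexp2tails} with $M\asymp 1/\delta$, a Bernstein bound whose tail exponent $t/\alpha$ supplies the $d\log^2 d\,\delta^2$ piece, and an $\eps$-net union to make everything uniform in the direction of $\eps_2-\eps_1$. This is the paper's argument. One sensible refinement you add is the dyadic grid over the magnitude $\delta$, which the paper's proof does not spell out; the paper effectively treats $\delta$ as fixed when it nets only over $\xi=(\eps_2-\eps_1)/\delta$, and a monotonicity-in-$\delta$ or grid argument is indeed needed to make the ``over $V$'' probability statement uniform in $\hat X$.

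The one structural misstep is the bulk/tail split, which is both unnecessary and the source of the only questionable estimate in the write-up. Observe that
\begin{equation*}
\delta^3 S=\delta^3\sum_{k=1}^p\min\!\left(\tfrac{1}{\delta},|g_k|\right)g_k^2
=\sum_{k=1}^p\min\!\left(\delta^2 g_k^2,\ \delta^3|g_k|^3\right)
=\sum_{k=1}^p\min\!\left(w_k^2,|w_k|^3\right),
\end{equation*}
so $\delta^3S$ is not merely an upper bound for the bulk sum $\sum_{|w_k|\le1}|w_k|^3$ — it \emph{equals} bulk plus tail. Your Bernstein bound $\delta^3 S=O(p\delta^3+d\log^2 d\,\delta^2)$ therefore already proves the lemma in one stroke, and the separate dyadic Chernoff count for $\sum_{|w_k|>1}w_k^2$ is redundant. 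That is fortunate, because that dyadic estimate as written does not hold for all $\delta=o(1)$: at level $\rho\asymp1/(\sqrt d\,\delta)$ the expected count is of order $p\,e^{-\Omega(1/\delta^2)}$, and the resulting contribution $d\delta^2\rho^2\cdot p\,e^{-\Omega(1/\delta^2)}\asymp p\,e^{-\Omega(1/\delta^2)}$ is not $O(d\log^2 d\,\delta^2)$ once $\delta$ decays slowly (say $\delta=1/\log\log d$), since then $e^{-\Omega(1/\delta^2)}$ is not small enough to beat the overparameterization $p\gg d\log^2 d$. Dropping the tail argument and simply noting $\sum_k\min(w_k^2,|w_k|^3)=\delta^3 S$ aligns your proof exactly with the paper's (which bounds the truncated sum as a whole via Lemma~\ref{lemma:subexp2tails} and Bernstein) and removes the flawed step.
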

\begin{proof}
    Since $\phi$ is differentiable due to Assumption \ref{ass:activation}, by the mean-value theorem, we have that there exists $\zeta \in \R^p$ such that
    \begin{equation}\label{eq:zeta}
        \varphi(x_1 + \eps_2) - \varphi(x_1 + \eps_1) = \phi' (V (x_1 + \eps_1) + \zeta) \circ (V  (\eps_2 - \eps_1)),
    \end{equation}
    where each entry of $\zeta$ is such that $\zeta_k \in [0, \left[ V (\eps_2 - \eps_1) \right]_k ]$ (or $\zeta_k \in [\left[ V (\eps_2 - \eps_1) \right]_k, 0]$, if $\left[ V (\eps_2 - \eps_1) \right]_k$ is negative). Then, we have that the thesis becomes
    \begin{equation}\label{eq:uglyeqzeta}
    \begin{aligned}
    & \left | (\eps_2 - \eps_1)^\top V^\top \left( \left( \phi' (V (x_1 + \eps_1) + \zeta) - \phi' (V (x_1 + \eps_1)) \right) \circ (V  (\eps_2 - \eps_1)) \right) \right| \\
    &\qquad = O \left( p \delta^3 + d \log^2 d \delta^2 \right).
    \end{aligned}
    \end{equation}

    First, let $\xi \in \R^d$ be defined as
    \begin{equation}\label{eq:defxi}
        \xi = \frac{\eps_2 - \eps_1}{\delta},
    \end{equation}
    \emph{i.e.} the vector lying on the sphere $\sqrt d \, \mathbb S^{d - 1}$ with the same direction as $\eps_2 - \eps_1$.
    Then, dividing both sides of (\ref{eq:uglyeqzeta}) by $\delta^3$ and expanding the sum, the desired result can be reformulated as
    \begin{equation}
        \sum_{k=1}^p \left( v_k^\top \xi \right) \frac{\phi' (v_k^\top (x_1 + \eps_1) + \zeta_k) - \phi' (v_k^\top (x_1 + \eps_1))}{\delta} \left( v_k^\top \xi \right) = O \left(  p + \frac{d \log^2 d}{\delta} \right).
    \end{equation}

    Due to Assumption \ref{ass:activation}, we have both $\phi'(z) \leq L$ and $|\phi'(z_1) - \phi'(z_2)| \leq L' |z_1 - z_2|$. Thus, the previous equation is implied by
    \begin{equation}\label{eq:newthesisxi}
        \sum_{k=1}^p \min \left( \frac{2L}{\delta}, L' \left| v_k^\top \xi \right| \right) \left( v_k^\top \xi \right)^2 = O \left(  p + \frac{d \log^2 d}{\delta} \right),
    \end{equation}
    as we have $|\zeta_k| \leq \delta | v_k^\top \xi |$ from its definition in (\ref{eq:zeta}).

    Let us now consider an $\epsilon \, \sqrt d$-net of $\sqrt d \, \mathbb S^{d-1}$, namely $\{ x^\epsilon_m \}_{m = 1}^M$, such that for any $x \in \sqrt d \, \mathbb S^{d-1}$ there exists $m \in [M]$ such that $\norm{x - x^\epsilon_m}_2 \leq \epsilon \, \sqrt d$. Due to Corollary 4.2.13 in \citep{vershynin2018high}, for $\epsilon < 1$ we have that the net can be chosen such that $M \leq (3 / \epsilon)^d$. Setting $\epsilon = d^{-3/2}$, we have that there exists $m^* \in [M]$ such that $\norm{\xi - x^\epsilon_{m^*}}_2 \leq 1 / d$, and $M \leq e^{c_1 d \log d}$, where $c_1$ is an absolute positive constant. Consider a generic element $x^\epsilon_m$ and define
    \begin{equation}
        T^{(m)} = \sum_{k=1}^p \min \left( \frac{2L}{\delta}, L' \left| v_k^\top x^\epsilon_m \right| \right) \left( v_k^\top x^\epsilon_m \right)^2.
    \end{equation}
    Each term $T^{(m)}_k$ in the sum above is such that its expectation is
    \begin{equation}\label{eq:expectationT}
        \E_{v_k} \left [ T^{(m)}_k \right]= \E_{v_k} \left [ \min \left( \frac{2L}{\delta}, L' \left| v_k^\top x^\epsilon_m \right| \right) \left( v_k^\top x^\epsilon_m \right)^2 \right] \leq L' \E_{v_k} \left [ \left| v_k^\top x^\epsilon_m \right|^3 \right] = O(1).
    \end{equation}

    Furthermore, $T^{(m)}_k$ is sub-exponential (in the probability space of $v_k$) with parameters $(\nu, \alpha)$ (see Definition 2.7 in \citep{Wainwright_2019}) such that
    \begin{equation}
        \nu = O(1), \qquad \alpha = O\left( \delta^{-1} \right),
    \end{equation}
    due to Lemma \ref{lemma:subexp2tails}. Thus, Equation (2.18) in \citep{Wainwright_2019} guarantees that
    \begin{equation}
    \begin{aligned}
        \P_V \left(  \left| \sum_{k = 1}^p \left(T^{(m)}_k - \E_{v_k} \left[ T^{(m)}_k \right]\right) \right| \geq p + \frac{d \log^2 d}{\delta} \right) &\leq \exp \left( - c_2 \min \left( p, d \log^2 d \right) \right) \\
        &\leq \exp \left( - c_3  d \log^2 d \right),
    \end{aligned}
    \end{equation}
    where the last step used Assumption \ref{ass:scalings}.
    Next, we perform a union bound on the elements of the net, obtaining that 
    \begin{equation}
       \sup_{m\in [M]} \left|T^{(m)}\right| = O\left( p + \frac{d \log^2 d}{\delta} \right),
    \end{equation}
    with probability at least $1 - \exp \left( - c_3  d \log^2 d + c_1 d \log d \right) \geq 1 - 2 \exp \left( - c_4  d \log^2 d \right)$. Then, with this same probability, due to (\ref{eq:expectationT}), we also have that
    \begin{equation}\label{eq:thesisonnet}
        \sum_{k=1}^p \min \left( \frac{2L}{\delta}, L' \left| v_k^\top x^\epsilon_{m^*} \right| \right) \left( v_k^\top x^\epsilon_{m^*} \right)^2 = O\left( p + \frac{d \log^2 d}{\delta} \right).
    \end{equation}
    Since the $\min$ function is 1 Lipschitz in any of its arguments, for every $k \in [p]$, we have
    \begin{equation}
        \min \left( \frac{2L}{\delta}, L' \left| v_k^\top x^\epsilon_{m^*} \right| \right) - \min \left( \frac{2L}{\delta}, L' \left| v_k^\top \xi \right| \right) \leq L' \norm{v_k}_2 \norm{\xi - x^\epsilon_{m^*}}_2 = O \left( \frac{\norm{v_k}_2}{d} \right).
    \end{equation}
    Thus,
    \begin{equation}\label{eq:fromnettoxi1}
    \begin{aligned}
        & \sum_{k=1}^p \left| \min \left( \frac{2L}{\delta}, L' \left| v_k^\top x^\epsilon_{m^*} \right| \right) - \min \left( \frac{2L}{\delta}, L' \left| v_k^\top \xi \right| \right) \right| \left( v_k^\top x^\epsilon_{m^*} \right)^2 \\
        & \qquad \leq \sum_{k=1}^p \frac{C_2}{d} \norm{v_k}_2^3 \norm{x^\epsilon_{m^*}}_2^2 = O(p),
    \end{aligned}
    \end{equation}
    where the last step used the first statement of Lemma \ref{lemma:facts2}, and holds with probability at least $1 - 2 \exp \left( - c_5  \log^2 d \right)$.
Furthermore, for every $k \in [p]$, we have
    \begin{equation}
    \begin{aligned}
        \left| \left( v_k^\top x^\epsilon_{m^*} \right)^2 - \left( v_k^\top  \xi \right)^2 \right| &= \left|  v_k^\top \left( x^\epsilon_{m^*} - \xi \right) \right| \left|  v_k^\top \left( x^\epsilon_{m^*} + \xi \right) \right| \\
        & \leq 2 \sqrt d \norm{v_k}_2^2 \norm{x^\epsilon_{m^*} - \xi}_2 = O \left( \frac{\norm{v_k}_2^2}{\sqrt d}\right),
    \end{aligned}
    \end{equation}
    which yields
    \begin{equation}\label{eq:fromnettoxi2}
        \sum_{k=1}^p  \min \left( \frac{2L}{\delta}, L' \left| v_k^\top \xi \right| \right) \left| \left( v_k^\top x^\epsilon_{m^*} \right)^2 - \left( v_k^\top  \xi \right)^2 \right| \leq \sum_{k=1}^p \frac{C_3}{\sqrt d} \norm{v_k}_2^3 \norm{\xi}_2 = O(p),
    \end{equation}
    again due to Lemma \ref{lemma:facts2}. Finally, applying the triangle inequality to (\ref{eq:thesisonnet}), (\ref{eq:fromnettoxi1}), and (\ref{eq:fromnettoxi2}), gives 
    \begin{equation}\label{eq:fromnettoxi}
        \sum_{k=1}^p \min \left( \frac{2L}{\delta}, L' \left| v_k^\top \xi \right| \right) \left( v_k^\top \xi \right)^2 = O \left( p + \frac{d \log^2 d}{\delta} \right),
    \end{equation}
    with probability at least $1 - 2 \exp \left( - c_6  \log^2 d \right)$, which concludes the proof.
\end{proof}

\begin{lemma}\label{lemma:epsalignement}
    Suppose $\eps = o(1)$. Then, we have that
    \begin{equation}
        \left| (\eps_2 - \eps_1)^\top V^\top \left( \phi' (V (x_1 + \eps_1) ) \circ (V  (\eps_2 - \eps_1)) \right) - \frac{\norm{\eps_2 - \eps_1}_2^2}{d} p \mu_1 \right| = o(p \delta^2),
    \end{equation}
    with probability at least $1 - 2 \exp \left( - c \log^2 d \right)$ over $V$.
\end{lemma}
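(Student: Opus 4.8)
The plan is to reduce the estimate to a concentration bound for a sum of $p$ i.i.d.\ sub-exponential variables, together with a short computation of its mean via Stein's lemma; the content of the lemma is that this mean equals $p\mu_1$ up to lower order precisely because the perturbation direction $\eps_2-\eps_1$ is nearly orthogonal (on the scale $\sqrt d$) to the base point $x_1+\eps_1$. Assume $\eps_2\neq\eps_1$ (otherwise both sides vanish). Recalling $\xi = (\eps_2-\eps_1)/\delta$ from (\ref{eq:defxi}), which lies on $\sqrt d\,\mathbb S^{d-1}$, and setting $u := x_1+\eps_1 \in \sqrt d\,\mathbb S^{d-1}$, an entrywise expansion using $v_k^\top(\eps_2-\eps_1)=\delta\, v_k^\top\xi$ gives
\[
(\eps_2 - \eps_1)^\top V^\top \left( \phi'(V u) \circ (V(\eps_2 - \eps_1)) \right) = \delta^2 \sum_{k=1}^p \phi'(v_k^\top u)\,(v_k^\top\xi)^2 ,
\]
while $\frac{\norm{\eps_2-\eps_1}_2^2}{d}p\mu_1 = \delta^2 p\mu_1$; so it suffices to show $\big| \sum_{k=1}^p \phi'(v_k^\top u)(v_k^\top\xi)^2 - p\mu_1 \big| = o(p)$.

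\textbf{Mean.} For fixed $(u,\xi)$, a generic row $v$ makes $(v^\top u, v^\top\xi)$ jointly Gaussian with unit variances and correlation $\rho := u^\top\xi/d$. Writing $v^\top\xi = \rho\,(v^\top u) + \sqrt{1-\rho^2}\,z$ with $z\sim\mathcal N(0,1)$ independent of $v^\top u$ gives $\E\big[(v^\top\xi)^2\mid v^\top u=w\big] = 1-\rho^2+\rho^2 w^2$, hence $\E\big[\phi'(v^\top u)(v^\top\xi)^2\big] = (1-\rho^2)\E[\phi'(w)] + \rho^2\E[w^2\phi'(w)]$ for $w\sim\mathcal N(0,1)$. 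Stein's lemma gives $\E[\phi'(w)] = \E[w\phi(w)] = \mu_1$, and $|\E[w^2\phi'(w)]| \le \norm{\phi'}_\infty\E[w^2] = O(1)$ since $\phi$ is Lipschitz (Assumption \ref{ass:activation}); therefore $\big|\E[\phi'(v^\top u)(v^\top\xi)^2] - \mu_1\big| = O(\rho^2)$ — the bias is \emph{quadratic} in $\rho$, the linear term vanishing because $\E[z]=0$.

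\textbf{Uniform concentration.} The summands $T_k := \phi'(v_k^\top u)(v_k^\top\xi)^2$ are i.i.d.\ over $k$, and since $\phi'$ is bounded and $(v_k^\top\xi)^2$ has $\psi_1$-norm $O(1)$, each $T_k$ is sub-exponential with $\psi_1$-norm $O(1)$ (so, unlike in Lemma \ref{lemma:secondorder}, no truncation is needed). Bernstein's inequality gives, for fixed $(u,\xi)$, $\big|\sum_k(T_k - \E T_1)\big| \le \sqrt{p\,d\log^2 d} = o(p)$ with probability $1-2\exp(-c\,d\log^2 d)$, using $p = \omega(d\log^2 d)$ (Assumption \ref{ass:scalings}). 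To obtain this uniformly over $(u,\xi)\in(\sqrt d\,\mathbb S^{d-1})^2$ — necessary since $\eps_1,\eps_2$ are free — I would union-bound over an $\epsilon$-net of scale $d^{-M}\sqrt d$ for a large fixed constant $M$ (of size $\exp(O(Md\log d))$, so the union bound survives) and transfer to a general point: $|\phi'(v_k^\top u)-\phi'(v_k^\top u')| \le L'\norm{v_k}_2\norm{u-u'}_2$ and $|(v_k^\top\xi)^2 - (v_k^\top\xi')^2| \le 2\sqrt d\,\norm{v_k}_2^2\norm{\xi-\xi'}_2$, whence the net-to-point change of $\sum_k T_k$ is $O(d^{-M}\sqrt d)\big((\max_k\norm{v_k}_2)\sum_k(v_k^\top\xi)^2 + \sqrt d\,\norm{V}_F^2\big) = o(p)$ using $\opnorm{V} = O(\sqrt{p/d})$, $\norm{V}_F^2 = O(p)$ (Lemma \ref{lemma:facts2}) and $\max_k\norm{v_k}_2 = O(1)$ (a standard $\chi^2$ bound); and $p\,\E T_1$ changes by $O(p|\rho-\rho'|) = o(p)$ since $|\rho-\rho'| = o(1)$ along the net. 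This yields $\big|\sum_k T_k - p\,\E[\phi'(v^\top u)(v^\top\xi)^2]\big| = o(p)$ uniformly, with probability at least $1-2\exp(-c'\log^2 d)$ (the weakest of the above events, coming from Lemma \ref{lemma:facts2}).

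\textbf{Conclusion and main obstacle.} For the relevant choice $u = x_1+\eps_1$, $\xi = (\eps_2-\eps_1)/\delta$, we have $\rho = (\eps_2-\eps_1)^\top(x_1+\eps_1)/(d\delta)$; bounding $|(\eps_2-\eps_1)^\top x_1| \le \tfrac12(\norm{\eps_1}_2+\norm{\eps_2}_2)\norm{\eps_1-\eps_2}_2$ via (\ref{eq:eps12perpx1}) and $|(\eps_2-\eps_1)^\top\eps_1| \le \norm{\eps_1-\eps_2}_2\norm{\eps_1}_2$, and recalling the definitions of $\eps$ and $\delta$, gives $|\rho| = O(\eps) = o(1)$ by the hypothesis $\eps=o(1)$. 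Hence $p\,\E[\phi'(v^\top u)(v^\top\xi)^2] = p\mu_1 + O(p\rho^2) = p\mu_1 + o(p)$, which together with the uniform bound gives $\big|\sum_k T_k - p\mu_1\big| = o(p)$; multiplying back by $\delta^2$ is the claim. The main obstacle is exactly this last step: everything rests on $\rho = o(1)$, and more subtly on the bias being $O(\rho^2)$ rather than $O(\rho)$ — this is what makes the mild assumption $\eps = o(1)$ (rather than, say, $\eps = o(1/\sqrt d)$) sufficient. The uniform concentration is comparatively routine given the $\epsilon$-net/Bernstein machinery already developed for Lemma \ref{lemma:secondorder}.
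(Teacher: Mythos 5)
Your proof is correct and follows essentially the same route as the paper: divide by $\delta^2$ to reduce to bounding $\big|\sum_k \phi'(v_k^\top u)(v_k^\top\xi)^2 - p\mu_1\big| = o(p)$, apply Bernstein with an $\epsilon$-net of the sphere for uniform concentration, and identify the population mean as $\mu_1 + O(\rho^2)$ (the quadratic rather than linear dependence on the correlation $\rho$ being the crux that makes $\eps = o(1)$ enough). The only cosmetic difference is in the mean computation: you condition $v^\top\xi$ on $v^\top u$ and invoke Stein's lemma ($\E[\phi'(w)]=\mu_1$), whereas the paper expands $(v^\top\xi)^2 = h_0 + \sqrt{2}h_2$ in Hermite polynomials and reads off $\mu_0^{\phi'} + \sqrt{2}\mu_2^{\phi'}\rho^2$; these are the same Gaussian computation expressed two ways.

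One small remark: your use of $\max_k\|v_k\|_2 = O(1)$ in the net-to-point transfer is fine when $p$ is sub-exponential in $d$ (so the union bound over $k\in[p]$ survives at the required probability level), but for fully generic $p$ the paper instead works with $\sum_k\|v_k\|_2^3 = O(p)$ from Lemma \ref{lemma:facts2}, which has no such constraint; the substitution $\max_k\|v_k\|_2\sum_k(v_k^\top\xi)^2 \rightsquigarrow d\sum_k\|v_k\|_2^3$ closes that loophole without changing anything else in your argument.
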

\begin{proof}
    First, let $\xi \in \R^d$ be defined as in (\ref{eq:defxi}). Then, the desired result can be reformulated as
    \begin{equation}
        \left| \sum_{k = 1}^p  \phi' \left(v_k^\top (x_1 + \eps_1) \right) \left(v_k^\top \xi \right)^2  - p \mu_1 \right| = o(p).
    \end{equation}
    
    Let us now consider an $\epsilon \, \sqrt d$-net of $\sqrt d \, \mathbb S^{d-1}$ (as we did after (\ref{eq:newthesisxi})).
    Setting $\epsilon = d^{-2}$, we have that there exist $m^{(\xi)}$ and $m^{(x)}$ in $[M]$ such that $\norm{\xi - x^\epsilon_{m^{(\xi)}}}_2 \leq 1 / d^{3/2}$, $\norm{(x_1 + \eps_1) - x^\epsilon_{m^{(x)}}}_2 \leq 1 / d^{3/2}$, and the size of the net is $M \leq e^{c_1 d \log d}$, where $c_1$ is an absolute positive constant.
    Consider two generic elements of this net: $x^\epsilon_{m^{(1)}}$ and $x^\epsilon_{m^{(2)}}$, and define
    \begin{equation}
        T^{(m^{(1)}, m^{(2)})} = \left| \sum_{k = 1}^p \left( \phi' \left(v_k^\top x^\epsilon_{m^{(1)}} \right) \left(v_k^\top x^\epsilon_{m^{(2)}} \right)^2 - \E_{v_k} \left[ \phi' \left(v_k^\top x^\epsilon_{m^{(1)}} \right) \left(v_k^\top x^\epsilon_{m^{(2)}} \right)^2 \right] \right) \right|.
    \end{equation}
    Since $\left| \phi' \left(v_k^\top x^\epsilon_{m^{(1)}} \right) \right| \leq L$, each element of the sum is sub-exponential (in the probability space of $v_k$). Then, due to Bernstein inequality (see Theorem 2.8.1 in \citep{vershynin2018high}), we have that
    \begin{equation}
    \begin{aligned}
        \P_V \left( T^{(m^{(1)}, m^{(2)})} > C_1 \sqrt{d p \log d} \right) &\leq 2 \exp \left( - c_2 \min \left( C_1^2 d \log d, C_1 \sqrt{d p \log d}\right) \right) \\
        &\leq 2 \exp \left( - c_3 C_1  d \log d \right),
    \end{aligned}
    \end{equation}
    where the second step is a consequence of Assumption \ref{ass:scalings}.
    Performing a union bound, we have that, for every pair of points on the net (there are less than $e^{2 c_1 d \log d}$ such pairs), including $x^\epsilon_{m^{(\xi)}}$ and $x^\epsilon_{m^{(x)}}$, the equation above holds with probability at least $1 - 2 \exp \left( - c_3 C_1  d \log d + 2 c_1 d \log d \right) \geq 1 - 2 \exp \left( - c_4  d \log d \right)$ if $C_1$ is chosen sufficiently large. Thus, with this probability, we have
    \begin{equation}\label{eq:deltabernstein}
    \begin{aligned}
        T^{(m^{(x)}, m^{(\xi)})} &= \left| \sum_{k = 1}^p \left( \phi' \left(v_k^\top x^\epsilon_{m^{(x)}} \right) \left(v_k^\top x^\epsilon_{m^{(\xi)}} \right)^2 - \E_{v_k} \left[ \phi' \left(v_k^\top x^\epsilon_{m^{(x)}} \right) \left(v_k^\top x^\epsilon_{m^{(\xi)}} \right)^2 \right] \right) \right|\\
        &= \left| \sum_{k = 1}^p \phi' \left(v_k^\top x^\epsilon_{m^{(x)}} \right) \left(v_k^\top x^\epsilon_{m^{(\xi)}} \right)^2 - p \, \E \left[ \phi' \left(\rho_x^\epsilon\right) \left( \rho^\epsilon_\xi \right)^2 \right] \right| \\
        &= O \left( \sqrt{dp \log d }\right) = o(p),
    \end{aligned}
    \end{equation}
    where we used Assumption \ref{ass:scalings} and introduced $\rho_x^\epsilon$ and $\rho_\xi^\epsilon$ as two standard Gaussian variables with correlation $\left(x^\epsilon_{m^{(x)}}\right)^\top x^\epsilon_{m^{(\xi)}} / d$. Note that
    \begin{equation}
        \left( \rho^\epsilon_\xi \right)^2 = 1 + \sqrt 2 \frac{\left( \rho^\epsilon_\xi \right)^2 - 1}{\sqrt 2} = h_0 \left(\rho^\epsilon_\xi \right) + \sqrt 2 h_2 \left(\rho^\epsilon_\xi \right),
    \end{equation}
    where $h_0$ and $h_2$ denote the 0th and 2nd Hermite polynomials. Thus, we have that
    \begin{equation}\label{eq:2hermiteonnet}
        \E \left[ \phi' \left(\rho_x^\epsilon\right) \left( \rho^\epsilon_\xi \right)^2 \right] = \mu_0^{\phi'} + \sqrt 2 \mu_2^{\phi'} \left(\frac{ \left(x^\epsilon_{m^{(x)}}\right)^\top x^\epsilon_{m^{(\xi)}}}{d} \right)^2,
    \end{equation}
    where $\mu_0^{\phi'}$ is the 0th Hermite coefficient of $\phi'$. Note that $\mu_0^{\phi'}$ corresponds to the 1st Hermite coefficient of $\phi$, which is $\mu_1 \neq 0$ due to Assumption \ref{ass:activation}.
    The second term on the RHS of (\ref{eq:2hermiteonnet}) can be bounded via
    \begin{equation}
    \begin{aligned}
        \left| \left(x^\epsilon_{m^{(x)}}\right)^\top x^\epsilon_{m^{(\xi)}} \right| &\leq \norm{x^\epsilon_{m^{(x)}}}_2 \norm{x^\epsilon_{m^{(\xi)}} - \xi}_2 + \norm{x^\epsilon_{m^{(x)}} - x_1 - \eps_1}_2 \norm{\xi}_2 +         \left| \left(x_1 + \eps_1\right)^\top \xi \right| \\
        &\leq \frac{2}{d} + \left| \frac{x_1^\top \left( \eps_1 - \eps_2 \right)}{\delta} \right| + \left| \frac{\eps_1^\top \left( \eps_1 - \eps_2 \right)}{\delta} \right| = O \left( \frac{1}{d} + d \eps \right),
    \end{aligned}
    \end{equation}
where we used (\ref{eq:eps12perpx1}) in the last step.
    Then, plugging in (\ref{eq:2hermiteonnet}), we obtain
    \begin{equation}\label{eq:deltanetsecondterm}
        \left| \E \left[ \phi' \left(\rho_x^\epsilon\right) \left( \rho^\epsilon_\xi \right)^2 \right] - \mu_1 \right| = O \left( \frac{1}{d^4} + \eps^2 \right) = o(1).
    \end{equation}
    We also have 
    \begin{equation}
    \begin{aligned}
        & \left| \phi' \left(v_k^\top x^\epsilon_{m^{(x)}} \right) \left(v_k^\top x^\epsilon_{m^{(\xi)}} \right)^2 - \phi' \left(v_k^\top (x_1 + \eps_1) \right) \left(v_k^\top \xi \right)^2 \right|  \\
         & \leq  \left| \left( \phi' \left(v_k^\top x^\epsilon_{m^{(x)}} \right) - \phi' \left(v_k^\top (x_1 + \eps_1) \right) \right) \left(v_k^\top x^\epsilon_{m^{(\xi)}} \right)^2  \right| \\
        &\qquad  + \left| \phi' \left(v_k^\top (x_1 + \eps_1) \right) \left( \left(v_k^\top x^\epsilon_{m^{(\xi)}} \right)^2 - \left(v_k^\top \xi \right)^2  \right) \right|  \\
         & \leq L' \norm{v_k}_2 \norm{x^\epsilon_{m^{(x)}} - x_1 - \eps_1}_2 \norm{v_k}_2^2 d + 2 L \sqrt{d} \norm{v_k}_2^2 \norm{x^\epsilon_{m^{(\xi)}} - \xi}_2 \\
         & \leq C_2 \frac{\norm{v_k}_2^2 + \norm{v_k}_2^3}{\sqrt d},
    \end{aligned}
    \end{equation}
    where $C_2$ is some positive constant. This yields
    \begin{equation}\label{eq:deltanetfirstterm}
        \sum_{k= 1}^p \left| \phi' \left(v_k^\top x^\epsilon_{m^{(x)}} \right) \left(v_k^\top x^\epsilon_{m^{(\xi)}} \right)^2 - \phi' \left(v_k^\top (x_1 + \eps_1) \right) \left(v_k^\top \xi \right)^2 \right| = O \left( \frac{p}{\sqrt d} \right) = o(p),
    \end{equation}
    with probability at least $1 - 2 \exp \left( - c_5 \log^2 d \right)$ due to Lemma \ref{lemma:facts2}.
    Then, applying the triangle inequality to (\ref{eq:deltabernstein}), (\ref{eq:deltanetsecondterm}), and (\ref{eq:deltanetfirstterm}), the proof is complete.
\end{proof}

\begin{lemma}\label{lemma:a1vsa2}
    Suppose $\eps = o(1)$. Then, we have that
    \begin{equation}
        \left| a_1 + a_2 \right| = O \left( \left| a_2 \right| \delta + \frac{\log d}{\sqrt d} \right),
    \end{equation}
    with probability at least $1 - 2 \exp \left( -c \log^2 d \right)$ over $V$ and $X$.
\end{lemma}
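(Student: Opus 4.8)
The plan is to take the inner product of the expansion (\ref{eq:contradictionexpansion}) with $V x_1 \in \R^p$ and argue that the first term on its right-hand side dominates. Write $S_0 := \varphi(x_2)^\top V x_1$, $S_1 := \varphi(x_1+\eps_1)^\top V x_1$, and let $S_2$ (respectively $S_3$) denote the inner product of $V x_1$ with the second-order Taylor remainder (respectively with $\phi'(V(x_1+\eps_1)) \circ (V(\eps_2-\eps_1))$) appearing in (\ref{eq:contradictionexpansion}); that identity then reads $S_0 = (a_1+a_2)S_1 + a_2 S_2 + a_2 S_3$. I would show that $S_1 = \mu_1 p\,(1+o(1))$, while $S_0$, $a_2 S_2$, and $a_2 S_3$ are all $o(|a_2|\,\mu_1 p\,\delta) + O(\mu_1 p\,\log d/\sqrt d)$, and then solve for $|a_1+a_2|$.

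For $S_1$ and $S_0$ I would use that for any $u,z \in \sqrt d\,\mathbb S^{d-1}$ one has $\E_v[\phi(v^\top u)(v^\top z)] = \mu_1\,u^\top z/d$ (only the first-order Hermite component of $\phi$ survives against the pure first-order Hermite $v^\top z$), and that each summand $\phi(v_k^\top u)(v_k^\top z)$ is sub-exponential since $\phi$ is Lipschitz. For $S_1$ we have $u = x_1+\eps_1$, $z = x_1$, and (\ref{eq:epsperpx1}) gives $u^\top z/d = 1 - \tfrac{1}{2d}\norm{\eps_1}_2^2 = 1 - O(\eps^2)$, so $\E[S_1] = \mu_1 p\,(1-O(\eps^2))$; the concentration has to hold uniformly over all candidate $\eps_1$ with $x_1+\eps_1 \in \sqrt d\,\mathbb S^{d-1}$, which I would obtain with the $\epsilon$-net argument on the sphere already used in Lemma~\ref{lemma:facts} (Bernstein at deviation $\asymp \sqrt d\log d$ per net point, union bound, and transfer to arbitrary points via Lipschitzness of $\phi$ and $\opnorm{V} = O(\sqrt{p/d})$ from Lemma~\ref{lemma:facts2}), giving $S_1 = \mu_1 p\,(1+o(1))$ by $\eps = o(1)$ and Assumption~\ref{ass:scalings}. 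For $S_0$ we have $u = x_2$, $z = x_1$ fixed, with $u^\top z/d = x_1^\top x_2/d = O(\log d/\sqrt d)$ with overwhelming probability by Assumption~\ref{ass:data}, so a plain Bernstein bound (no net) gives $S_0 = O(p\log d/\sqrt d)$.

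For $a_2 S_2$ and $a_2 S_3$ I would set $\xi = (\eps_2-\eps_1)/\delta \in \sqrt d\,\mathbb S^{d-1}$ as in (\ref{eq:defxi}), so $S_3 = \delta\sum_k \phi'(v_k^\top(x_1+\eps_1))(v_k^\top\xi)(v_k^\top x_1)$. By the generalized Stein's lemma, the $v_k$-expectation of a summand is $\tfrac{\xi^\top(x_1+\eps_1)}{d}\E[\phi''(v^\top(x_1+\eps_1))(v^\top x_1)] + \tfrac{\xi^\top x_1}{d}\mu_1$; by (\ref{eq:eps12perpx1}), which forces $\xi$ to be almost orthogonal to $x_1$, together with $\eps = o(1)$, both prefactors are $O(\eps)$, so the mean contributes $O(p\eps)$, while the fluctuation is $O(\sqrt{pd}\log d)$ by Bernstein over the net (the summands are sub-exponential since $\phi'$ is bounded); hence $a_2 S_3 = O(|a_2|\delta(p\eps + \sqrt{pd}\log d))$. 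For $S_2$, the mean-value theorem (exactly as in Lemma~\ref{lemma:secondorder}) bounds the $k$-th entry of the remainder by $L'\delta^2(v_k^\top\xi)^2$, so $|S_2| \le L'\delta^2\sum_k|v_k^\top x_1|(v_k^\top\xi)^2$, and I would show this last sum is $O(p)$ uniformly over $\xi$ by the truncation-plus-Bernstein argument behind Lemmas~\ref{lemma:subexp2tails} and~\ref{lemma:secondorder} (truncate $|v_k^\top x_1|$ at $O(\log d)$; as $x_1$ is fixed, the truncation is inactive with overwhelming probability), giving $a_2 S_2 = O(|a_2|p\delta^2)$.

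Collecting the bounds,
\[
  |a_1+a_2| = \frac{|S_0| + |a_2|\,|S_2| + |a_2|\,|S_3|}{\mu_1 p\,(1+o(1))} = O\!\left(\frac{\log d}{\sqrt d} + |a_2|\delta^2 + |a_2|\delta\eps + |a_2|\delta\sqrt{\tfrac{d}{p}}\log d\right),
\]
and since $\delta = O(\eps) = o(1)$ and $\sqrt{d/p}\,\log d = o(1)$ by Assumption~\ref{ass:scalings}, the last three groups are $o(|a_2|\delta)$, which yields the claim. The hard part will be the uniform control of $S_2$ and $S_3$ over all admissible $\eps_1,\eps_2$: the relevant summands are cubic in Gaussian variables, hence only $\psi_{2/3}$-integrable, so a per-point Bernstein estimate must be coupled with an $\epsilon$-net of the sphere whose failure probability $e^{-\Omega(d\log^2 d)}$ beats the net cardinality $e^{O(d\log d)}$ — precisely the device of Lemmas~\ref{lemma:subexp2tails} and~\ref{lemma:secondorder}, which I would reuse.
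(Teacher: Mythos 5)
Your overall scheme — take the inner product of the expansion with $V x_1$, show the $\varphi(x_1+\eps_1)$ term produces the dominant $\mu_1 p$, bound $\varphi(x_2)^\top V x_1$ by $O(p\log d/\sqrt d)$, and control the $a_2$-dependent part — is exactly the paper's. Your estimates for $S_0$ and $S_1$ match the paper's (eqs.\ (\ref{eq:inner12}) and (\ref{eq:RHSs})). Where you diverge is the treatment of the $a_2$-dependent part: the paper uses the \emph{two-term} grouping $\varphi(x_2) = (a_1+a_2)\varphi(x_1+\eps_1) + a_2(\varphi(x_1+\eps_2)-\varphi(x_1+\eps_1))$ and dispatches the second piece in one line by Cauchy--Schwarz, $|x_1^\top V^\top(\varphi(x_1+\eps_2)-\varphi(x_1+\eps_1))| \le \norm{Vx_1}_2 \cdot L\,\opnorm{V}\norm{\eps_2-\eps_1}_2 = O(p\delta)$, which is uniform in $\eps_1,\eps_2$ with no net argument at all. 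You instead split further into $S_2$ and $S_3$ and chase $o(|a_2|\delta)$, which is more than the lemma asks for (it only needs $O(|a_2|\delta)$).

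There is a genuine (if corner-case) gap in your $S_2$ estimate. Bounding the remainder by $L'\delta^2(v_k^\top\xi)^2$ alone, rather than by $\min\!\bigl(2L,\,L'\delta|v_k^\top\xi|\bigr)\delta|v_k^\top\xi|$, loses the free truncation coming from $|\phi'|\le L$. You then have to truncate $|v_k^\top x_1|$ at $M = O(\log d)$, and Lemma~\ref{lemma:subexp2tails} gives a sub-exponential $\alpha = O(\log d)$. Union-bounding over a net of cardinality $e^{O(d\log d)}$ forces a deviation of order $\max\!\bigl(\sqrt{pd}\log d,\ d\log^3 d\bigr)$. Under Assumption~\ref{ass:scalings} with $n = O(1)$ the second term need not be $O(p)$: e.g.\ $p = d\log^2 d\cdot\omega(1)$ with $\omega(1)$ slower than $\log d$ gives $d\log^3 d = \omega(p)$, and then $\delta^2\cdot O(d\log^3 d)/p$ is not $O(\delta)$ for all admissible $\delta = o(1)$. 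So the advertised $O(|a_2|\delta^2)$ for the $S_2$ contribution is not safely uniform over $\xi$. The fix is easy and brings you back to the paper's bound: keep $|R_k|\le 2L\delta|v_k^\top\xi|$, so $\norm{R}_2 = O(\sqrt p\,\delta)$ and $|S_2| \le \norm{Vx_1}_2\norm{R}_2 = O(p\delta)$ uniformly, hence $|a_2 S_2|/(\mu_1 p) = O(|a_2|\delta)$ — exactly what the claim needs and what the paper obtains in one stroke. Your $S_3$ estimate via Stein's lemma and (\ref{eq:eps12perpx1}) is fine, but unnecessary for this lemma.
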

\begin{proof}

    Consider (\ref{eq:contradictionexpansion}), written as
    \begin{equation}\label{eq:contrexpshort2first}
        \varphi(x_2) = (a_1 + a_2) \varphi(x_1 + \eps_1) + a_2 \left( \varphi(x_1 + \eps_2)  - \varphi(x_1 + \eps_1) \right).
    \end{equation}

    Let us now take an inner product of both sides of (\ref{eq:contrexpshort2first}) with $V x_1$. Due to Lemma \ref{lemma:facts2} and since $x_1$ is a sub-Gaussian vector independent of $V$ and $x_2$ due to Assumption \ref{ass:data}, we have that
    \begin{equation}\label{eq:inner12}
        \left | x_1^\top V^\top \varphi(x_2) \right| = O \left( \frac{p}{\sqrt d} \log d \right),
    \end{equation}
    with probability at least $1 - 2 \exp \left( -c_1 \log^2 d \right)$ over $x_1$ and $V$.
    Then, consider
    \begin{equation}\label{eq:inner11}
    \begin{aligned}
        \left | x_1^\top V^\top \varphi(x_1 + \eps_1) - \mu_1 p \right| &\leq \left | x_1^\top V^\top \phi \left( V (x_1 + \eps_1) \right) - x_1^\top V^\top \phi(V x_1) \right| \\
        & \qquad + \left | x_1^\top V^\top \phi(V x_1) - \mu_1 p \right|,
    \end{aligned}
    \end{equation}
    and let us bound the two terms on the RHS separately. For the first one, since $\phi$ is $L$-Lipschitz by Assumption \ref{ass:activation}, we have that
    \begin{equation}\label{eq:inner11part1}
        \left | x_1^\top V^\top \phi \left( V (x_1 + \eps_1) \right) - x_1^\top V^\top \phi(V x_1) \right| \leq L \norm{ V x_1}_2 \norm{V \eps_1}_2 = O \left( p \eps \right),
    \end{equation}
    with probability at least $1 - 2 \exp \left( -c_1 \log^2 d \right)$ due to Lemma \ref{lemma:facts2} and (\ref{eq:defeps}). For the second term in the RHS of (\ref{eq:inner11}), using the Hermite decomposition of $\phi$, we have that
    \begin{equation}
        \left | x_1^\top V^\top \phi(V x_1) - \mu_1 p \right| = \left | \sum_{k = 1}^p  v_k^\top x_1 \phi( v_k^\top x_1) - \E_{v_k} \left[ v_k^\top x_1 \phi( v_k^\top x_1)  \right] \right|.
    \end{equation}
    Since $\phi$ is Lipschitz, we have that $\phi( v_k^\top x_1)$ is a sub-Gaussian random variable (with respect to $v_k$), and thus $v_k^\top x_1 \phi( v_k^\top x_1)$ are $p$ i.i.d. sub-exponential random variables. Thus, Bernstein inequality (see Theorem 2.8.1. in \citep{vershynin2018high}) yields
    \begin{equation}\label{eq:a1bern}
        \left | x_1^\top V^\top \phi(V x_1) - \mu_1 p \right| = O \left( \sqrt p \log d \right)
    \end{equation}
    with probability at least $1 - 2 \exp \left( -c_3 \log^2 d \right)$ over $V$. Then, plugging this and (\ref{eq:inner11part1}) in (\ref{eq:inner11}), together with the fact that $\mu_1 \neq 0$ by Assumption \ref{ass:activation}, we have
    \begin{equation}\label{eq:RHSs}
        \left | x_1^\top V^\top \varphi(x_1 + \eps_1) - \mu_1 p \right| = O \left( p \eps + \sqrt p \log d \right) = o(p),
    \end{equation}
    with probability at least $1 - 2 \exp \left( -c_4 \log^2 d \right)$ over $V$.    
    Considering now the last term in (\ref{eq:contrexpshort2first}), we have
    \begin{equation}
        \norm{\varphi(x_1 + \eps_2) - \varphi(x_1 + \eps_1)}_2 = O \left( \sqrt p \delta \right),
    \end{equation}
    due to the Lipschitzness of $\phi$ and due to Lemma \ref{lemma:facts2}. Thus, with probability $1 - 2 \exp \left( -c_5 \log^2 d \right)$, we have
    \begin{equation}\label{eq:inner1e}
        \left | x_1^\top V^\top \left( \varphi(x_1 + \eps_2) - \varphi(x_1 + \eps_1) \right) \right| = O \left( p \delta \right),
    \end{equation}
    where we used again Lemma \ref{lemma:facts2}.
    Then, plugging (\ref{eq:inner12}), (\ref{eq:RHSs}) and (\ref{eq:inner1e}) in (\ref{eq:contrexpshort2first}), an application of the triangle inequality yields
    \begin{equation}\label{eq:bounda1a2}
        \left| a_1 + a_2 \right| p = O \left( \left| a_2 \right| p \delta + \frac{p \log d}{\sqrt d} \right),
    \end{equation}
    which gives the desired result.
\end{proof}

\begin{lemma}\label{lemma:a1a2}
    Suppose $\eps = o(1)$. Then, we have that
    \begin{equation}
        | a_2 | = O(\delta^{-1}), \qquad \left| a_1 + a_2 \right| = O(1),
    \end{equation}
    with probability at least $1 - 2 \exp \left( -c \log^2 d \right)$ over $V$ and $X$.
\end{lemma}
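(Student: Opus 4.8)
The plan is to obtain the bound on $|a_2|$ by testing the expansion (\ref{eq:contradictionexpansion}) against the vector $V(\eps_2-\eps_1)$. Taking the inner product of both sides of (\ref{eq:contradictionexpansion}) with $V(\eps_2-\eps_1)$ produces an identity of the form
\[
 L \;=\; (a_1+a_2)\,A + a_2\,B + a_2\,D,
\]
where $L=(\eps_2-\eps_1)^\top V^\top\varphi(x_2)$, $A=(\eps_2-\eps_1)^\top V^\top\varphi(x_1+\eps_1)$, $B$ is the inner product of $V(\eps_2-\eps_1)$ with the second-order remainder term of (\ref{eq:contradictionexpansion}), and $D=(\eps_2-\eps_1)^\top V^\top\!\big(\phi'(V(x_1+\eps_1))\circ(V(\eps_2-\eps_1))\big)$. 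The strategy is to show that, among the three terms on the right, only $a_2D$ reaches the scale $p\delta^2$, which then forces $|a_2|$ to be no larger than $|L|$ divided by $p\delta^2$, up to constants.

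For the individual pieces I would argue as follows. By Cauchy--Schwarz together with $\opnorm{V}=O(\sqrt{p/d})$ and $\sup_{x\in\sqrt d\,\mathbb S^{d-1}}\norm{\varphi(x)}_2=O(\sqrt p)$ from Lemma \ref{lemma:facts2}, one gets $|L|\le\opnorm{V}\,\norm{\eps_2-\eps_1}_2\,\norm{\varphi(x_2)}_2=O(p\delta)$. Lemma \ref{lemma:secondorder} gives $|B|=O(p\delta^3+d\log^2 d\,\delta^2)$, and since $\delta=o(1)$ and $d\log^2 d=o(p)$ by Assumption \ref{ass:scalings}, this is $o(p\delta^2)$, hence $|a_2|\,|B|=o(|a_2|\,p\delta^2)$. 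Lemma \ref{lemma:epsalignement} (which is where the hypothesis $\eps=o(1)$ enters) gives $\big|D-\delta^2 p\mu_1\big|=o(p\delta^2)$, so, as $\mu_1\neq0$ by Assumption \ref{ass:activation}, $|D|\ge\tfrac12|\mu_1|\,p\delta^2$ for $d$ large.

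The delicate term is $A$: the naive Cauchy--Schwarz estimate only gives $|A|=O(p\delta)$, which, combined with Lemma \ref{lemma:a1vsa2} ($|a_1+a_2|=O(|a_2|\delta+\log d/\sqrt d)$), would leave a stray $O(|a_2|\,p\delta^2)$ on the right-hand side whose constant we cannot control. So the plan is to prove the sharper uniform bound $|A|=o(p\delta)$. Write $A=\delta\,A'$ with $A'=\xi^\top V^\top\varphi(x_1+\eps_1)$ and $\xi:=(\eps_2-\eps_1)/\delta\in\sqrt d\,\mathbb S^{d-1}$, so that $A'=\sum_{k=1}^p(v_k^\top\xi)\,\phi(v_k^\top(x_1+\eps_1))$ involves only unit-scaled vectors on the radius-$\sqrt d$ sphere and carries no explicit $\delta$. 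The coordinate-wise mean of each summand equals $\mu_1\,\xi^\top(x_1+\eps_1)/d$ (Gaussian integration by parts), and by the sphere identity (\ref{eq:eps12perpx1}) together with $\norm{\eps_1}_2=O(\sqrt d\,\eps)$ one has $|\xi^\top(x_1+\eps_1)|=O(d\,\eps)$, so this mean is $O(\eps)=o(1)$ and the total mean is $o(p)$. The centered part is a sum of $p$ independent sub-exponential variables with $\psi_1$-norm $O(1)$, so Bernstein's inequality together with an $\eps$-net over the pair $(\xi,\,x_1+\eps_1)\in(\sqrt d\,\mathbb S^{d-1})^2$ — a net of cardinality $e^{O(d\log d)}$, crucially independent of the possibly tiny scale $\delta$ — bounds it by $O(\sqrt{pd}\,\log d)=o(p)$ (using $d\log^2 d=o(p)$), with the passage from the net to arbitrary admissible $\eps_1,\eps_2$ costing only $o(p)$ via Lipschitzness of $\phi$ and the spectral/norm bounds of Lemma \ref{lemma:facts2}. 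Hence $A'=o(p)$, i.e.\ $A=o(p\delta)$, with the stated probability.

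Putting the pieces together, from $a_2D=L-(a_1+a_2)A-a_2B$ and $|a_1+a_2|=O(|a_2|\delta+\log d/\sqrt d)$ (Lemma \ref{lemma:a1vsa2}) we obtain
\[
 \tfrac12|\mu_1|\,p\delta^2\,|a_2|\;\le\;|L|+|a_1+a_2|\,|A|+|a_2|\,|B|\;=\;O(p\delta)+o\big(|a_2|\,p\delta^2\big),
\]
where $|a_1+a_2|\,|A|=O(|a_2|\delta+\log d/\sqrt d)\cdot o(p\delta)=o(|a_2|\,p\delta^2)+o(p\delta)$ thanks to the bound $|A|=o(p\delta)$. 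Absorbing the $o(|a_2|\,p\delta^2)$ term into the left-hand side yields $|a_2|=O(\delta^{-1})$, and feeding this back into Lemma \ref{lemma:a1vsa2} gives $|a_1+a_2|=O(|a_2|\delta+\log d/\sqrt d)=O(1)$; a union bound over the finitely many high-probability events keeps the total failure probability at $2\exp(-c\log^2 d)$. I expect the main obstacle to be exactly the uniform estimate $|A|=o(p\delta)$: it forces one to exploit the near-orthogonality of the $\eps_i$ to $x_1$ coming from the sphere constraint and to run a Bernstein-plus-net argument organised so that the dependence on $\delta$ factors out cleanly; everything else is routine bookkeeping with Lemmas \ref{lemma:facts2}, \ref{lemma:secondorder}, \ref{lemma:epsalignement}, and \ref{lemma:a1vsa2}.
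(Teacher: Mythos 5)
Your proposal is correct and follows essentially the same route as the paper's proof: take the inner product of (\ref{eq:contradictionexpansion}) with $V(\eps_2-\eps_1)$, bound the left-hand side by $O(p\delta)$ via Lemma \ref{lemma:facts2}, dispose of the remainder term by Lemma \ref{lemma:secondorder}, isolate the leading term $\approx \mu_1 p\delta^2 a_2$ by Lemma \ref{lemma:epsalignement}, and handle $(a_1+a_2)(\eps_2-\eps_1)^\top V^\top\varphi(x_1+\eps_1)$ by combining a Bernstein-plus-net concentration around the mean $\mu_1 p\,(\eps_2-\eps_1)^\top(x_1+\eps_1)/d$ with the near-orthogonality (\ref{eq:eps12perpx1}) and Lemma \ref{lemma:a1vsa2}. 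The only cosmetic difference is that you spell out the Stein-lemma computation of the per-summand mean and factor the scale $\delta$ out before running the net argument, whereas the paper keeps the $\delta$-dependence inside the concentration statement; the resulting bounds and the final absorption step are the same.
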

\begin{proof}
    The proof consists in considering the inner product of both sides of (\ref{eq:contradictionexpansion}), namely
    \begin{equation}\label{eq:contradictionexpansionshort}
    \begin{aligned}
        \varphi(x_2) &= (a_1 + a_2) \varphi(x_1 + \eps_1) \\
        & \qquad + a_2 \left( \left( \varphi(x_1 + \eps_2) - \varphi(x_1 + \eps_1) \right) - \phi' (V (x_1 + \eps_1)) \circ \left( V  (\eps_2 - \eps_1) \right) \right) \\
        & \qquad + a_2 \, \phi' (V (x_1 + \eps_1)) \circ \left( V  (\eps_2 - \eps_1) \right),
    \end{aligned}
    \end{equation}
    with $V (\eps_2 - \eps_1)$.

    First, we have that the LHS reads
    \begin{equation}\label{eq:together1}
        \left | (\eps_2 - \eps_1)^\top V^\top \varphi(x_2) \right| = O \left( \delta p \right),
    \end{equation}
    due to Lemma \ref{lemma:facts2} with probability at least $1 - 2 \exp \left( -c_1 \log^2 d \right)$ over $V$.
    Consider now $(\eps_2 - \eps_1)^\top V^\top \varphi(x_1 + \eps_1)$. A net argument similar to the one used to obtain the third statement in Lemma \ref{lemma:facts} yields
    \begin{equation}
        \left| (\eps_2 - \eps_1)^\top V^\top \varphi(x_1 + \eps_1) - \mu_1 p \frac{(\eps_2 - \eps_1)^\top (x_1 + \eps_1)}{d} \right| = O \left( \delta \sqrt {pd} \log d + \frac{\delta p}{d^2} \right),
    \end{equation}
    with probability at least $1 - 2 \exp \left( -c_2 \log^2 d \right)$ over $V$. This, together with (\ref{eq:eps12perpx1}) and Lemma \ref{lemma:a1vsa2}, gives
    \begin{equation}\label{eq:together2}
        \left| (a_1 + a_2) (\eps_2 - \eps_1)^\top V^\top \varphi(x_1 + \eps_1) \right| = O \left(  \left( \left| a_2 \right| \delta + \frac{\log d}{\sqrt d} \right) \left( \delta \sqrt {pd} \log d + \frac{\delta p}{d^2} + \delta \eps p \right) \right),
    \end{equation}
    with probability at least $1 - 2 \exp \left( -c_3 \log^2 d \right)$ over $V$ and $X$.
    Due to Lemma \ref{lemma:secondorder}, we have
    \begin{equation}\label{eq:together3}
    \begin{aligned}
    & \left | (\eps_2 - \eps_1)^\top V^\top \left( \left( \varphi(x_1 + \eps_2) - \varphi(x_1 + \eps_1) \right) - \phi' (V (x_1 + \eps_1)) \circ \left( V  (\eps_2 - \eps_1) \right) \right) \right| \\
    &\qquad = O \left( p \delta^3 + d \log^2 d \delta^2 \right) = o(p \delta^2 ),
    \end{aligned}
    \end{equation}
    and due to Lemma \ref{lemma:epsalignement}, we have
    \begin{equation}\label{eq:together4}
        \left| (\eps_2 - \eps_1)^\top V^\top \left( \phi' (V (x_1 + \eps_1) ) \circ (V  (\eps_2 - \eps_1)) \right) - \frac{\norm{\eps_2 - \eps_1}_2^2}{d} p \mu_1 \right| = o(p \delta^2 ),
    \end{equation}
    jointly with probability $1 - 2 \exp \left( -c_4 \log^2 d \right)$ over $V$.
    Then, taking (\ref{eq:together1}), (\ref{eq:together2}), (\ref{eq:together3}), and (\ref{eq:together4}) together, we have that 
    \begin{equation}
      |a_2|  \frac{\norm{\eps_2 - \eps_1}_2^2}{d} p \mu_1= A_1 + A_2 + A_3 + A_4,
    \end{equation}
    where
    \begin{equation}
    \begin{aligned}
        |A_1| &= O(\delta p), \\
        |A_2| &= O \left(  \left( \left| a_2 \right| \delta + \frac{\log d}{\sqrt d} \right) \left( \delta \sqrt {pd} \log d + \frac{\delta p}{d^2} + \delta \eps p \right) \right)=o(\delta p)+o \left( |a_2| p \delta^2 \right), \\
        |A_3| &= o \left( |a_2| p \delta^2 \right), \\
        |A_4| &= o \left( |a_2| p \delta^2 \right).
    \end{aligned}
    \end{equation}
As $\mu_1\neq 0$, this readily implies that 
    \begin{equation}
        |a_2| = O \left( \delta^{-1}\right),
    \end{equation}
    which, together with Lemma \ref{lemma:a1vsa2} gives the desired result.
\end{proof}

\begin{lemma}\label{lemma:isserlis}
    Suppose $\eps = o(1)$. Then, we have that
    \begin{equation}\label{eq:contrexpshort2}
         \left| a_2 \tilde \varphi(x_2)^\top \left( \varphi(x_1 + \eps_2)  - \varphi(x_1 + \eps_1) \right) \right| = o(p).
    \end{equation}
    with probability at least $1 - 2 \exp \left( -c \log^2 d \right)$ over $V$ and $X$.
\end{lemma}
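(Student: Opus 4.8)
The plan is to leverage the bound $|a_2| = O(\delta^{-1})$ from Lemma~\ref{lemma:a1a2} (the case $\eps_1 = \eps_2$, where the left-hand side vanishes, being trivial), so that the claim reduces to establishing
\begin{equation}\label{eq:isserlis-target}
    \left| \tilde \varphi(x_2)^\top \left( \varphi(x_1 + \eps_2) - \varphi(x_1 + \eps_1) \right) \right| = o(p\delta)
\end{equation}
uniformly over all admissible $\eps_1,\eps_2$, with probability at least $1-2\exp(-c\log^2 d)$. The crude estimate from Cauchy--Schwartz, the Lipschitzness of $\phi$ and Lemma~\ref{lemma:facts2} only yields $\norm{\tilde\varphi(x_2)}_2\,\norm{\varphi(x_1+\eps_2)-\varphi(x_1+\eps_1)}_2 = O(\sqrt p)\cdot O(\sqrt p\,\delta) = O(p\delta)$, so the content of (\ref{eq:isserlis-target}) is an additional vanishing factor, which I would extract from a cancellation obtained via Gaussian integration by parts.

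First I would apply the coordinate-wise mean-value theorem as in (\ref{eq:zeta}), writing $\varphi(x_1+\eps_2) - \varphi(x_1+\eps_1) = \phi'\bigl(V(x_1+\eps_1) + \zeta\bigr)\circ\bigl(V(\eps_2-\eps_1)\bigr)$ with $|\zeta_k| \le |v_k^\top(\eps_2-\eps_1)|$, and factor out $\delta$ via $\xi = (\eps_2-\eps_1)/\delta \in \sqrt d\,\mathbb S^{d-1}$ as in (\ref{eq:defxi}). This reduces (\ref{eq:isserlis-target}) to
\begin{equation}\label{eq:isserlis-reduced}
    \Bigl| \sum_{k=1}^p \tilde\phi(v_k^\top x_2)\,\phi'(v_k^\top x_1)\,(v_k^\top\xi) \Bigr| = o(p),
\end{equation}
uniformly over $\xi \in \sqrt d\,\mathbb S^{d-1}$: replacing $\phi'\bigl(v_k^\top(x_1+\eps_1)+\zeta_k\bigr)$ by $\phi'(v_k^\top x_1)$ costs at most $L'\sum_k |\tilde\phi(v_k^\top x_2)|\,\bigl(|v_k^\top\eps_1| + \delta|v_k^\top\xi|\bigr)\,|v_k^\top\xi|$ by the Lipschitzness of $\phi'$, and, writing $|v_k^\top\eps_1| \le \eps\,|v_k^\top\hat\eps_1|$ with $\hat\eps_1 = \sqrt d\,\eps_1/\norm{\eps_1}_2$, both resulting sums are $O(p)$ uniformly over $\hat\eps_1,\xi \in \sqrt d\,\mathbb S^{d-1}$ (a routine $\epsilon$-net bound: each summand has $O(1)$-bounded Orlicz norm, and the net-to-point passage uses $\norm{V}_F^2 = O(p)$ and $\sum_k\norm{v_k}_2^3 = O(p)$ from Lemma~\ref{lemma:facts2}), so this contribution is $O(\eps p + \delta p) = o(p)$ since $\eps,\delta = o(1)$.

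The heart of the proof is (\ref{eq:isserlis-reduced}). For its expectation, Gaussian integration by parts (the generalized Stein's lemma) applied to the factor $v^\top\xi$ gives, since $v\sim\mathcal N(0,I_d/d)$,
\begin{equation}\label{eq:isserlis-stein}
    \E_v\!\left[\tilde\phi(v^\top x_2)\phi'(v^\top x_1)(v^\top\xi)\right] = \frac{\xi^\top x_2}{d}\,\E_v\!\left[\tilde\phi'(v^\top x_2)\phi'(v^\top x_1)\right] + \frac{\xi^\top x_1}{d}\,\E_v\!\left[\tilde\phi(v^\top x_2)\phi''(v^\top x_1)\right],
\end{equation}
where $\phi''$ exists a.e.\ and is bounded because $\phi'$ is Lipschitz by Assumption~\ref{ass:activation}. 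Here $v^\top x_1,v^\top x_2$ are jointly standard Gaussian with correlation $c = x_1^\top x_2/d$, and $|c| \le \log d/\sqrt d$ with probability $1-2\exp(-c_0\log^2 d)$ by (\ref{eq:dataisorthogonal}); since $\tilde\phi$ has vanishing Hermite coefficients of orders $0,1,2$ (so $\tilde\phi'$ of orders $0,1$), a Hermite expansion gives $\E_v[\tilde\phi'(v^\top x_2)\phi'(v^\top x_1)] = O(c^2)$ and $\E_v[\tilde\phi(v^\top x_2)\phi''(v^\top x_1)] = O(c^3)$. Bounding $|\xi^\top x_i| \le d$, the right-hand side of (\ref{eq:isserlis-stein}) is $O(\log^2 d/d)$, hence $p$ times it is $O(p\log^2 d/d) = o(p)$. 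For the concentration of $\sum_k\tilde\phi(v_k^\top x_2)\phi'(v_k^\top x_1)(v_k^\top\xi)$ around $p$ times (\ref{eq:isserlis-stein}), I would fix an $\epsilon$-net of $\sqrt d\,\mathbb S^{d-1}$ of cardinality $e^{O(d\log d)}$ (taking $\epsilon = d^{-3/2}$), apply Bernstein's inequality at each net point with $t = \sqrt d\log d$ (each summand being sub-exponential with $O(1)$ norm), take a union bound, and bound the net-to-point difference by $O(p/\sqrt d)$ via Lemma~\ref{lemma:facts2}; this yields a uniform deviation of $O(\sqrt{pd}\log d + p/\sqrt d) = o(p)$ by Assumption~\ref{ass:scalings}, which proves (\ref{eq:isserlis-reduced}) and hence the lemma.

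The main obstacle is the expectation step (\ref{eq:isserlis-stein})--(\ref{eq:isserlis-reduced}): before integration by parts the factor $v^\top\xi$ is not approximately orthogonal to anything, since the correlations $\xi^\top x_1/d$ and $\xi^\top x_2/d$ can be $\Theta(1)$, so one a priori only controls the mean by order $p$; it is only after moving the $\xi$-dependence into prefactors $\xi^\top x_i/d = O(1)$ and recognizing that the remaining Gaussian expectations are forced to be small by the combination of the vanishing low-order Hermite coefficients of $\tilde\phi$ (Assumption~\ref{ass:activation}) and the near-orthogonality of $x_1$ and $x_2$ that the $o(p)$ gain emerges. A secondary bookkeeping point is to run every $\epsilon$-net estimate with an $O(p)$ target so that the delicate $\delta$-scaling in (\ref{eq:isserlis-target}) enters only at the final step through Lemma~\ref{lemma:a1a2}.
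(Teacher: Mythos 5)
Your overall architecture mirrors the paper's: split off the MVT term $\phi'(V(x_1+\eps_1)+\zeta)\circ V(\eps_2-\eps_1)$, extract the direction $\xi = (\eps_2-\eps_1)/\delta$, kill the main term by Gaussian integration by parts (Isserlis/Stein) plus the vanishing low-order Hermite coefficients of $\tilde\phi$ and the near-orthogonality $x_1^\top x_2 = o(d)$, and cash out the overall $|a_2| = O(\delta^{-1})$ at the end. The one structural difference is that you center the derivative at $Vx_1$ rather than at $V(x_1+\eps_1)$ as the paper does (see (\ref{eq:asinlemmasecondorder})--(\ref{eq:afterisserlis2})); this is a legitimate alternative and slightly changes which correlations enter the Stein step, but both work.

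The gap is in the ``routine $\epsilon$-net bound'' you invoke for the replacement cost. The summand $|\tilde\phi(v_k^\top x_2)|\,|v_k^\top\hat\eps_1|\,|v_k^\top\xi|$ is a \emph{product of three} sub-Gaussians, so its Orlicz control is $\psi_{2/3}$, not $\psi_1$ -- Bernstein does not apply, and the relevant tail is $\exp(-ct^{2/3})$. For a uniform deviation of order $p$, a union bound over the $e^{O(d\log d)}$-point net requires $p^{2/3} \gg d\log d$, i.e.\ $p \gg (d\log d)^{3/2}$, which is \emph{not} implied by Assumption \ref{ass:scalings} when $n$ is small (and $n = 2$ is exactly the regime of Theorem \ref{thm:reconveralln2}). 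This is not a bookkeeping slip; without a further idea the union bound genuinely fails. The missing idea is the truncation that the paper builds around in Lemma \ref{lemma:subexp2tails}: since $|\phi'| \le L$, the derivative difference obeys $|\phi'(v_k^\top(x_1+\eps_1)+\zeta_k)-\phi'(v_k^\top x_1)| \le \min\bigl(2L,\ L'(|v_k^\top\eps_1|+\delta|v_k^\top\xi|)\bigr)$, and after factoring out $\eps$ and $\delta$ the summands become $\min(M,|\rho_1|)|\rho_2\rho_3|$ with $M = O(1/\eps)$ or $O(1/\delta)$, which Lemma \ref{lemma:subexp2tails} shows to be sub-exponential with $\alpha = O(M)$. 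Plugging that into the Bernstein-plus-net argument (exactly as in Lemma \ref{lemma:secondorder} and in the paper's own (\ref{eq:asinlemmasecondorder})) yields $O((\eps+\delta)p + d\log^2 d) = o(p)$, which is what you wanted. With the truncation inserted, your argument goes through; without it, the uniform concentration step is unjustified.

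One smaller point: you bound the Stein terms by $O(c^2)$ and $O(c^3)$ using two vanishing low-order Hermite coefficients of $\tilde\phi'$ (hence three of $\tilde\phi$). This is correct but stronger than needed -- the paper only uses that the $0$-th coefficient of $\tilde\phi'$ vanishes, giving $O(|\rho_{12}|) = o(1)$, which suffices because the $\rho_{2\xi}$ prefactor is merely $O(1)$. The sharper bound costs nothing, but be aware it is not the load-bearing part.
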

\begin{proof}
    First, following the same decomposition considered in (\ref{eq:zeta}), we have
    \begin{equation}\label{eq:decompositionviazeta}
    \begin{aligned}
        & \left| a_2 \tilde \varphi(x_2)^\top \left( \varphi(x_1 + \eps_2)  - \varphi(x_1 + \eps_1) \right) \right| \\
        & \leq \left| a_2 \tilde \varphi(x_2)^\top \left( \left( \phi' (V (x_1 + \eps_1) + \zeta) - \phi' (V (x_1 + \eps_1)) \right) \circ (V  (\eps_2 - \eps_1)) \right) \right| \\
        & \qquad + \left| a_2 \tilde \varphi(x_2)^\top \phi' (V (x_1 + \eps_1)) \circ (V  (\eps_2 - \eps_1)) \right|,
    \end{aligned}
    \end{equation}
    where each entry of $\zeta$ is such that $\zeta_k \in [0, \left[ V (\eps_2 - \eps_1) \right]_k ]$ (or $\zeta_k \in [\left[ V (\eps_2 - \eps_1) \right]_k, 0]$, if $\left[ V (\eps_2 - \eps_1) \right]_k$ is negative).
    The first term on the RHS can be bounded as
    \begin{equation}\label{eq:asinlemmasecondorder}
    \begin{aligned}
    & \left| a_2 \tilde \varphi(x_2)^\top \left( \left( \phi' (V (x_1 + \eps_1) + \zeta)  - \phi' (V (x_1 + \eps_1)) \right) \circ (V  (\eps_2 - \eps_1)) \right) \right| \\
    & \qquad \leq |a_2| \sum_{k=1}^p \left| \tilde \phi(v_k^\top x_2) \right| \min \left( 2L, L' \left| v_k^\top (\eps_2 - \eps_1) \right| \right) \left| v_k^\top (\eps_2 - \eps_1) \right| \\
    & \qquad = |a_2| O \left( p \delta^2 + d \log^2 d \delta \right) = O \left( p \delta + d \log^2 d \right),
    \end{aligned}
    \end{equation}
    with probability at least $1 - 2 \exp \left( -c_1 \log^2 d \right)$ over $V$ and $X$, due to the same argument used in Lemma \ref{lemma:secondorder} and where we used also Lemma \ref{lemma:a1a2}. 
    
    Next, let us look at the second term on the RHS of (\ref{eq:decompositionviazeta}), and consider an $\epsilon \, \sqrt d$-net of $\sqrt d \, \mathbb S^{d-1}$ (as we did after (\ref{eq:newthesisxi})).
    Setting $\epsilon = d^{-2}$, we have that there exist $m^{(\xi)}$, $m^{(x_1)}$ and $m^{(x_2)}$ in $[M]$ such that $\norm{\xi - x^\epsilon_{m^{(\xi)}}}_2 \leq 1 / d^{3/2}$ (where $\xi$ is defined in (\ref{eq:defxi})), $\norm{(x_1 + \eps_1) - x^\epsilon_{m^{(x_1)}}}_2 \leq 1 / d^{3/2}$, $\norm{(x_2) - x^\epsilon_{m^{(x_2)}}}_2 \leq 1 / d^{3/2}$,
    and the size of the net is $M \leq e^{c_2 d \log d}$, where $c_2$ is an absolute positive constant.     
    Consider three generic elements of this net: $x^\epsilon_{m^{(1)}}$, $x^\epsilon_{m^{(2)}}$, and $x^\epsilon_{m^{(3)}}$, and define
    \begin{equation}
    \begin{aligned}
        T^{(m^{(1)}, m^{(2)}, m^{(3)})} &= \left| \sum_{k = 1}^p \left( \tilde \phi(v_k^\top x^\epsilon_{m^{(1)}}) \phi' \left(v_k^\top x^\epsilon_{m^{(2)}} \right) \left( v_k^\top x^\epsilon_{m^{(3)}} \right) \right. \right. \\
        & \qquad \left. \left. - \E_{v_k} \left[ \tilde \phi(v_k^\top x^\epsilon_{m^{(1)}}) \phi' \left(v_k^\top x^\epsilon_{m^{(2)}} \right) \left( v_k^\top x^\epsilon_{m^{(3)}} \right) \right] \right) \right|.
    \end{aligned}
    \end{equation}
    Since $\left| \phi' \left(v_k^\top x^\epsilon_{m^{(1)}} \right) \right| \leq L$, each element of the sum is sub-exponential (in the probability space of $v_k$). Then, the argument based on Bernstein inequality used in Lemma \ref{lemma:epsalignement}, after performing a union bound on the net, yields
    \begin{equation}\label{eq:concentration3eps}
    \begin{aligned}
        & T^{(m^{(x_2)}, m^{(x_1)}, m^{(\xi)})} = \\
        & \qquad \left| \sum_{k = 1}^p \left( \tilde \phi(v_k^\top x^\epsilon_{m^{(x_2)}}) \phi' \left(v_k^\top x^\epsilon_{m^{(x_1)}} \right) \left( v_k^\top x^\epsilon_{m^{(\xi)}} \right) \right) - p \, \E \left[ \tilde \phi \left( \rho_{x_2}^\epsilon \right) \phi' \left(\rho_{x_1}^\epsilon\right) \left( \rho^\epsilon_\xi \right) \right] \right| = o(p),
    \end{aligned}
    \end{equation}
    with probability at least $1 - 2 \exp \left( -c_3 d \log^2 d \right)$ (due to Assumption \ref{ass:scalings}), where we introduced $\rho_{x_1}^\epsilon$, $\rho_{x_2}^\epsilon$ and $\rho_\xi^\epsilon$ as three standard Gaussian variables with correlations
    \begin{equation}
    \begin{aligned}
        \rho_{12} := \mathrm{corr} \left( \rho_{x_1}^\epsilon, \rho_{x_2}^\epsilon \right) = & \frac{(x^\epsilon_{m^{(x_1)}})^\top x^\epsilon_{m^{(x_2)}}}{d}, \qquad \rho_{1\xi} := \mathrm{corr} \left( \rho_{x_1}^\epsilon, \rho_{\xi}^\epsilon \right) = \frac{(x^\epsilon_{m^{(x_1)}})^\top x^\epsilon_{m^{(\xi)}}}{d}, \\
        & \rho_{2\xi} := \mathrm{corr} \left( \rho_{x_2}^\epsilon, \rho_{\xi}^\epsilon \right) = \frac{(x^\epsilon_{m^{(x_2)}})^\top x^\epsilon_{m^{(\xi)}}}{d}.
    \end{aligned}
    \end{equation}
    Note that    \begin{equation}\label{eq:smallcorrelations}
        |\rho_{1 2}| = o(1), \qquad |\rho_{1 \xi}| = o(1),
    \end{equation}
    where the two bounds hold due to our net definition, due to $|x_1^\top x_2| = o(d)$ with probability at least $1 - 2 \exp(-c_4 \log^2 d)$ (coming from Assumption \ref{ass:data}), and due to (\ref{eq:eps12perpx1}).   
    By Isserlis' theorem (or generalized Stein's lemma) we also have \begin{equation}\label{eq:isserlis}
        \E \left[ \tilde \phi(\rho_{x_2}^\epsilon) \phi' \left( \rho_{x_1}^\epsilon \right) \rho_{\xi}^\epsilon \right] = \rho_{2 \xi} \E \left[ \tilde \phi'(\rho_{x_2}^\epsilon) \phi' \left( \rho_{x_1}^\epsilon \right) \right] + \rho_{1 \xi} \E \left[ \tilde \phi(\rho_{x_2}^\epsilon) \phi'' \left( \rho_{x_1}^\epsilon \right) \right].
    \end{equation}
    We have that
    \begin{equation}\label{eq:piece1isser}
        \left| \rho_{1 \xi} \E \left[ \tilde \phi(\rho_{x_2}^\epsilon) \phi'' \left( \rho_{x_1}^\epsilon \right) \right] \right| \leq \left| \rho_{1 \xi} \right| \E \left[ \tilde \phi(\rho_{x_2}^\epsilon)^2 \right]^{1/2} \E \left[ \phi'' \left( \rho_{x_1}^\epsilon \right)^2 \right]^{1/2} = o(1),
    \end{equation}
    where we used (\ref{eq:smallcorrelations}), $|\phi''| \leq L'$ ($L'$ being the Lipschitz constant of $\phi'$) and that $\tilde \phi$ is Lipschitz due to Assumption \ref{ass:activation}. 
    To study the first term on the LHS of (\ref{eq:isserlis}), notice that the Hermite coefficient of order 0 of $\tilde \phi'$ is 0, since the Hermite coefficient of order 1 of $\tilde \phi$ is 0 by definition. Thus, denoting by $\mu'_r, \tilde \mu_r'$ the $r$-th Hermite coefficient respectively of $\phi', \tilde\phi'$, we have
    \begin{equation}\label{eq:piece2isser}
    \begin{aligned}
        \left | \rho_{2 \eps} \E \left[ \tilde \phi'(\rho_{x_2}^\epsilon ) \phi' \left( \rho_{x_1}^\epsilon \right) \right] \right| &\leq |\rho_{2 \eps}| \left| \sum_{r = 1}^{\infty} \tilde \mu'_r \mu'_r \rho_{12}^r \right| \leq \left| \rho_{12} \right| \sum_{r = 1}^{\infty} \left| \tilde \mu'_r \mu'_r \right|  \\
        &\leq \left| \rho_{12} \right| \sqrt{ \sum_{r = 1}^{\infty} \left(\tilde \mu'_r \right)^2 } \sqrt{ \sum_{r = 1}^{\infty} \left(\mu'_r \right)^2 } \\
        & \leq \left| \rho_{12} \right| \E \left[ \tilde \phi'(\rho_{x_2}^\epsilon )^2 \right]^{1/2} \E \left[ \phi' \left( \rho_{x_1}^\epsilon  \right)^2 \right]^{1/2} = o(1),
    \end{aligned}
    \end{equation}
    where in the last step we used  (\ref{eq:smallcorrelations}), $|\phi'| \leq L$ and that $\tilde \phi$ is Lipschitz due to Assumption \ref{ass:activation}. 
    Putting (\ref{eq:piece1isser}) and (\ref{eq:piece2isser}) in (\ref{eq:isserlis}), and plugging this in (\ref{eq:concentration3eps}) yields
    \begin{equation}\label{eq:afterisserlis1}
        \left| \sum_{k = 1}^p  \tilde \phi(v_k^\top x^\epsilon_{m^{(x_2)}}) \phi' \left(v_k^\top x^\epsilon_{m^{(x_1)}} \right) \left( v_k^\top x^\epsilon_{m^{(\xi)}} \right) \right| = o(p),
    \end{equation}
    with probability at least $1 - 2 \exp(-c_5 \log^2 d)$ over $V$ and $X$. 
    
    Then, following a similar argument as the one that led to (\ref{eq:deltanetfirstterm}), we obtain
    \begin{equation}\label{eq:afterisserlis2}
        \left| \sum_{k = 1}^p  \tilde \phi(v_k^\top x^\epsilon_{m^{(x_2)}}) \phi' \left(v_k^\top x^\epsilon_{m^{(x_1)}} \right) \left( v_k^\top x^\epsilon_{m^{(\xi)}} \right) - \tilde \phi(v_k^\top x_2) \phi' \left(v_k^\top (x_1 + \eps_1) \right) \left( v_k^\top \xi \right) \right| = o(p),
    \end{equation}
    with probability at least $1 - 2 \exp \left( -c_6 \log^2 d \right)$ over $V$ due to Lemma \ref{lemma:facts2}. Putting (\ref{eq:afterisserlis1}) and (\ref{eq:afterisserlis2}) together with the result of Lemma \ref{lemma:a1a2} yields
    \begin{equation}
        \left| a_2 \tilde \varphi(x_2)^\top \phi' (V (x_1 + \eps_1)) \circ (V  (\eps_2 - \eps_1)) \right| = o(p),
    \end{equation}
    with probability at least $1 - 2 \exp \left( -c_7 \log^2 d \right)$ over $V$ and $X$. This last equation, when plugged in (\ref{eq:decompositionviazeta}) together with (\ref{eq:asinlemmasecondorder}), gives the desired result.    
\end{proof}

\paragraph{Proof of Theorem \ref{thm:reconveralln2}.}
Let us consider
\begin{equation}\label{eq:contradictionexpansionshortnew}
    \varphi(x_2) = a_1 \varphi(x_1 + \eps_1) + a_2 \varphi(x_1 + \eps_2),
\end{equation}
where
\begin{equation}
    \norm{x_1 + \eps_1}_2 = \norm{x_1 + \eps_2}_2 = \sqrt d.
\end{equation}
Suppose by contradiction that there exists a solution to the equations above such that $\eps = o(1)$.
Take the inner product of both sides of (\ref{eq:contradictionexpansionshortnew}) with the vector $\tilde \varphi(x_2)$, namely
\begin{equation}\label{eq:forcontradiction}
\begin{aligned}
\tilde \varphi(x_2)^\top \varphi(x_2) &= (a_1 + a_2) \tilde \varphi(x_2)^\top \varphi(x_1 + \eps_1) + a_2 \tilde \varphi(x_2)^\top \left( \varphi(x_1 + \eps_2) - \varphi(x_1 + \eps_1) \right).
\end{aligned}
\end{equation}

For the LHS, we have that
\begin{equation}\label{eq:forcontradiction1}
    \tilde \varphi (x_2)^\top \varphi (x_2) = \Theta(p),
\end{equation}
with probability at least $1 - 2 \exp \left( -c_1 \log^2 d \right)$, due to the first statement of Lemma \ref{lemma:facts}. For the first term of the RHS, we have
\begin{equation}\label{eq:forcontradiction21}
\begin{aligned}
    & \left| \left( a_1 + a_2 \right) \tilde \varphi (x_2)^\top \varphi(x_1 + \eps_1) - \left( a_1 + a_2 \right) \tilde \varphi (x_2)^\top \varphi(x_1) \right| \\
    & \qquad \leq | a_1 + a_2 | \norm{\tilde \varphi (x_2)}_2 L \opnorm{V} \norm{\eps_1}_2 = o(p),
\end{aligned}
\end{equation}
with probability at least $1 - 2 \exp \left( -c_2 \log^2 d \right)$, due to Lemmas \ref{lemma:facts2} and \ref{lemma:a1a2}. Then, using the Hermite expansion of $\tilde \phi$ and $\phi$ we have
\begin{equation}
    \left| \tilde \varphi (x_2)^\top \varphi(x_1) - p \sum_{r = 3}^\infty \left( \frac{x_2^\top x_1}{d} \right)^r \right| = o(p),
\end{equation}
with probability at least $1 - 2 \exp \left( -c_3 \log^2 d \right)$ due to Bernstein inequality. As we have $|x_2^\top x_1| = o(d)$ with this same probability due to Assumption \ref{ass:data}, we readily obtain
\begin{equation}\label{eq:forcontradiction22}
    \left| (a_1 + a_2) \tilde \varphi (x_2)^\top \varphi(x_1) \right| = o(p),
\end{equation}
where we also used the bound on $|a_1 + a_2|$ from Lemma \ref{lemma:a1a2}.

Finally, due to Lemma \ref{lemma:isserlis}, we have
\begin{equation}\label{eq:forcontradiction3}
    \left| a_2 \tilde \varphi(x_2)^\top \left( \varphi(x_1 + \eps_2)  - \varphi(x_1 + \eps_1) \right) \right| = o(p)
\end{equation}
with probability at least $1 - 2 \exp \left( -c_4 \log^2 d \right)$.

Plugging (\ref{eq:forcontradiction1}), (\ref{eq:forcontradiction21}), (\ref{eq:forcontradiction22}) and (\ref{eq:forcontradiction3}) in (\ref{eq:forcontradiction}) generates a contradiction with high probability. This implies that, with probability at least $1 - 2 \exp \left( -c_5 \log^2 d \right)$, we have $\eps = \Omega(1)$. Taking the intersection between this event and the one described by Theorem \ref{thm:reconveronelargen}, we obtain the thesis.
\qed

\section{Implementation details}\label{sec:impl_details}

\textbf{Computational resources.} We executed all the experiments on a machine equipped with two GPUs (an NVIDIA GeForce RTX 3090, 24 GB VRAM and an NVIDIA RTX A5000, 24 GB VRAM), an Intel(R) Core(TM) i9-10920X CPU @ 3.50GHz and 128 GB of RAM.

\textbf{Training procedure of neural networks.} By default, we train both two-layer and deep residual networks with full-batch gradient descent with step size $10^{-5}$ on square loss for $10^6$ steps, unless stated otherwise.
For the classification experiments on random features and two-layer neural networks, we set the step size to $10^{-3}$ and optimize logistic loss (in the case of binary classification) or cross-entropy loss (in the case of multi-class classification) with full-batch gradient descent for $10^6$ steps.

\textbf{Optimization details for the reconstruction algorithm.} We provide precise implementation details of the reconstruction algorithm. The inputs of our procedure are the model's optimal weights $\theta^*$ and the structure of $f_\text{RF}$, \ie, $V$ and $\phi$. As already mentioned in Section \ref{subsec:NN}, in the case of neural networks where initialization is not zero, we define $\theta^* \triangleq \theta^{(L)}_* - \theta_0^{(L)}$, where $\theta_0$ are the initial parameters and $\theta^*$ are the trained parameters after gradient descent converged on the training set. The superscript $^{(L)}$ indicates the weights of the last layer. For neural networks we also assume access to their trained weights of all layers $\{\theta^{*(1)}, \dots, \theta^{*(L)}\}$, but we only require knowledge of the initialization of the last layer $\theta_0^{(L)}$ and their internal structure. 
The reconstruction problem is solved by minimizing the following objective via gradient descent with momentum:
\begin{equation}\label{eq:appendix_recon_problem}
    \hat{X}^\star = \underset{\hat{X} : \|\hat x_i\|_2 = \sqrt{d}}{\arg \min} \left\| P^\perp_{\hat{\Phi}}\theta^*\right\|_2^2 .
\end{equation}
We initialize the rows $\hat{x}_i$ as \iid standard Gaussian vectors.
To efficiently compute the objective in (\ref{eq:appendix_recon_problem}) at each gradient descent iterate, we leverage the fact that, by definition, $P^\perp_{\hat{\Phi}}$ is symmetric ($(P^\perp_{\hat{\Phi}})^\top = P^\perp_{\hat{\Phi}}$) and idempotent ($(P^\perp_{\hat{\Phi}})^2 = P^\perp_{\hat{\Phi}}$). Expanding according to the definitions and properties above,
\begin{align*}
    \mathcal{L}(\hat X) = \left\| P^\perp_{\hat\Phi}\theta^* \right\|_2^2  &= [P^\perp_{\hat\Phi}\theta^*]^\top P^\perp_{\hat\Phi}\theta^* \\
    &= [\theta^*]^\top [P^\perp_{\hat\Phi}]^\top P^\perp_{\hat\Phi}\theta^* \\
    &= [\theta^*]^\top P^\perp_{\hat\Phi} \theta^* \\
    &= [\theta^*]^\top (I - \hat\Phi^+ \hat\Phi) \theta^* \\
    &= [\theta^*]^\top \theta^* - [\theta^*]^\top \hat\Phi^\top (\hat\Phi\hat\Phi^\top)^{-1}\hat\Phi \theta^* \\
    &= [\theta^*]^\top \theta^* - [\theta^*]^\top \hat\Phi^\top \alpha ~.
\end{align*}
Here, $\alpha$ is the solution of the system of $n$ linear equations in $n$ unknowns $(\hat\Phi\hat\Phi^\top) ~\alpha = \hat\Phi \theta^*$ which can be numerically computed via the conjugate gradient method \citep{hestenes1952methods}, skipping the need of inverting explicitly $\hat\Phi\hat\Phi^\top \in \mathbb{R}^{n \times n}$ at each iteration. After the gradient descent update on $\hat{X}$, we then normalize its rows $\hat{x}_i$ to force them to lie on the $d$-dimensional sphere $\sqrt{d}~\mathbb{S}^{d-1}$. More precisely, $\hat{x}_i \gets \sqrt{d} \cdot \hat{x}_i / \|\hat{x}_i\|_2$, thus respecting the fact that minimization should happen on the sphere as in (\ref{eq:appendix_recon_problem}). Unless stated otherwise, the step size is set by default to $2 \cdot 10^3$ for the CIFAR-10 experiments and to $20$ for the experiments on synthetic data. On Tiny-ImageNet we use a step size of $8 \cdot 10^3$. Momentum is set to $0.9$.
We consider the reconstruction optimization converged when the normalized reconstruction loss $\mathcal{L}(\hat X) / \| \theta^* \|_2^2$ drops below $10^{-7}$. Normalizing by $\| \theta^* \|_2^2$ makes the loss of order 1 at the beginning of the optimization so that the chosen threshold of $10^{-7}$ corresponds to the effective numerical resolution in floating point representation. Unless explicitly mentioned otherwise, in every experiment we perform, the optimization is run until the reconstruction loss has converged.

\textbf{Statistical robustness.} To ensure that our findings are not tied to a particular random initialization, all experiments are repeated over multiple random seeds. Each seed induces independent initializations of network parameters and of the reconstruction variables $\hat X$. All the quantitative results reported in the paper correspond to averages (and variability) across these runs.

\textbf{CIFAR-10 protocol.}
For random features experiments, we construct a balanced subset from the CIFAR-10 \citep{krizhevsky2009learning} training split by iterating once through the data and collecting the first $n/2$ occurrences of the ``\emph{frog}'' class (negative) and $n/2$ of the ``\emph{truck}'' class (positive), yielding $n$ samples. Each image is flattened to $d=3\cdot32\cdot32=3072$ and concatenated into $X\in\mathbb{R}^{n\times d}$. We standardize using subset statistics computed over the selected $n$ examples. Targets are $Y\in\{\pm1\}^n$, with the first $n/2$ entries set to $-1$ and the remaining $n/2$ to $+1$.
We use ``\emph{frog}'' vs.\ ``\emph{truck}'' purely as a convenient benchmark. Our conclusions do not hinge on category semantics, and we observed the same qualitative behavior across alternative class pairs.
For multiclass experiments, we use the same balanced construction; targets are one-hot encoded as $10$-dimensional vectors with a single $1$ at the true class index (and $0$ elsewhere).
Also for ResNet and vision transformer runs we use the same protocol as for random features, but images are kept in tensor form ($\mathbb{R}^{3\times32\times32}$) and normalized per channel using the full CIFAR-10 training split statistics (mean $[0.4914, 0.4822, 0.4465]$, std $[0.247, 0.243, 0.261]$).

\textbf{Tiny-ImageNet protocol.} We construct a balanced subset from the Tiny-ImageNet \citep{le2015tiny} training split by iterating once through the data and collecting the first $n/2$ occurrences of \emph{class 2} (negative) and $n/2$ of \emph{class 116} (positive), yielding $n$ samples. Each image is flattened to $d=3\cdot64\cdot64=12288$ and concatenated into $X\in\mathbb{R}^{n\times d}$. We standardize using subset statistics computed over the selected $n$ samples. Targets are $Y\in\{\pm1\}^n$, with the first $n/2$ entries set to $-1$ and the remaining $n/2$ to $+1$.
We have randomly drawn \emph{class 2} vs.\ \emph{class 116} and used this pair purely as a convenient benchmark. Also in this case, our conclusions do not hinge on category semantics.

\subsection{Additional details on deep residual networks}\label{sec:add_details_resnet}

In this section, we give precise implementation details of the class of deep residual networks we consider. We focus on ResNet-style architectures \citep{he2016deep} adapted to CIFAR-10 images while preserving the canonical residual topology. Specifically, we remove batch normalization and max pooling layers so that feature map resolution and statistics are maintained throughout. Apart from this, the residual block structure and overall layout are preserved. More formally, let $\phi(\cdot) = \text{ReLU}(\cdot)$ applied element-wise and let $C_{a\rightarrow b}^{k \times k}[\cdot]$ denote a bias-free 2D convolution with kernel size $k \times k$, stride 1 and padding 1, mapping from $a$ to $b$ channels. Let a residual block be
\begin{equation}
    B(u) = \phi(u + C_{h \rightarrow h}^{3 \times 3}[\phi(C_{h\rightarrow h}^{3 \times 3}[u])])~.
\end{equation}
Given an image $x \in \mathbb{R}^{3 \times 32 \times 32}$ (as in the case of CIFAR-10 examples), we consider in our experiments deep residual networks with architecture 
\begin{equation}
    z_0 = C_{3 \rightarrow h}^{28 \times 28}[x] \qquad z_b = B(z_{b-1}) ~~\text{ for } b=1,\dots,4~, \qquad f_\text{RN}(x) = [\theta^{(L)}]^\top \text{vec}(z_4)~,
\end{equation}
with $\theta^{(L)} \in \mathbb{R}^{h\cdot7\cdot7}$ the last layer's parameters. We denote with $\text{vec}(\cdot)$ the flattening of the dimensions of the 3D tensor $z_4$ to a vector. The choice of dimensionality for $\theta^{(L)}$ is motivated by the first convolution mapping the spatial size for CIFAR-10 images to $h \times 7 \times 7$ and the blocks preserving it. We select the number of output channels $h$ based on the choice of the number of parameters in the last layer $p^{(L)}$ as these two quantities are tied together, being $h = p^{(L)} / 7^2$, eventually truncated to the nearest integer. Parameters of the first convolutional layer are initialized as $\theta^{(0)}_{i,j} \sim \mathcal{N}(0, 1/d)$ \iid, while for $l > 1$ parameters are initialized as $\theta_{i,j}^{(l)}\sim\mathcal{N}(0,1/\mathrm{fan\_in})$ \iid. Here, $\mathrm{fan\_in}$ equals the number of input units (for convolutional layers: $h \cdot k^2$; for the last linear layer: $p^{(L)}$).

\section{Additional numerical results}\label{sec:additional_experiments}

\subsection{Further results on span inclusion}\label{sec:add_exp_span}
\begin{figure}[t]
    \centering
    \includegraphics[width=\linewidth]{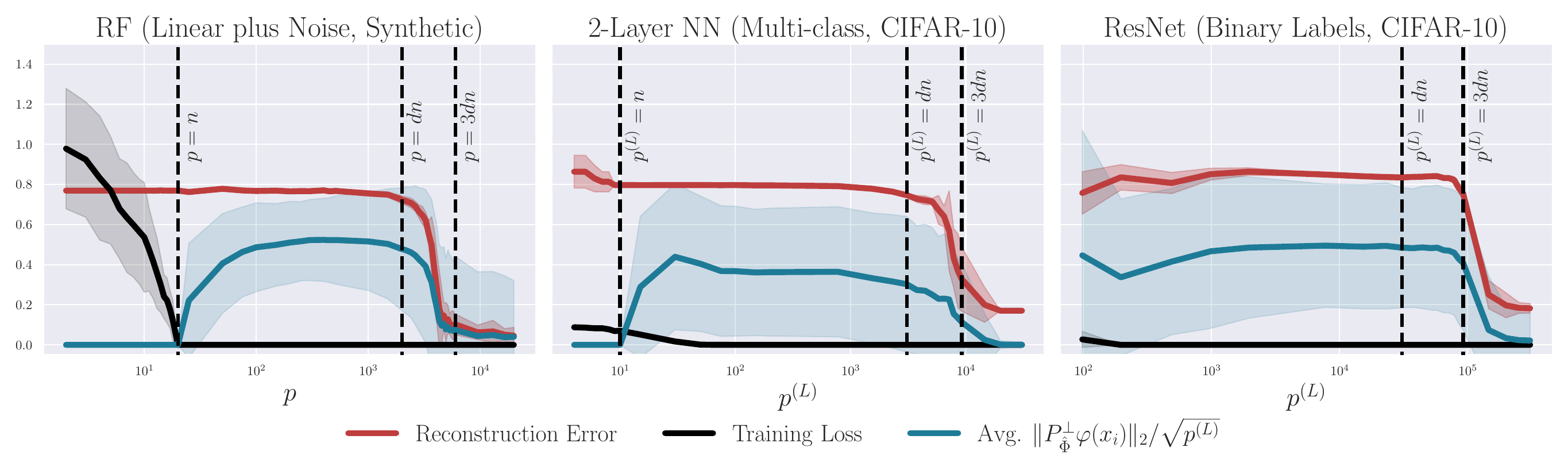}
    \caption{\textbf{Features of the training dataset $\Phi$ are spanned by the features of the reconstructed dataset $\hat \Phi$.} We consider RF regression with ReLU activation on i.i.d.\ samples uniformly distributed on the $d$-dimensional sphere ($d=100, n=20$), 10-class one-hot regression with a 2-layer ReLU network trained with gradient descent on CIFAR-10 ($n=10$) and regression on binary labels (\emph{frogs vs.\ trucks}) with a ResNet trained with gradient descent on CIFAR-10 ($n=10$). We report mean (solid line) and standard deviation (shaded area) for the reconstruction error (in red), for the training loss (in black) and for the per-feature orthogonal residual of projecting $\Phi$ onto $\Span\{\Rows(\hat\Phi)\}$ (in blue), as the number of parameters increases. We indicate with $p^{(L)}$ the number of parameters in the last layer. Statistics are computed across 10 distinct random seeds.}
    \label{fig:residuals_rfsynthd=100n=20_2layermulti_resnet_cifar10_n=10}
    
\end{figure}

As mentioned at the end of Section \ref{sec:main_results} and in the caption of Figure \ref{fig:checkspan_rf_relu_n=10_cifar10}, we test whether the features computed on the training dataset are spanned by the features of the reconstructed examples. To this end, we calculate the average per-feature orthogonal residual, formally defined as
\begin{equation}
    r(\hat\Phi) = \frac{1}{n\sqrt{p}}\sum_{i=1}^n \left\|P_{\hat\Phi}^{\perp}\,\varphi(x_i)\right\|_2,
\end{equation}
where $P_{\hat\Phi}^{\perp}=I-P_{\hat\Phi}$ projects onto the orthogonal complement of $\Span\{\Rows(\hat\Phi)\}$. Thus $r(\hat\Phi)=0$ if and only if every $\varphi(x_i)$ lies in $\Span\{\Rows(\hat\Phi)\}$. The normalization by $n\sqrt{p}$ makes $r(\hat\Phi)$ of order 1 so that values of $r(\hat\Phi)\ll 1$ indicate numerically negligible residuals (\ie, effective span inclusion).

Figure \ref{fig:residuals_rfsynthd=100n=20_2layermulti_resnet_cifar10_n=10} summarizes the interpolation-reconstruction behavior across RF regression (synthetic data, same setting of Figure \ref{fig:rf_relu_synthetic_results}), two-layer ReLU neural networks (CIFAR-10, multi-class, same setting of left Figure \ref{fig:transition_rf_relu_2layermulti_resnet_cifar10}), and deep residual networks (CIFAR-10, binary, same setting of right Figure \ref{fig:transition_rf_relu_2layermulti_resnet_cifar10}).
Both the reconstruction error (red) and the per-feature orthogonal residual (blue) remain large until the model crosses the threshold $p\approx dn$ ($p^{(L)}\approx dn$ for neural networks), after which they drop sharply. This trend is consistent with Figure \ref{fig:checkspan_rf_relu_n=10_cifar10}: successful reconstruction occurs precisely when $\varphi(x_i)\in\Span\{\Rows(\hat\Phi)\}$ for all $i$, \ie, when $\|P^{\perp}_{\hat\Phi}\varphi(x_i)\|_2\approx 0$. Also in these settings, for $p \leq n$, the optimization converges to the non-degenerate case $P_{\hat\Phi} = I$, which makes the orthogonal residual trivially zero.

\subsection{Data reconstruction on RF with mixed-parity activations}\label{sec:add_exp_relu+tanh}

\begin{figure}[t]
    \centering
    \includegraphics[width=\linewidth]{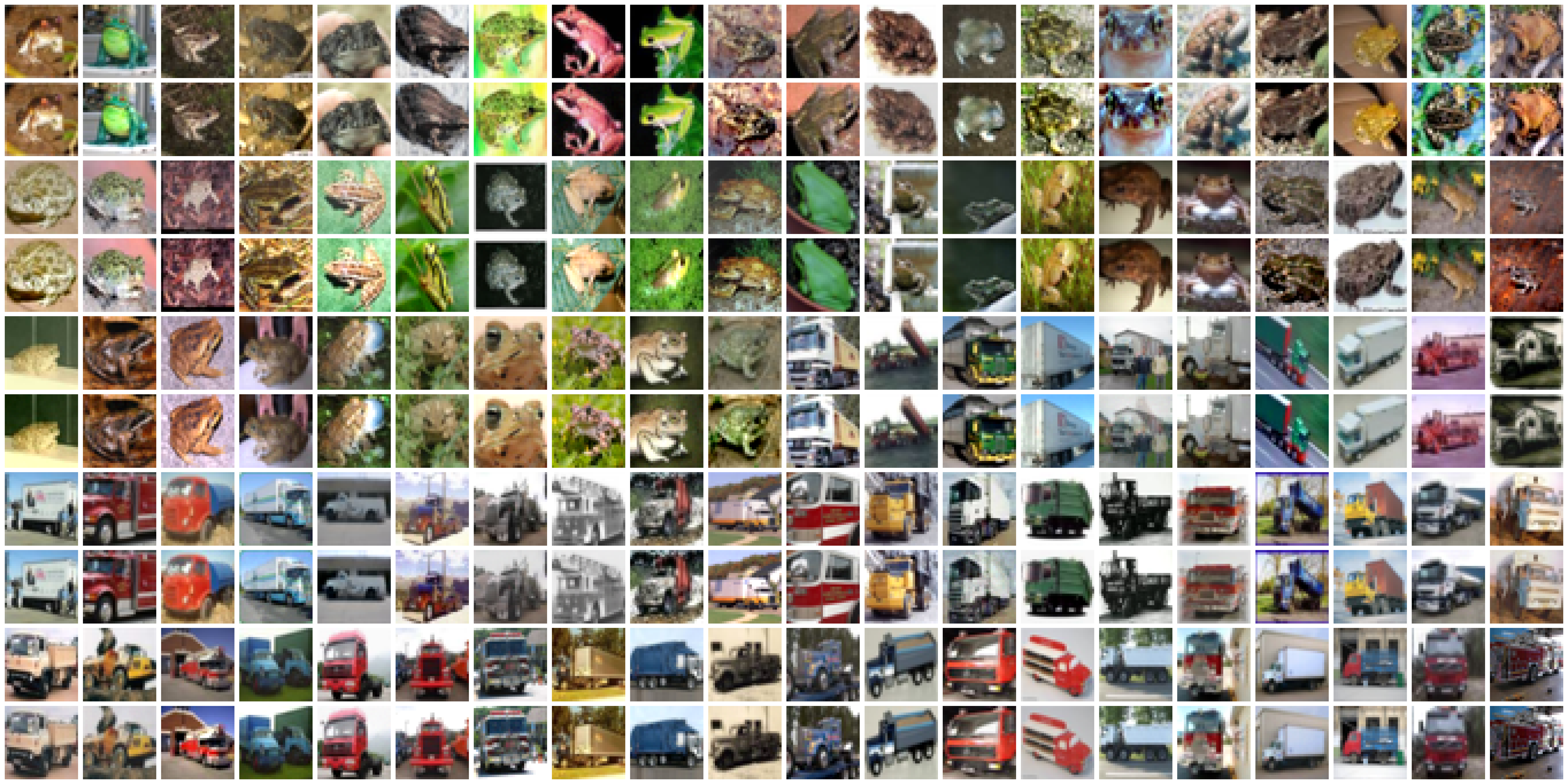}
    \caption{\textbf{Reconstructing CIFAR-10 training images from an RF model with the activation function satisfying Assumption \ref{ass:activation}.} We repeat Figure \ref{fig:rf_relu_cifar10_n=100_real} with the same seed, reconstructing CIFAR-10 training images ($n=100$) from an RF model with activation function $\phi(z) = \text{ReLU($z$)}+\tanh(z)$ on binary labels at $p=10dn$. Odd rows report the ground truth images, while even rows the reconstructed ones. In this experiment the activation function is consistent with Assumption \ref{ass:activation}, and indeed, this excludes the possibility of reconstructing sign-flipped versions of the originals.}
    \label{fig:appendix_relu+tanh}
   
\end{figure}

Figure \ref{fig:appendix_relu+tanh} repeats the setup of Figure \ref{fig:rf_relu_cifar10_n=100_real} on CIFAR-10 (binary labels, $n=100$) using the \emph{same random seed} and optimization protocol, but replacing ReLU with the activation $\phi(z)=\mathrm{ReLU}(z)+\tanh(z)$ that has high-order Hermite coefficients of different parities. In contrast to Figure \ref{fig:rf_relu_cifar10_n=100_real}, no sign-flipped reconstructions appear. This outcome is precisely what Assumption \ref{ass:activation} and Remark \ref{rmk:sign} predict: ReLU alone violates the assumption (its odd Hermite coefficients $\mu_{2\ell+1}$ vanish for $\ell>1$), allowing $x_i$ and $-x_i$ to be indistinguishable under the reconstruction loss. Adding $\tanh$ supplies non-zero coefficients of opposite parity, restoring identifiability of the sign and removing this degeneracy.

\subsection{Additional ablation studies}\label{sec:add_exp_ablation}

\textbf{Effects of different learning rates in the reconstruction optimization.}
To assess the effect of the step size on the success of the data reconstruction, we have conducted an additional ablation using Binary CIFAR-10 ($n=20$) on the RF model. Starting from the base step size $\eta^* = 2 \cdot 10^3$, we have considered $\eta \in \{ \eta^*/4, \eta^*/2, \eta^*, 2\eta^*, 4\eta^* \}$ across 10 distinct random seeds. In Figure \ref{fig:rebuttal_lr_ablation}, we plot the reconstruction error, the total number of iterations for which the reconstruction algorithm is run, and the final reconstruction loss $\mathcal L$, as a function of the number of parameters $p$.
For larger step sizes ($2\eta^*$ and $4\eta^*$), we observe oscillatory behavior of the reconstruction loss that prevents convergence within a reasonable budget. Guided by the smallest step size ($\eta^*/4$), for which the loss consistently converges to zero within at most $6 \cdot 10^4$ iterations across seeds and number of parameters $p$, we cap the optimization at $6 \cdot 10^4$ iterations in this experiment. As shown in Figure \ref{fig:rebuttal_lr_ablation}, for all step sizes that converge within this budget ($\eta^*/4, \eta^*/2, \eta^*$), the reconstruction error drops to zero once $p \gg dn$, indicating that the reconstruction procedure is robust to the choice of the step size provided it is not taken excessively large.

\textbf{Number of reconstructed samples different from number of training samples.}
So far, we only discussed the results of optimizing $\mathcal L(\hat X)$ when setting $\hat X$ to be a matrix in $\R^{n \times d}$. In Figure \ref{fig:rf_relu_n_ablation}, we numerically investigate the effects of setting $\hat X$ to be a matrix of size $\hat n \times d$, when $\hat n \neq n$. Specifically, we consider Binary CIFAR-10 with $n=50$ (25 ``frog'' images vs. 25 ``truck'' images) and a RF model with $p=10dn$.
When reconstructing fewer samples ($\hat n < n$), we observe that the outputs of the reconstruction often mix the structure coming from multiple ground-truth images. Intuitively, this translates in the fact that the reconstruction seems to be minimized when $\hat X$ has rows that are a superposition of multiple training data, rather than a subset of the training data themselves. Increasing $\hat n$ progressively reduces this noise.
Conversely, when reconstructing with more rows than training samples ($\hat n > n$), we find that $50$ of the recovered images match the $n=50$ training samples, while the extra $\hat n - n=10$ reconstructions are simply duplicates. From a practical perspective, this means that, in order to estimate $n$, it suffices to iteratively increase $\hat n$ until the first duplicate appears.

\textbf{Reconstruction from vision transformer architecture.}
In Figure \ref{fig:rebuttal_vit}, we provide numerical results on vision transformers trained on CIFAR-10 in the multi-class setting, using the same reconstruction procedure.
We study randomly initialized vision transformers \citep{dosovitskiy2021an} akin to ViT-B/16 on which we vary the embedding dimension trained on one-hot encoded labels from the first five classes of the CIFAR-10 dataset ($n=5$). Vision transformers display a similar phenomenology to fully connected and residual architectures: when the number of parameters $p^{(L)}$ exceeds $dn$, the reconstruction error goes down and images are recovered successfully.

\textbf{Effects of weight regularization.}
So far, we discussed reconstruction from the weights of a trained model without regularization, as expressed in Eq.\ (\ref{eq:thetastar}). Adding a ridge parameter $\lambda > 0$ would change the training loss as
\begin{equation}
    \mathcal L_{train}(\theta) = \frac{1}{n} \sum_{i = 1}^n \left( \varphi(x_i)^\top \theta - y_i \right)^2  + \lambda \| \theta \|_2^2,
\end{equation}
whose unique minimizer is
\begin{equation}
    \theta^* = \Phi^\top \left( \Phi \Phi^\top + n \lambda I  \right)^{-1} Y.
\end{equation}
For a fixed value of $\lambda$ (not dependent on $d, n$ and $p$), and due to Eq.\ (\ref{eq:evminK}), we have
\begin{equation}
    \opnorm{n \lambda I} = O(n), \qquad \lambda_{\min} \left(\Phi \Phi^\top \right) = \Omega(p).
\end{equation}
Intuitively this suggests that, as $p$ grows large, the effect of a fixed regularization term $\lambda$ becomes negligible, and reconstruction is still possible. This is verified numerically in Figure \ref{fig:rebuttal_regularized}, where we see that even for very large values of $\lambda$ we find a qualitatively similar threshold for successful reconstruction.

\textbf{Effects of pruning.}
In Figure \ref{fig:rebuttal_pruned}, we explore numerically the behavior of our reconstruction algorithm on a pruned network, following the same synthetic setup of Figure \ref{fig:rf_relu_synthetic_results}. We first obtain $\theta^* = \Phi^+ Y$ and then run Optimal Brain Surgeon \citep{hassibi1992second}, which in our context (convex problem on square loss) provably incurs in the minimal increase in training loss, given a fixed sparsity ratio. We select either $d=30, n=20$ (left plot of Figure \ref{fig:rf_relu_synthetic_results}) or $d=100, n=100$ (right plot of Figure \ref{fig:rf_relu_synthetic_results}), reporting the results for several sparsity ratios. As expected, pruning seems to increase the number of parameters needed for data reconstruction.

\begin{figure}[t]
    \centering
    \includegraphics[width=\linewidth]{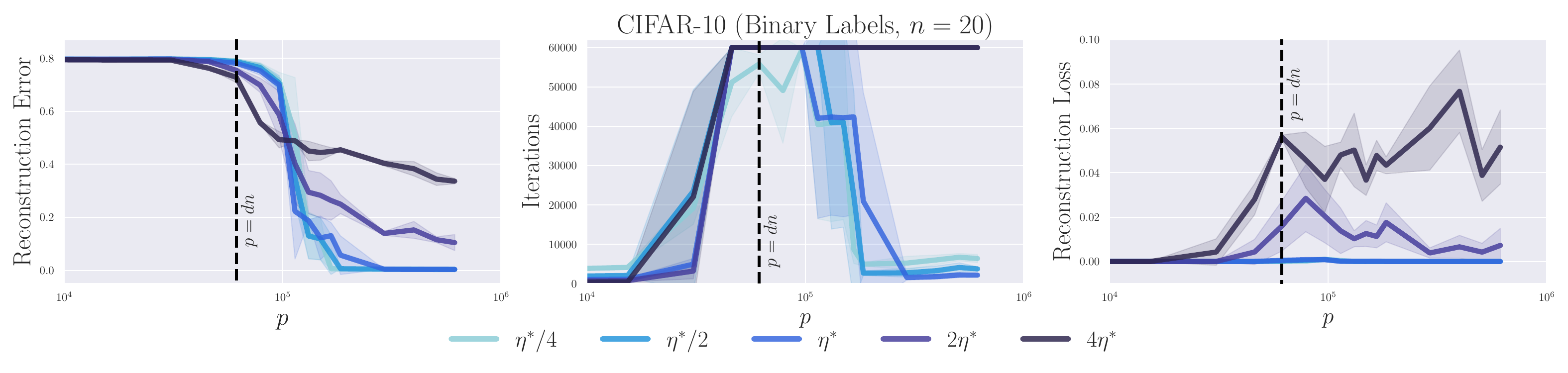}
    \caption{\textbf{Ablation on step size used for the reconstruction procedure.} We consider RF regression with ReLU activation on binary labels (\emph{frogs vs.\ trucks}) with $n=20$ images (10 examples per class) from CIFAR-10. Starting from the base step size $\eta^* = 2 \cdot 10^3$ used in all the CIFAR-10 experiments, we report the reconstruction error (left), total number of training iterations needed by the reconstruction algorithm (center) and reconstruction loss (right) as a function of the number of parameters $p$, aggregated over 10 distinct random seeds. Guided by the smallest step size ($\eta^*/4$), for which the loss consistently converges to zero within at most $6 \cdot 10^4$ iterations across seeds and number of parameters $p$, we cap the optimization at $6 \cdot 10^4$ iterations.}
    \label{fig:rebuttal_lr_ablation}
\end{figure}

\begin{figure}[t]
    \centering
    \begin{minipage}[c]{0.3\linewidth}
        
        \scriptsize{$\hat{n}=5$} \\
        \includegraphics[width=0.5\linewidth]{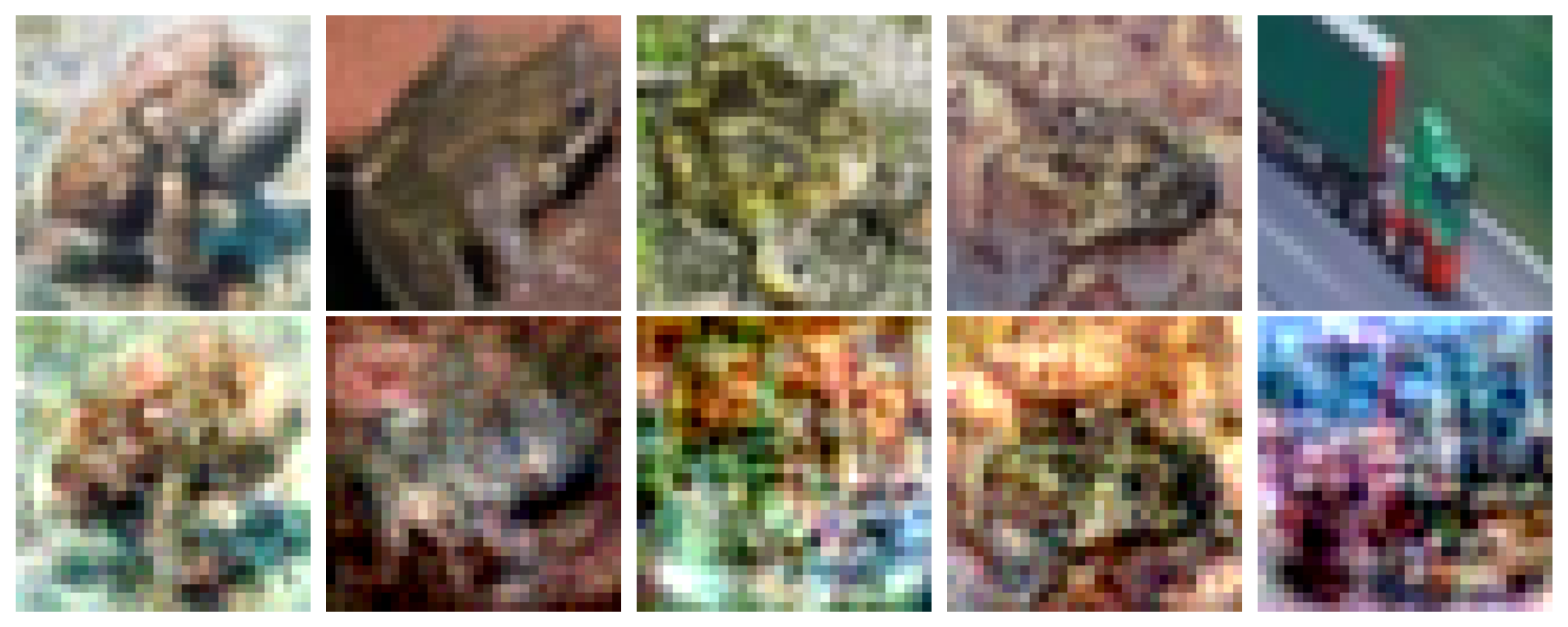}

        \scriptsize{$\hat{n}=10$} \\
        \includegraphics[width=\linewidth]{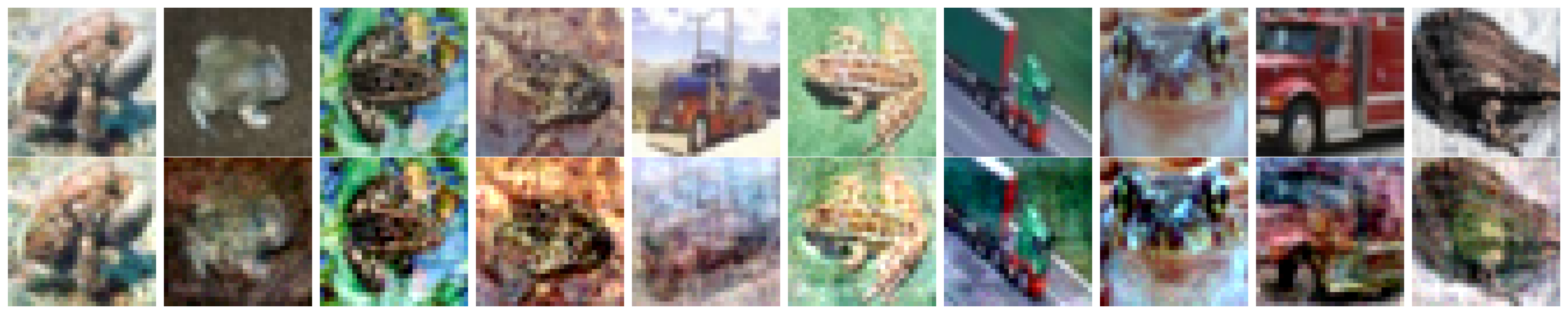}

        \scriptsize{$\hat{n}=20$} \\
        \includegraphics[width=\linewidth]{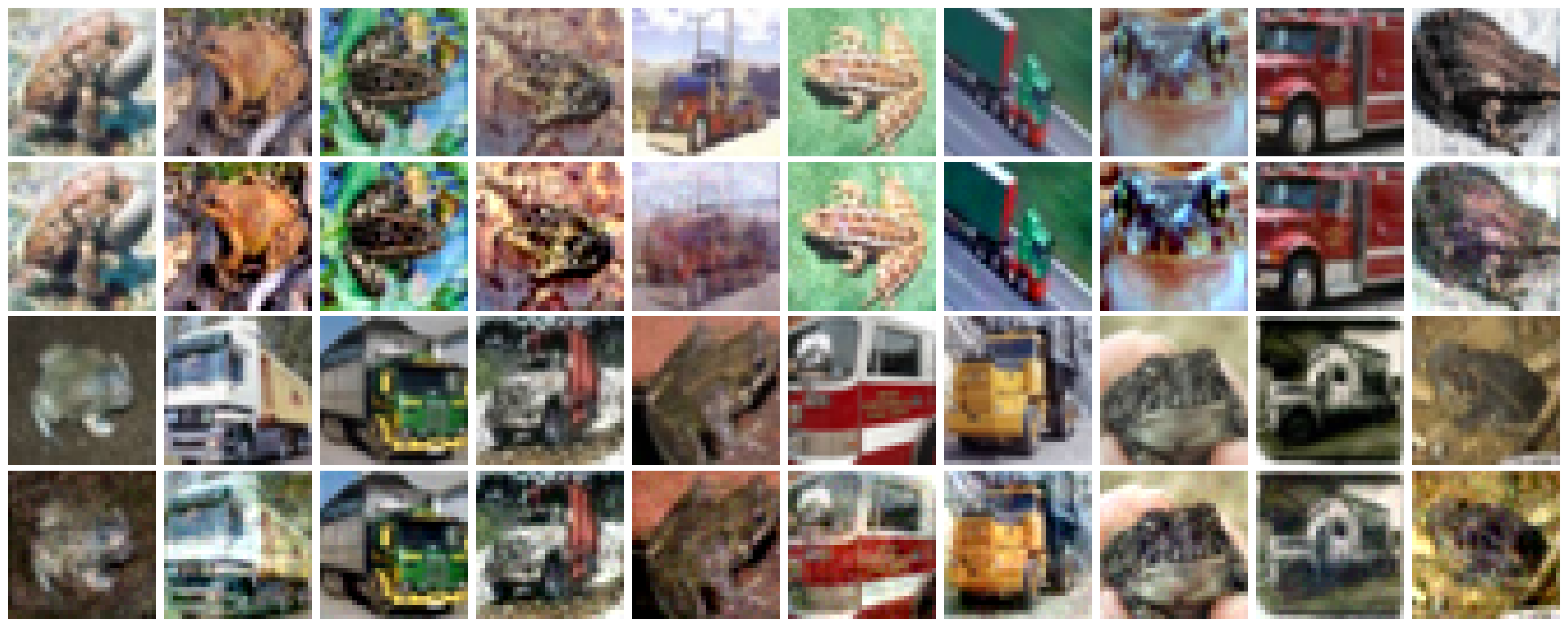}
    \end{minipage}
    \hfill
    \begin{minipage}[c]{0.68\linewidth}
        \scriptsize{$\hat{n}=60$ (Top-50 reconstructed examples)} \\
        \includegraphics[width=\linewidth]{images_rf_relu_n_ablation=60_n_real=50.pdf}

        \scriptsize{Surplus examples (as $\hat{n} > n$)} \\
        \includegraphics[width=0.4\linewidth]{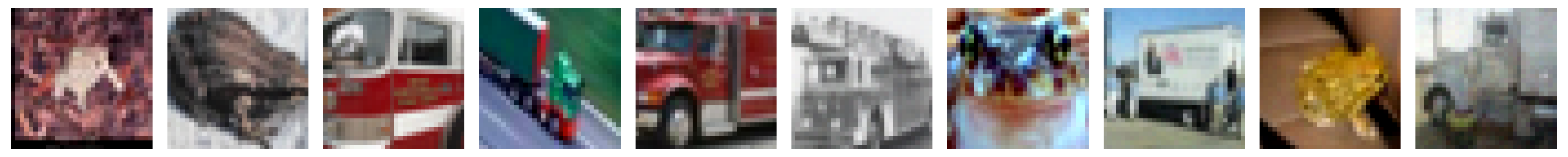}
    \end{minipage}
    
    \caption{\textbf{Dataset reconstruction without knowing the exact number of training samples $n$.} We fit an RF model with ReLU activation and $p=10dn$ on binary labels (\emph{frogs vs.\ trucks}) with $n=50$ images (25 examples per class) from CIFAR-10. We then optimize the reconstruction loss $\mathcal{L}(\hat X)$ with $\hat X \in \mathbb{R}^{\hat n \times d}$ and the number of reconstructed samples $\hat n$ different from $n$. We assess both the case when reconstructing fewer samples ($\hat n < n$, left column) and more samples ($\hat n > n$, right column) than the ground-truth number of samples $n$. In the latter case, we also plot the extra $\hat n - n=10$ examples.}
    \label{fig:rf_relu_n_ablation}
\end{figure}

\begin{figure}[t]
    \centering
    \begin{minipage}[c]{0.55\linewidth}
        \includegraphics[width=\linewidth]{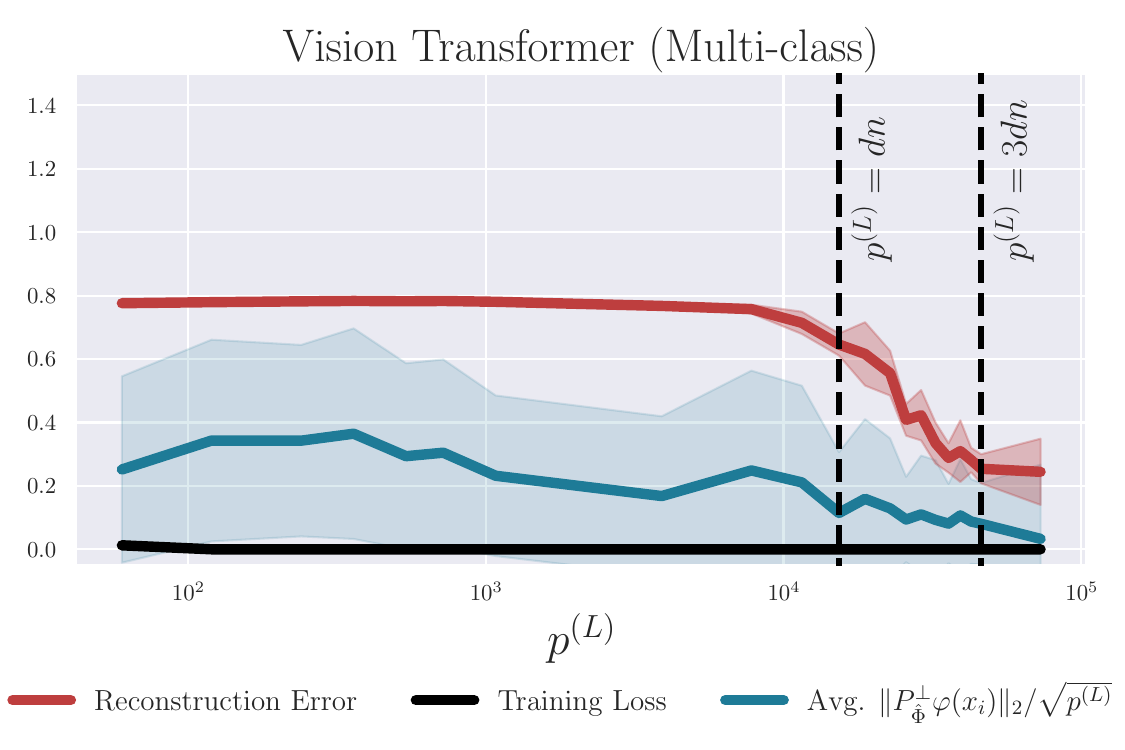}
    \end{minipage}
    \hfill
    \begin{minipage}[c]{0.43\linewidth}
        \vspace{-5mm}
        \includegraphics[width=\linewidth]{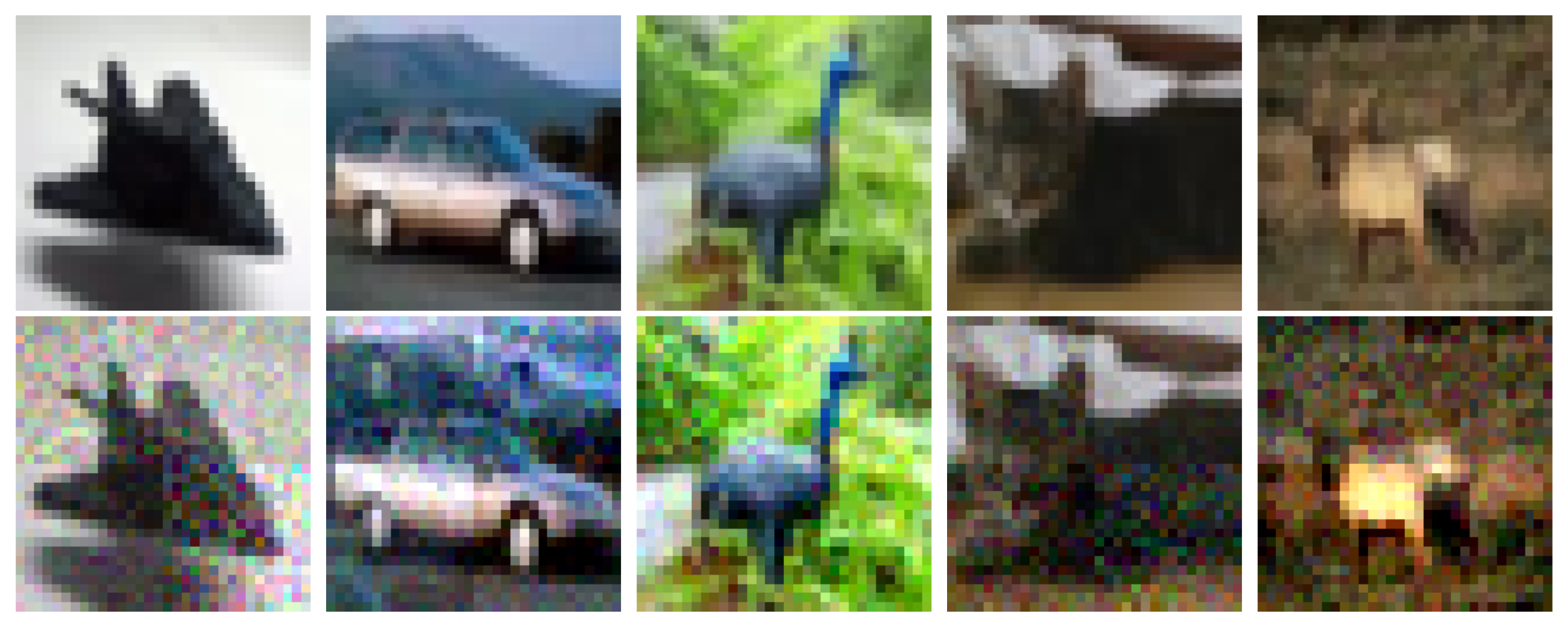}
    \end{minipage}
    
    \caption{\textbf{Thresholds for label fitting and data reconstruction for Vision Transformers trained with gradient descent on $n=5$ CIFAR-10 images.} (Left) We consider regression with the square loss, training Vision Transformers on the one-hot encoding of 5-class labels. We report mean (solid line) and standard deviation (shaded area) for the reconstruction error (in red), for the training loss (in black) and for the per-feature orthogonal residual of projecting $\Phi$ onto $\Span\{\Rows(\hat\Phi)\}$ (in blue), as the number of parameters in the last layer $p^{(L)}$ increases. Statistics are computed across 10 distinct random seeds. (Right) Results of the reconstruction when $p^{(L)}=4dn$. The first row reports the ground truth images, while the second row the reconstructed ones.}
    \label{fig:rebuttal_vit}
\end{figure}

\begin{figure}[t]
    \centering
    \begin{minipage}[c]{0.5\linewidth}
        \includegraphics[width=\linewidth]{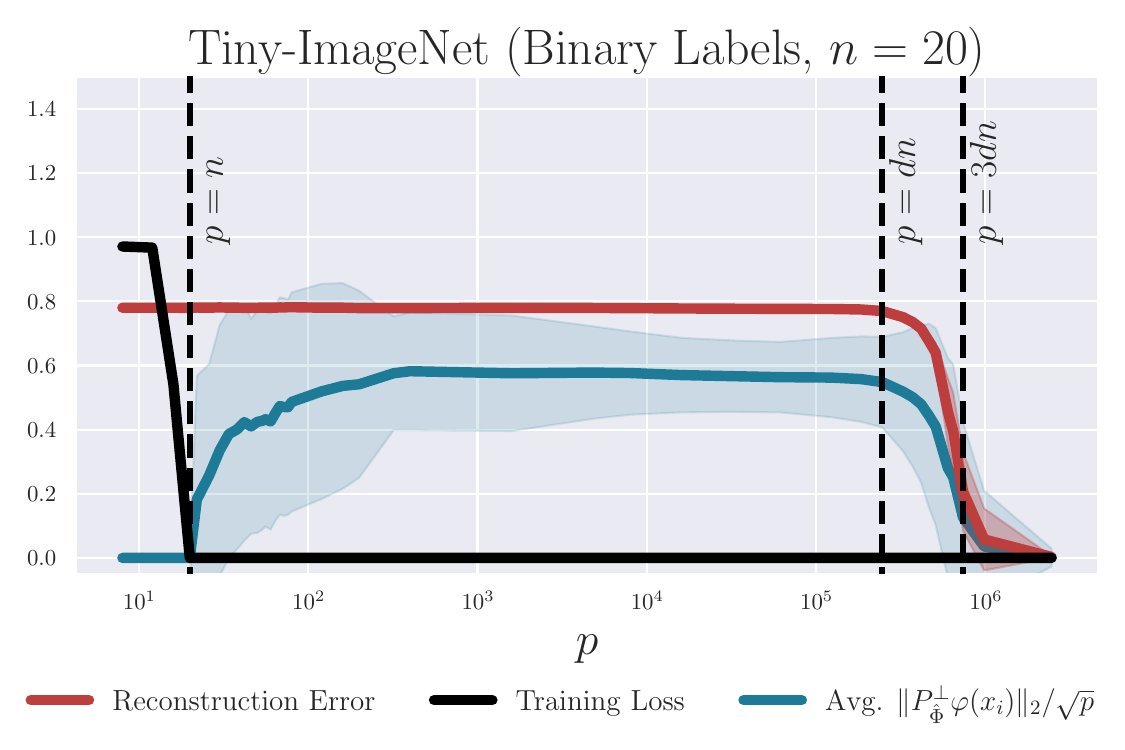}
    \end{minipage}
    \hfill
    \begin{minipage}[c]{0.48\linewidth}
        \vspace{-5mm}
        \includegraphics[width=\linewidth]{images_rebuttal_rf_tinyimagenet_right.pdf}
    \end{minipage}
    
    \caption{\textbf{Thresholds for label fitting and data reconstruction on Tiny-ImageNet.} (Left) We consider RF regression with ReLU activation on binary labels (\emph{class 2 vs. class 116}) with $n=20$ images (10 examples per class) from Tiny-ImageNet ($d=12288$). We report mean (solid line) and standard deviation (shaded area) for the reconstruction error (in red), for the training loss (in black) and for the per-feature orthogonal residual of projecting $\Phi$ onto $\Span\{\Rows(\hat\Phi)\}$ (in blue), as the number of parameters $p$ increases. Statistics are computed across 10 distinct random seeds.
    (Right) Results of the reconstruction when $p=10dn$. Odd rows report the ground truth images, while even rows the reconstructed ones which are all visually very similar.}
    \label{fig:rebuttal_tinyimagenet}
\end{figure}

\begin{figure}[t]
    \centering
    \includegraphics[width=\linewidth]{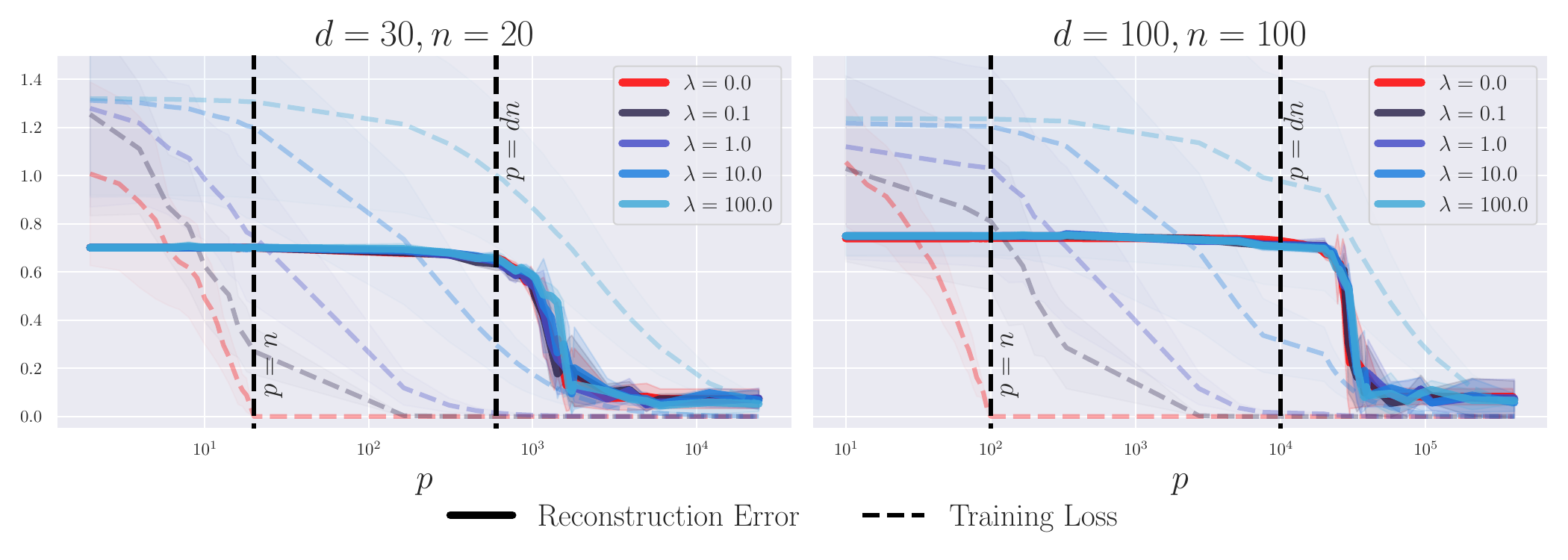}
    \caption{\textbf{Thresholds for label fitting and data reconstruction in the presence of regularization.} We consider RF regression with ReLU activation, fitting a noisy linear model with $\ell_2$ penalty (ridge). The training data is \iid uniformly drawn from the $d$-dimensional sphere. We report mean (solid/dashed lines) and standard deviation (shaded areas) for both the reconstruction error (solid lines) and training loss (dashed lines) as the number of parameters $p$ increases, at different choices of input dimensions $d$ and number of samples $n$. Statistics are computed across 10 distinct random seeds. Lighter colors refer to stronger regularization, which does not significantly deviate from the behavior of training without regularization (red lines, occluded).}
    \label{fig:rebuttal_regularized}
\end{figure}
\begin{figure}[t]
    \centering
    \includegraphics[width=\linewidth]{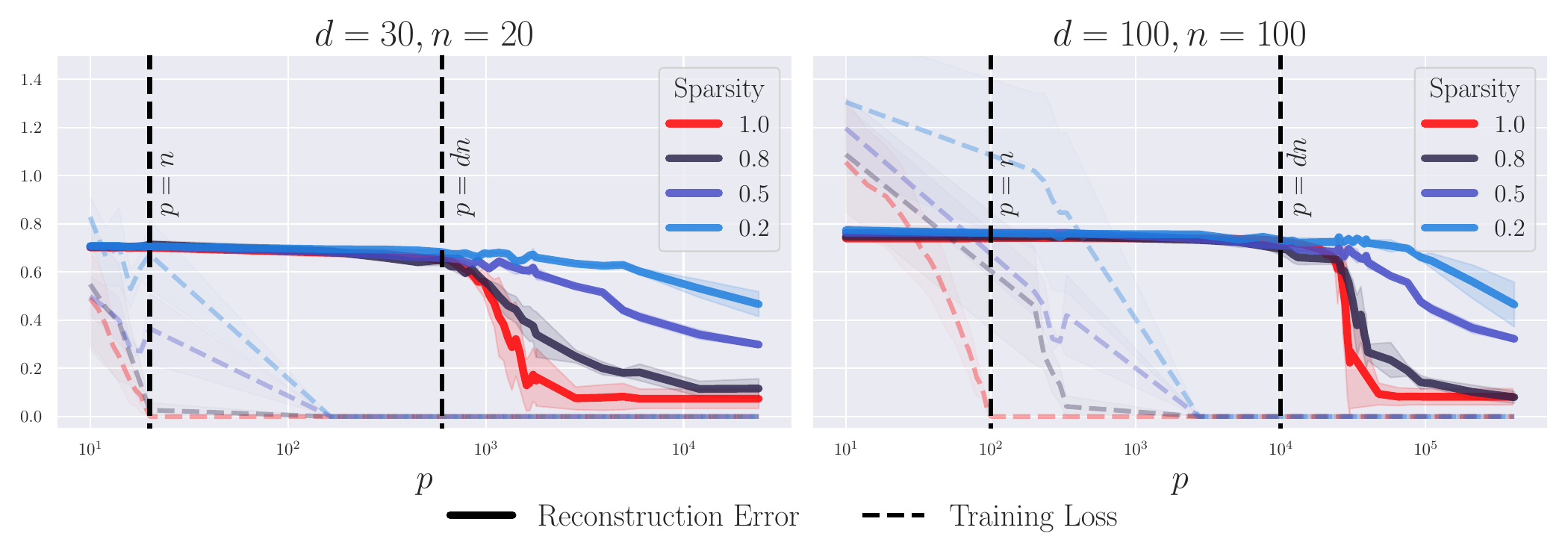}
    \caption{\textbf{Thresholds for label fitting and data reconstruction of pruned models.} We consider RF regression with ReLU activation, fitting a noisy linear model. The training data is \iid uniformly drawn from the $d$-dimensional sphere. After fitting, we prune the trained parameters $\theta^*$ with Optimal Brain Surgeon \citep{hassibi1992second} to a desired sparsity ratio (\ie, $p'/p$ with $p'$ the number of remaining parameters). We report mean (solid/dashed lines) and standard deviation (shaded areas) for both the reconstruction error (solid lines) and training loss (dashed lines) as the number of parameter $p$ increases, at different choices of input dimensions $d$ and number of samples $n$. Statistics are computed across 10 distinct random seeds. Lighter colors refer to lower sparsity ratios, compared against the unpruned baselines (red lines).}
    \label{fig:rebuttal_pruned}
\end{figure}

\begin{figure}[t]
    \centering
    \includegraphics[width=\linewidth]{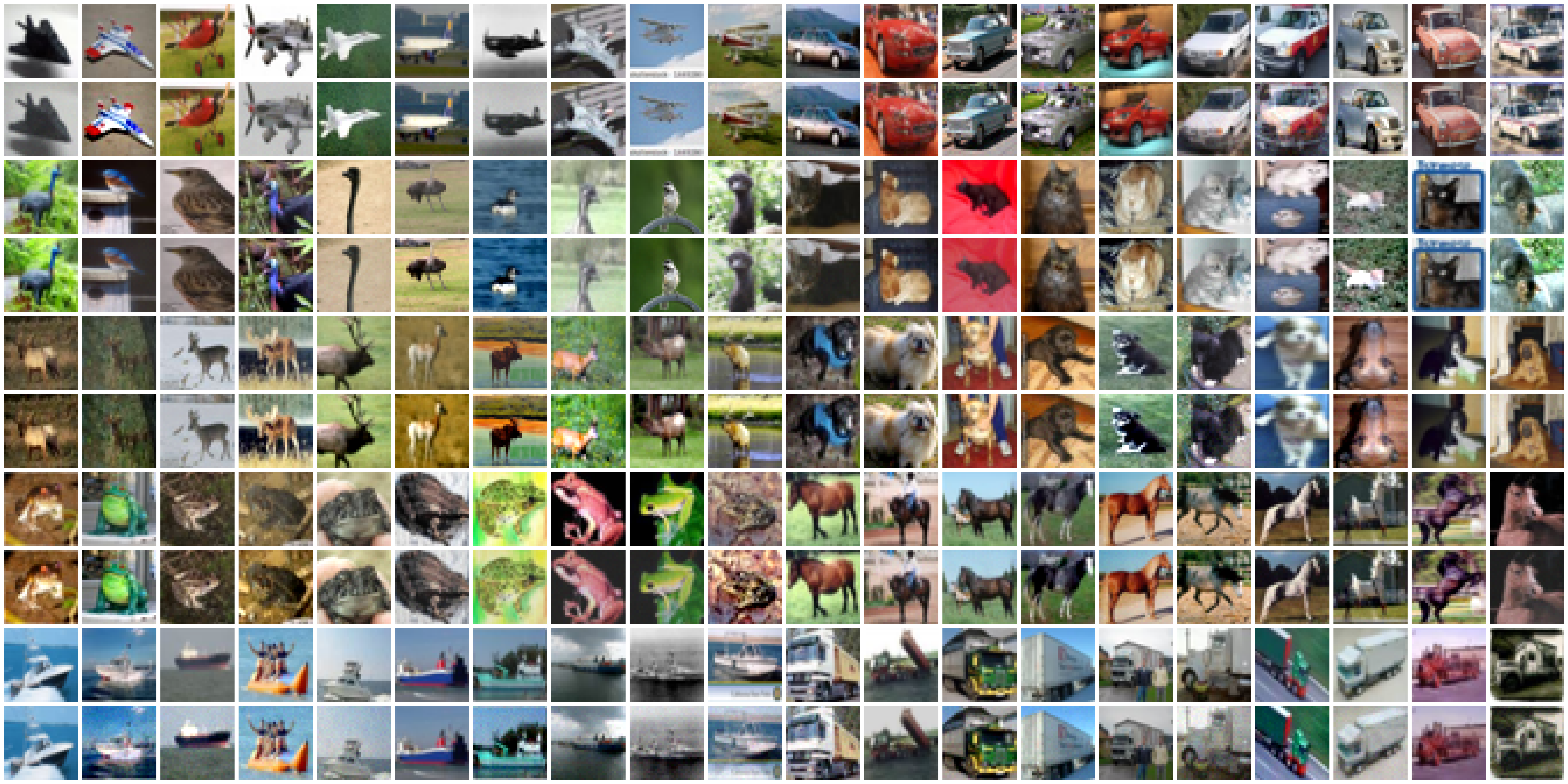}
    \caption{\textbf{Multi-class training data reconstructed from a neural network trained with cross-entropy on CIFAR-10.} We repeat the experiment of Figure \ref{fig:2layer_relu_gd_multiclass_lastlayerntk_cifar10_width=8192x15_n=100} by training a two-layer ReLU network on $n=100$ images from CIFAR-10 dataset (10 examples per class) with gradient descent. In this experiment, we cast the training procedure as multi-class classification on \emph{cross-entropy loss}. The number of parameters in the last layer of the network is $p^{(L)}=4dn$.}
    \label{fig:rebuttal_classification_mosaic}
\end{figure}

\end{document}